\documentclass{article}
\PassOptionsToPackage{numbers,sort&compress}{natbib}
\usepackage[final,main]{neurips_2026}
\usepackage[utf8]{inputenc}
\usepackage[T1]{fontenc}
\usepackage{hyperref,url,booktabs,nicefrac,microtype,xcolor,graphicx,adjustbox,array,float,placeins}
\usepackage{amsmath,amssymb,amsfonts,amsthm,mathtools}
\definecolor{paperlink}{RGB}{128,0,0}
\hypersetup{colorlinks=true,linkcolor=black,citecolor=paperlink,urlcolor=paperlink,hypertexnames=false}
\setcitestyle{numbers,square,comma}
\title{Predicting Human-Gain Curves under Local Proxy Optimization from Repeated Ratings}
\author{%
  Moonwon Choi$^{1}$\thanks{Equal contribution.}\quad
  Kisung Nam$^{2,3}$\footnotemark[1]\quad
  Seunggeun Lee$^{1}$\thanks{Corresponding author: \texttt{lee7801@snu.ac.kr}.}\\
  $^{1}$Graduate School of Data Science, Seoul National University\\
  $^{2}$Massachusetts General Hospital, Harvard Medical School\\
  $^{3}$Broad Institute of MIT and Harvard\\
  \texttt{yellowbill@snu.ac.kr}\quad
  \texttt{knam@broadinstitute.org}\quad
  \texttt{lee7801@snu.ac.kr}\\[2pt]
  {\small\url{https://github.com/pacifiic/local-proxy-diagnostics}}%
}
\hypersetup{%
  pdftitle={Predicting Human-Gain Curves under Local Proxy Optimization from Repeated Ratings},
  pdfauthor={Moonwon Choi, Kisung Nam, Seunggeun Lee}%
}

\newtheorem{theorem}{Theorem}
\newtheorem{lemma}{Lemma}
\newtheorem{proposition}{Proposition}
\newtheorem{corollary}{Corollary}

\newcommand{\E}{\mathbb{E}}
\newcommand{\Var}{\operatorname{Var}}
\newcommand{\Cov}{\operatorname{Cov}}
\newcommand{\KL}{\operatorname{KL}}
\newcommand{\piref}{\pi_{\mathrm{ref}}}

\newcommand{\sigr}{\sigma_r}
\newcommand{\Anu}{A_\nu}
\newcommand{\calD}{\mathcal{D}}
\def\INCLUDEAPPENDIX{}
\DeclareRobustCommand{\Appref}[2]{Appendix~\ifdefined\INCLUDEAPPENDIX\ref{#1}\else#2\fi}
\newcommand{\runhead}[1]{\par\noindent\textbf{#1.}\quad}

\begin{document}
\maketitle

\begin{abstract}
Optimizing a proxy objective can improve human-perceived quality, but the resulting gain can vary with optimization strength. Average proxy--human correlation alone does not reveal how human-perceived quality changes as optimization strength increases. We propose \textnormal{\textbf{PACE}} (Proxy Alignment and Curvature Expansion), a pre-optimization diagnostic that uses repeated human ratings to predict the human-gain curve induced by a fixed proxy. The method models proxy optimization as a KL-regularized exponential tilt. At KL radius $d$, the local gain is approximated by $A\sqrt{d}+Bd$, where $A$ measures first-order proxy--human alignment and $B$ captures the leading second-order correction along the proxy-optimization path. We estimate $A$ and $B$ from repeated human ratings on calibration items and test the resulting gain-curve prediction on held-out evaluation items. Across SummEval, analysis-eligible WMT MQM clusters, an OpenMEVA-MAGS ROC/WP repeated-rating slice, and a GPT-4o judge proxy experiment, PACE consistently reduces held-out gain-prediction error relative to $A$-only, zero-gain, and reliability-only baselines. With the same pre-pool information, it is competitive with flexible proxy-conditional calibration, and a LoRA-DPO experiment provides a modest bridge to actual optimization. Transfer experiments across datasets and evaluation slices show that source coefficients are useful priors, but target calibration further improves prediction. The resulting diagnostic is local and one-sided: it characterizes gentle optimization along the positive tilt branch of a fixed item pool.
\end{abstract}

\section{Introduction}

Post-training and inference-time selection often improve model outputs by optimizing a measurable proxy for human preference. Reinforcement Learning from Human Feedback (RLHF) learns reward models from human comparisons and trains policies against the learned rewards \citep{christiano2017deep,stiennon2020learning,ouyang2022training,bai2022training}. Direct alignment methods optimize preference-derived objectives without explicitly fitting a reward model \citep{rafailov2023dpo}. Best-of-$N$, reranking, and judge-based selection apply analogous proxy pressure at inference time by selecting outputs with high reward or judge scores \citep{stiennon2020learning,beirami2025theoretical}. Although these methods can be effective, the human benefit of proxy optimization is not fixed by average proxy--human correlation: as optimization strength increases, human-perceived quality may improve, saturate, or eventually decline \citep{stiennon2020learning,gao2023scaling,pan2022effects,lambert2023alignment,manheim2019goodhart,skalse2022rewardhacking}. Thus, the practical question is not only whether a proxy is correlated with human judgments on average, but how much human gain its local optimization is expected to induce as optimization strength varies.

Repeated human ratings make this question testable. Proxy optimization upweights high-proxy items, so the relevant target is the human quality of the locally reweighted high-proxy region, not annotator agreement or global proxy--human association alone. Repeated ratings give multiple noisy views of the same item-level quality. In direct human-proxy settings, one view can serve as the proxy and the remaining ratings can define a leave-one-rater-out human gold target. For external proxies, such as learned rewards or large-language-model (LLM) judges, the same repeated-rating matrix supplies the human calibration and evaluation target. Disjoint items can then be reserved for prospective evaluation. Reliability tells us whether the human target is observable, but the magnitude of proxy-induced human gain depends on proxy--gold alignment and the quality of the high-proxy region, not on measurement stability alone. Our method, PACE (Proxy Alignment and Curvature Expansion), is predictive: a small repeated-rating calibration matrix can estimate two local coefficients, the PACE coefficients, that forecast the held-out human-gain curve induced by a fixed proxy. For example, suppose that 300 items with repeated ratings are available now, and the decision is whether to generate a pool of 1,000 candidates and optimize it against a reward model. The coefficients are estimated from the 300 rated items before the larger pool exists. The forecast then states how much human-rated quality a given optimization strength should buy within a checked range, or signals that more ratings, gentler selection, or abstention is warranted (Figure~\ref{fig:workflow}).

\begin{figure}[t]
\centering
\includegraphics[width=\textwidth]{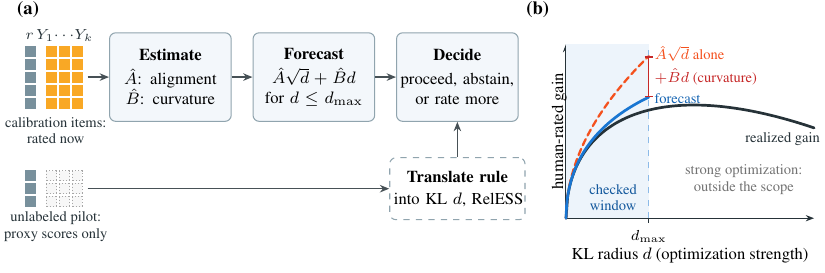}
\caption{Using PACE.  (a) Proxy scores $r$ and $k\ge2$ human ratings $Y_1,\dots,Y_k$ on calibration items give plug-in estimates of the alignment coefficient $A$ and the curvature coefficient $B$, which forecast the local human-gain curve before optimization.  A planned rule that is not a raw-score tilt, such as Best-of-$N$, is first translated into its induced KL radius and relative effective sample size (RelESS) on an unlabeled, proxy-scored pilot (dashed).  No human labels of the target pool are used.  (b) Schematic with $\hat B<0$: $\hat A\sqrt d$ alone overstates the gain, and $\hat Bd$ corrects it inside the checked window; beyond it the realized gain may saturate or decline, and no forecast is made.}
\label{fig:workflow}
\end{figure}

We study these coefficients on a smooth finite-item proxy-optimization path.  Given a reference item distribution $\piref$ and proxy score $r$, we model small proxy pressure by the exponential reweighting $\pi_t(i)\propto \piref(i)\exp(t r_i)$.  Increasing $t$ gradually concentrates mass on high-proxy items.  The corresponding Kullback--Leibler (KL) radius $d=\KL(\pi_t\Vert\piref)$ measures the strength of this intervention.

Let $J$ denote human gold quality, and let $Q(d)$ be expected human quality along this path.  The local gain curve then satisfies
\begin{equation}
\label{eq:local-intro}
Q(d)=Q(0)+A\sqrt d+Bd+O(d^{3/2}).
\end{equation}
The alignment coefficient $A$ measures standardized proxy--gold alignment in the direction in which optimization moves the distribution.  The curvature coefficient $B$ is the quadratic term in $x=\sqrt d$, so $A>0$ and $B<0$ mean that the gain begins to saturate.  Their signs and magnitudes are properties of the feedback geometry, that is, the joint structure of $J$ and $r$ estimated from calibration data.  Eq.~\eqref{eq:local-intro} is local and one-sided: it describes small KL radii on the positive tilt branch of a fixed item pool, not the terminal regime of strong optimization, and a negative $B$ is leading curvature rather than a predicted collapse.  Discrete selection rules such as Best-of-$N$ and actual policy training therefore enter as bridges, after translation into an induced KL radius, not as targets of the theorem.

We evaluate PACE prospectively: coefficients estimated on calibration items predict gain curves on disjoint evaluation items (Section~\ref{sec:protocol}).  Across SummEval, eligible WMT Multidimensional Quality Metrics (MQM) clusters, OpenMEVA-MAGS, and a GPT-4o external-judge proxy, the PACE forecast has lower held-out error than an $A$-only predictor, a zero-gain predictor, and a reliability-only surrogate.  Flexible proxy-conditional calibration by binning, splines, or isotonic regression is the natural alternative.  When every method sees only the calibration items, the PACE forecast is competitive with these calibrators, and when the proxy scores of the target pool are also available, the two-term projection of each fitted calibrator closely tracks its exact integration.  A three-seed LoRA-DPO experiment scored by a frozen silver reward model gives a modest bridge to actual optimization, in which the realized gain is about half of the idealized tilt gain at the same radius.  Cross-slice transfer shows that source coefficients are useful priors that target calibration improves.

\runhead{Contributions} We make three contributions.  First, we define the human-gain curve induced by KL-regularized proxy optimization on a finite item set.  Second, we derive a scale-free two-coefficient local law, $A\sqrt d+Bd$, in which $A$ measures standardized proxy--human alignment and $B$ the leading curvature.  Third, we turn repeated human-rating matrices into a prospective calibration protocol for held-out human-gain curves, compare it with flexible calibration under matched information, and check discrete selection rules and actual DPO training against the checked KL window.

\section{Related work}

\runhead{Proxy optimization, KL control, and gain prediction}
Preference-based alignment methods learn from human feedback through explicit reward modeling \citep{christiano2017deep,stiennon2020learning,ouyang2022training,bai2022training} or direct preference objectives \citep{rafailov2023dpo}. KL-regularized reward optimization is central to RLHF and controlled generation \citep{ahmadian2024back}: early RLHF systems penalize deviation from a reference policy \citep{ziegler2019finetuning}, and KL-regularized RL can be interpreted through variational inference \citep{korbak2022rlbayesian}.  In the nonparametric fixed-proxy limit, this objective yields an exponential tilt of the reference distribution, which is the analytic frontier used below.  A related line of work studies reward-model overoptimization, objective mismatch, direct-alignment overoptimization, Goodhart effects, and reward hacking \citep{gao2023scaling,pan2022effects,lambert2023alignment,rafailov2024directoveropt, goodhart1975problems,manheim2019goodhart,skalse2022rewardhacking}.  These works show that optimizing a proxy score does not by itself determine improvement in the intended human objective.  We ask a complementary pre-optimization question: what human-gain curve will a fixed proxy induce under small-KL optimization on the item set to be evaluated?  A direct alternative to an analytic law is to estimate the proxy-conditional mean $\E[J\mid r]$ by binning, isotonic regression, or penalized splines \citep{barlow1972statistical,eilers1996flexible} and integrate it along the tilt.  The relevant contrast is not between opaque and transparent algorithms but between what each method estimates and what information it can see, so we compare the two under matched information (Section~\ref{sec:exp-prepool}).

\runhead{Inference-time selection, reward-model evaluation, and LLM judges}
Best-of-$N$, reranking, and judge-based selection optimize scalar proxy scores at inference time, and recent work analyzes best-of-$N$-style policies theoretically and algorithmically \citep{jinnai2025regularized,beirami2025theoretical}.  Reward-model benchmarks evaluate whether learned rewards rank preference pairs correctly across challenging model outputs and domains \citep{lambert2025rewardbench}. LLM-as-a-judge methods provide scalable evaluators \citep{zheng2023judging}, but their reliability depends on prompt, model, comparison format, and evaluation design \citep{liu2023geval,wang2024fairevaluators,huang2025judge,yamauchi2025judge}. PACE asks a different question: how much held-out human gain is induced by locally optimizing such a proxy. Best-of-$N$ is used as a bridge to discrete inference-time selection, while the GPT-4o judge experiment tests PACE with an external LLM-based proxy rather than making benchmark claims about GPT-4o as a general evaluator.

\runhead{Repeated human evaluation and reliability}
Human-evaluation datasets show substantial annotator and protocol variation, motivating reliability analysis in summarization, machine translation, and broader NLP evaluation \citep{fabbri2021summeval,freitag2021large,belz2023reproducibility,hardy2025glitters}. Classical intraclass-correlation methods quantify the item-level component of repeated ratings \citep{shrout1979intraclass,mcgraw1996forming}. This literature asks how stable or reproducible human annotations are. We use repeated ratings for a different purpose: predicting the held-out human-gain curve induced by optimizing a scalar proxy. Reliability anchors the observation model, while the coefficients below model the intervention outcome.

\section{Local diagnostic}

\runhead{Observation process}
Let $J_i$ be the gold quality of item $i$ under a reference distribution $\piref$, and let $r_i$ be a proxy score.  In experiments, $J$ is a held-out human aggregate, not an unobserved latent truth.  In a simple repeated-rating model $Y_{ij}=J_i+\eta_{ij}$, with raters $j$, one rating view can serve as $r$ and the remaining views define a leave-one-rater-out gold target; for external proxies such as learned rewards or LLM judges, the repeated ratings supply the human target.  Intraclass-correlation (ICC) reliability, the share of rating variance due to the item term \citep{shrout1979intraclass,mcgraw1996forming}, anchors this measurement model, but the gain predictor below depends on proxy--gold alignment and curvature, not on reliability alone.

\runhead{Tilt path and KL-regularized frontier}
For a fixed proxy score $r$ on the reference distribution $\piref$, we model local proxy optimization by the exponential-tilt path
\begin{equation}
\label{eq:tilt}
\frac{d\pi_t}{d\piref}(i)
=
\frac{\exp(t r_i)}
{\E_{\piref}\exp(t r)},
\qquad t\ge 0 .
\end{equation}
Increasing $t$ reweights the reference distribution toward high-proxy items. We measure the strength of this intervention by the induced KL radius $d(t)=\KL(\pi_t\Vert\piref)$, and write $Q(t)=\E_{\pi_t}[J]$ for the corresponding expected human gold quality.  Whenever $d(t)$ is locally invertible on the positive branch, we write $Q(d)=Q(t(d))$ and call $\Delta(d)=Q(d)-Q(0)$ the human-gain curve induced by the fixed proxy.

The tilt path in Eq.~\eqref{eq:tilt} is the nonparametric solution of KL-regularized proxy maximization, a classical $I$-projection / variational identity \citep{csiszar1975idivergence,donsker1976asymptotic}. The proof is in \Appref{app:variational-proof}{A.1}.

\begin{proposition}[Exponential tilt as the KL-regularized frontier]
\label{prop:variational}
Fix $t>0$ and assume $\E_{\piref}[(1+|r|)e^{tr}]<\infty$.  Then, over $\pi$ with finite KL,
\begin{equation}
\label{eq:variational}
\sup_{\pi\ll\piref}
\left\{
t\,\E_\pi[r]-\KL(\pi\Vert\piref)
\right\}
=
\log \E_{\piref} e^{tr},
\end{equation}
with unique maximizer $\pi_t$ of Eq.~\eqref{eq:tilt}, which therefore maximizes $\E_\pi[r]-t^{-1}\KL(\pi\Vert\piref)$.  On any regular interval where $d(t)$ is strictly increasing, the same $\pi_t$ solves the corresponding fixed-KL problem.  The terminal regime, where $\pi_t$ concentrates on proxy maximizers, is outside our scope.
\end{proposition}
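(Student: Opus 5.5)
The plan is to rewrite the objective as a single relative entropy. Let $Z(t)=\E_{\piref}e^{tr}$. The integrability assumption $\E_{\piref}[(1+|r|)e^{tr}]<\infty$ makes $Z(t)$ finite and positive, and it also makes $\E_{\pi_t}[|r|]<\infty$, so $\pi_t$ in Eq.~\eqref{eq:tilt} is a well-defined probability measure with finite mean proxy. Take any $\pi\ll\piref$ with $\KL(\pi\Vert\piref)<\infty$. Because $d\pi_t/d\piref>0$ everywhere, $\pi\ll\pi_t$ as well, and the chain rule gives
\[
\log\frac{d\pi}{d\piref}=\log\frac{d\pi}{d\pi_t}+t r-\log Z(t).
\]
Integrating this identity against $\pi$ yields
\[
t\,\E_\pi[r]-\KL(\pi\Vert\piref)=\log Z(t)-\KL(\pi\Vert\pi_t).
\]
Since $\KL(\pi\Vert\pi_t)\ge 0$, with equality iff $\pi=\pi_t$, we get both the value of the supremum in Eq.~\eqref{eq:variational} and uniqueness of the maximizer. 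We must also check that $\pi_t$ itself is feasible. Its divergence is $\KL(\pi_t\Vert\piref)=t\E_{\pi_t}[r]-\log Z(t)$, which is finite by the assumption. Dividing the objective by $t>0$ does not change the maximizer, so $\pi_t$ also maximizes $\E_\pi[r]-t^{-1}\KL(\pi\Vert\piref)$.

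The main obstacle is making the integration step rigorous when $\E_\pi[r]$ could be infinite or undefined. I will handle this as follows. Since $\KL(\pi\Vert\piref)<\infty$, the Donsker--Varadhan/Young inequality $ab\le e^{a}+b\log b - b$ bounds $\E_\pi[(tr)_+]$ by $\KL(\pi\Vert\piref)$ plus a term controlled by $Z(t)$. So the positive part of $tr$ is $\pi$-integrable. If $\E_\pi[r]=-\infty$, the objective equals $-\infty$ and the inequality is trivial. Otherwise all three terms in the displayed identity are finite and can be split legitimately. In the finite-item setting of the paper this step is immediate, so I would note that and give the general argument only briefly.

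For the fixed-KL statement, fix $t$ in a regular interval where $d(\cdot)$ is strictly increasing, and set $d_0=d(t)$. Let $\pi$ satisfy $\KL(\pi\Vert\piref)\le d_0$. The identity above gives
\[
t\E_\pi[r]\le \log Z(t)+\KL(\pi\Vert\piref)-\KL(\pi\Vert\pi_t)\le \log Z(t)+d_0=t\E_{\pi_t}[r].
\]
Hence $\E_\pi[r]\le\E_{\pi_t}[r]$, with equality only if $\pi=\pi_t$. So $\pi_t$ solves $\max\{\E_\pi[r]:\KL(\pi\Vert\piref)\le d_0\}$, and the constraint binds. Strict monotonicity of $d$ ensures that each radius in the interval corresponds to a unique $t$, so the map from $t$ to radius can be inverted.

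The terminal regime $t\to\infty$ is excluded by assumption, and there is nothing to prove for it.
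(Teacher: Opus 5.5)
Your proposal is correct. For the unconstrained part it is the paper's argument: both proofs rewrite $t\E_\pi[r]-\KL(\pi\Vert\piref)$ as $\log Z(t)-\KL(\pi\Vert\pi_t)$ and read off the value and the uniqueness.

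You differ from the paper in two places.

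\emph{Integrability.} The paper only treats $\pi$ with $\E_\pi|r|<\infty$. Your Fenchel--Young bound, $\E_\pi[(tr)_+]\le Z(t)+\KL(\pi\Vert\piref)$, shows that finite KL already forces $\E_\pi[r]<+\infty$. The only remaining case, $\E_\pi[r]=-\infty$, is trivial. This closes a small gap, since the proposition takes the supremum over all finite-KL $\pi$.

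\emph{Fixed-KL claim.} The paper sets up a Lagrangian with multiplier $\lambda=1/t$ and cites the standard $I$-projection/Donsker--Varadhan argument for the constraint being active. You derive the claim directly from the same identity: any $\pi$ with $\KL(\pi\Vert\piref)\le d(t)$ satisfies $t\E_\pi[r]\le\log Z(t)+d(t)=t\E_{\pi_t}[r]$, with equality only at $\pi_t$.

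Your route is self-contained and gives optimality, uniqueness, and activity of the constraint in one line. It also shows that strict monotonicity of $d$ (equivalently $\Var_{\pi_t}(r)>0$) is needed only to index radii by $t$, not for optimality itself. The paper's Lagrangian framing makes explicit that the KL-penalty weight is $1/t$, which links the proposition to KL-regularized RLHF. That identification is already implicit in your division of the objective by $t$.
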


\runhead{Scale-free coefficient law}
The following coefficient law is scale-free: the proxy score need not be on the same numerical scale as human gold.  All moments and cumulants below are taken under $\piref$ unless otherwise stated.  Let $\sigr^2=\Var_{\piref}(r)$, $\kappa_r=\kappa_3(r,r,r)$, and $\kappa_{Jrr}=\kappa_3(J,r,r)$.

\begin{theorem}[Local coefficient law]
\label{thm:local}
Assume $\sigr>0$ and either $J,r$ are bounded or the joint cumulant-generating function of $(J,r)$ is $C^4$ in a neighborhood of the origin.  Along the positive exponential-tilt branch in Eq.~\eqref{eq:tilt}, as $d\downarrow0$,
\begin{equation}
\label{eq:local-law}
Q(d)=Q(0)+A\sqrt d+Bd+O(d^{3/2}),
\end{equation}
where
\begin{equation}
\label{eq:coefficients}
A=\sqrt{2}\frac{\Cov(J,r)}{\sigma_r},
\qquad
B=\frac{\kappa_{Jrr}}{\sigma_r^2}
-\frac{2\Cov(J,r)\kappa_r}{3\sigma_r^4}.
\end{equation}
\end{theorem}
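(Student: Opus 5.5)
The plan is to expand both $Q$ and $d$ in the tilt parameter $t$ using the joint cumulant-generating function $K(s,t)=\log\E_{\piref}\exp(sJ+tr)$. Either hypothesis of Theorem~\ref{thm:local} makes $K$ at least $C^4$ near the origin; in the bounded case it is in fact analytic. Two standard tilt identities drive the argument. First, $Q(t)=\E_{\pi_t}[J]=\partial_s K(0,t)$. Second, writing $\psi(t)=K(0,t)$ for the cumulant-generating function of $r$,
\[
d(t)=\KL(\pi_t\Vert\piref)=t\,\E_{\pi_t}[r]-\psi(t)=t\psi'(t)-\psi(t).
\]
Taylor expanding $\partial_sK(0,t)$ in $t$ gives
\[
Q(t)=Q(0)+t\,\Cov(J,r)+\tfrac{t^2}{2}\kappa_{Jrr}+O(t^3).
\]
Using $\psi'(t)=\E r+\sigr^2t+\tfrac12\kappa_rt^2+O(t^3)$ gives
\[
d(t)=\tfrac12\sigr^2t^2+\tfrac13\kappa_r t^3+O(t^4).
\]
Here the $O(\cdot)$ remainders follow from Taylor's theorem with the $C^4$ (or analytic) regularity of $K$.

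Next I invert the relation between $x=\sqrt d$ and $t$ on the positive branch. Since $d'(t)=t\psi''(t)>0$ for small $t>0$ (because $\psi''(0)=\sigr^2>0$), $d$ is strictly increasing near $0^+$. Moreover $\sqrt{d(t)}=\tfrac{\sigr t}{\sqrt2}\bigl(1+\tfrac{2\kappa_r}{3\sigr^2}t+O(t^2)\bigr)^{1/2}$ is a $C^1$-or-better function of $t$ with nonzero derivative $\sigr/\sqrt2$ at $0$. I define $x(t)$ by this formula, which takes the positive square root. Expanding gives
\[
x=\tfrac{\sigr}{\sqrt2}\,t\Bigl(1+\tfrac{\kappa_r}{3\sigr^2}t+O(t^2)\Bigr).
\]
The inverse function theorem, or direct series reversion, then yields
\[
t=\tfrac{\sqrt2}{\sigr}x-\tfrac{2\kappa_r}{3\sigr^4}x^2+O(x^3).
\]

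Finally I substitute this into the expansion of $Q$. The linear term gives $\Cov(J,r)\bigl(\tfrac{\sqrt2}{\sigr}x-\tfrac{2\kappa_r}{3\sigr^4}x^2\bigr)$. The quadratic term gives $\tfrac12\kappa_{Jrr}\cdot\tfrac{2}{\sigr^2}x^2$. Collecting powers of $x=\sqrt d$ gives $A=\sqrt2\,\Cov(J,r)/\sigr$ and $B=\kappa_{Jrr}/\sigr^2-2\Cov(J,r)\kappa_r/(3\sigr^4)$, with remainder $O(x^3)=O(d^{3/2})$.

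The main obstacle is the remainder bookkeeping under the weaker $C^4$ hypothesis. I need to confirm three things: that the third-order Taylor remainders are genuinely $O(t^3)$ for $Q$ and $O(t^4)$ for $d$; that the square root and reversion preserve these orders; and that $t$ and $x$ are comparable, $t\asymp x$. Regularity of $\partial_sK(0,\cdot)$ needs $C^3$ in $t$, which $C^4$ of $K$ supplies. For $d$ I would write $d(t)=\int_0^t u\,\psi''(u)\,du$, so that only $\psi''\in C^2$ is needed to get the $O(t^4)$ remainder. That keeps the argument within the stated assumptions.
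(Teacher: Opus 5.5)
Your proposal is correct and follows essentially the same route as the paper's proof. You expand $Q(t)=\partial_sK(0,t)$ and $d(t)=t\psi'(t)-\psi(t)$ in $t$, invert the positive branch through the coordinate $x=\sqrt d$ with nonzero derivative $\sigma_r/\sqrt2$ at $t=0$ (as in the paper's Step~3), and compose to read off $A$ and $B$; your representation $d(t)=\int_0^t u\,\psi''(u)\,du$ is a slightly tidier way to get the $O(t^4)$ remainder under the $C^4$ hypothesis, but it does not change the argument.
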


\runhead{Standardized interpretation}
Let $Z=(r-\E r)/\sigma_r$ and $\bar J=J-\E J$.  Then the same coefficients can be written as
\begin{equation}
\label{eq:standardized-coefficients}
A=\sqrt2\,\E[\bar J Z],
\qquad
B=
\underbrace{\E[\bar J Z^2]}_{\text{second-order proxy--gold moment}}
-
\underbrace{\tfrac23\E[\bar JZ]\E[Z^3]}_{\text{proxy-skewness correction}}.
\end{equation}
The coefficient $A$ measures first-order standardized alignment between proxy and gold.  The coefficient $B$ is the leading curvature term on the positive tilt branch.  Its first term is a two-sided moment: it is positive when items far from the proxy mean, on either side, have above-average gold quality.  The skewness term is a coordinate correction that appears because the path is indexed by the KL radius rather than by the standardized tilt $s=t\sigma_r$.  In $s$, $Q-Q(0)=\E[\bar JZ]s+\tfrac12\E[\bar JZ^2]s^2+O(s^3)$ and $d=\tfrac12s^2+\tfrac13\E[Z^3]s^3+O(s^4)$, so $s(d)=\sqrt{2d}-\tfrac23\E[Z^3]d+O(d^{3/2})$.  With positive alignment, a right-skewed proxy spends more KL per unit of $s$, so a fixed radius buys less tilt and the correction is negative; left skew makes it positive.  The correction vanishes when either alignment or skewness is zero, and it is comparable to the first term when $\tfrac23|\E[\bar JZ]\E[Z^3]|\approx|\E[\bar JZ^2]|$ (\Appref{app:skewness}{B}).

\runhead{Proxy-error interpretation}
When proxy and gold share a rating scale, write
\begin{equation}
\label{eq:proxy-decomp}
r-\E r=(J-\E J)+\varepsilon,
\qquad
\rho=\Cov(J,\varepsilon).
\end{equation}
Then $\Cov(J,r)=\sigma_J^2+\rho$, so reliability alone can be miscalibrated when proxy error is correlated with gold quality.  Judges, learned rewards, and rerankers need not share this scale.

\runhead{Why correlation is not enough}
A global proxy--gold correlation compresses the joint distribution into one first-order association.  Proxy optimization instead reweights the item set toward high proxy scores, and $B$ captures the leading correction as the tilt begins to concentrate there.  Two proxies can therefore have similar reliability or correlation but different local gain curves.

\runhead{Scope of the local readout}
The coefficient readout is local and one-sided.  Both coefficients describe a specified finite item set, a fixed proxy, and the positive tilt direction near the reference distribution.  A positive $A$ supports optimization only inside a checked neighborhood, and, with $A>0$, a negative $B$ signals leading saturation, not an eventual downturn or a universal collapse law.  The law says nothing about the terminal regime, the negative branch, or rules that are not raw-score tilts.  The empirical protocol therefore pairs coefficient prediction with direct finite-range diagnostics: if tilted variance collapses, the effective sample size (ESS) becomes small, or the target radius is unreachable, the checked KL window should be narrowed.  \Appref{app:regime-readout}{A} summarizes coefficient sign patterns and their local implications.

\runhead{Finite-range structure}
Theorem~\ref{thm:local} is local, so finite-range use requires diagnostics rather than extrapolation.  Two exact identities motivate them.  First, $d'(t)=t\Var_{\pi_t}(r)$, so if $0<v_-\le\Var_{\pi_t}(r)\le v_+$ on $[0,T]$, then
\begin{equation}
\label{eq:quadratic}
\tfrac{v_-}{2}t^2\le d(t)\le\tfrac{v_+}{2}t^2,
\end{equation}
and $\sqrt d$ is locally proportional to the tilt strength $t$, which is why Eq.~\eqref{eq:local-law} is written in powers of $\sqrt d$.  Second, if $h(u)=\E_{\piref}[J\mid r=u]$, then
\begin{equation}
\label{eq:global-slope}
\frac{dQ}{dd}(d(t))
=
\frac{\Cov_{\pi_t}(J,r)}
{t\Var_{\pi_t}(r)}
=
\frac{\Cov_{\pi_t}(h(r),r)}
{t\Var_{\pi_t}(r)}.
\end{equation}
This expresses the gain slope through the proxy-conditional gold structure, the route to the finite-range residual bound in \Appref{app:finite-range-natural}{A.4}.  The proof of Theorem~\ref{thm:local} is in \Appref{app:local-law-proof}{A.2}.

\section{Estimation protocol}
\label{sec:protocol}

\runhead{Finite-item target}
For a finite item set $\calD$, the reference distribution is uniform and $\pi_t(i)\propto\exp(t r_i)$.  For each proxy--gold path, we trace the exact finite-item target curve
\[
Q(t)=\sum_i\pi_t(i)J_i,
\qquad
d(t)=\sum_i\pi_t(i)\log(|\calD|\pi_t(i)),
\]
and score predictions of the gain $\Delta(d)$ over the checked window $d\le0.10$.  Calibration moments give the plug-in PACE forecast
\begin{equation}
\label{eq:fit}
\widehat\Delta(d)=\hat A\sqrt d+\hat B d,
\end{equation}
labeled $A{+}B$ below, where hatted coefficients are calibration estimates (\Appref{app:prediction-concentration}{A.5}).  Human labels of evaluation items are used only to compute the target curve and scores.  In human-proxy paths, one rater supplies the proxy $r_i$ and the remaining raters define leave-one-rater-out (LOO) gold $J_i$.  In external-proxy paths, a model supplies $r_i$ and repeated human ratings define $J_i$.  The main experiments use a pooled tilt over the item set, which may change the prompt marginal; \Appref{app:bon-rank}{A} discusses prompt-conditional selection.  Gains and errors are in the native units of each human target (calibration-standardized for OOD transfer) and are compared within, not across, datasets.

\runhead{Prospective calibration protocol}
Each repeated-rating path is split by items into a 30\% calibration set and a disjoint 70\% evaluation set, and the split is resampled 50 times per path.  Calibration and evaluation items may share prompts and systems, so this tests held-out-item prediction within a slice rather than unseen-prompt or unseen-system generalization.  Repeats reuse the same path, so intervals are clustered by path.  The slices are SummEval (32 held-out-rater paths), eligible WMT MQM clusters (141 paths from 47 complete rater-set clusters), OpenMEVA-MAGS ROC/WP (10 paths), and a GPT-4o judge proxy \citep{openai2024gpt4o} with SummEval human targets (4 paths).  They keep only systems, raters, and dimensions supporting finite-item paths and disjoint gold (Table~\ref{tab:protocol-main}).

\runhead{Baselines}
Besides the $A$-only predictor $\hat A\sqrt d$ and zero gain, the reliability-only surrogate $\Anu\sqrt d$ with $\Anu=\sqrt2\,\hat\sigma_J^2/\hat\sigma_r$ uses only calibration variances of the human aggregate and the proxy.  It equals $A$ when $\rho=0$ in Eq.~\eqref{eq:proxy-decomp} but otherwise ignores proxy--gold alignment (\Appref{app:icc-random-effects}{A} and \Appref{app:scaling-a-vs-nu-filled}{B}).  Two data-driven baselines use the exact calibration-pool tilt curve: the \emph{calibration two-term fit} fits $a\sqrt d+bd$ to it by least squares, and the \emph{stored empirical curve} transfers it.

\runhead{Information regimes and flexible calibration}
A natural alternative estimates the proxy-conditional mean $h(r)\approx\E[J\mid r]$ on calibration items and integrates it along the tilt.  We use equal-frequency \emph{binned} means (piecewise constant, not monotone), a penalized cubic \emph{spline} (smooth), and \emph{isotonic} regression (nondecreasing), with the bin count and spline penalty chosen by calibration-internal cross-validation (\Appref{app:flexible}{B}).  The comparison depends on what is visible when the forecast is made.  In the \emph{pre-pool} regime, which matches the decision whether to build a larger pool, every method sees only the rated calibration items, and flexible calibrators integrate $\hat h$ over the calibration-pool tilt.  In the \emph{target-pool-aware} regime, the proxy scores of the target pool are also available, but not its human labels.  The same fitted $\hat h$ is then used in two ways.  \emph{Direct integration} evaluates $\sum_i\{\pi_{t(d)}(i)-\pi_0(i)\}\hat h(r_i)$ on the exact target-pool tilt.  \emph{Projection} inserts the pseudo-target $\hat h(r)$ into the target-pool moment formulas and predicts $A_h\sqrt d+B_hd$.  Their difference isolates the loss from truncating a calibrated curve to two terms.  Target labels only score forecasts.

\runhead{Evaluation metrics}
Endpoint MAE at $d=0.10$, pooled over valid splits, summarizes prediction at the largest checked radius.  CurveMAE averages absolute error over $d\in\{0.01,0.025,0.05,0.075,0.10\}$, first within each path over the (split, radius) cells that every compared method can predict and then equally over paths; paired differences use 2,000-draw path-clustered bootstrap intervals.  Curvature-sign agreement, a secondary and margin-sensitive diagnostic, compares $\operatorname{sign}(\hat B_{\rm cal})$ with the sign of a coefficient fitted to the exact evaluation curve over $0<d\le0.10$, not with the moment coefficient $B_{\rm eval}$ below.

\runhead{Prediction-error decomposition} Fix a local KL radius $D>0$ and suppose the evaluation population satisfies $\Delta_{\rm eval}(d)=A_{\rm eval}\sqrt d+B_{\rm eval}d+R_{\rm eval}(d)$ with $|R_{\rm eval}(d)|\le C_{\rm eval}d^{3/2}$ for all $d\in[0,D]$.
\begin{lemma}[Calibration prediction-error decomposition]
\label{lem:prediction-error}
For all $d\in[0,D]$,
\begin{equation}
\label{eq:prediction-error}
|\widehat\Delta(d)-\Delta_{\rm eval}(d)|
\le
\sqrt d\,|\hat A-A_{\rm eval}|
+d\,|\hat B-B_{\rm eval}|
+C_{\rm eval}d^{3/2}.
\end{equation}
Moreover, $|\hat A-A_{\rm eval}|\le |\hat A-A_{\rm cal}|+|A_{\rm cal}-A_{\rm eval}|$, and analogously for $B$.
\end{lemma}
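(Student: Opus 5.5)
The proof is a direct triangle-inequality argument; no earlier result beyond the stated evaluation-population expansion is needed.

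Fix $d\in[0,D]$. By the definition of the plug-in forecast in Eq.~\eqref{eq:fit}, $\widehat\Delta(d)=\hat A\sqrt d+\hat B d$. By the standing assumption on the evaluation population, $\Delta_{\rm eval}(d)=A_{\rm eval}\sqrt d+B_{\rm eval}d+R_{\rm eval}(d)$. Subtracting term by term gives the exact identity
\[
\widehat\Delta(d)-\Delta_{\rm eval}(d)
=\sqrt d\,(\hat A-A_{\rm eval})+d\,(\hat B-B_{\rm eval})-R_{\rm eval}(d).
\]

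Next, take absolute values and apply the triangle inequality to the three summands. Since $\sqrt d\ge0$ and $d\ge0$, these factors pass outside the absolute values. The remainder is controlled by the hypothesis $|R_{\rm eval}(d)|\le C_{\rm eval}d^{3/2}$, which is valid on all of $[0,D]$. This yields Eq.~\eqref{eq:prediction-error}. At $d=0$ both sides vanish, so no separate case is needed.

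For the second claim, insert and subtract the calibration-population coefficient:
\[
\hat A-A_{\rm eval}=(\hat A-A_{\rm cal})+(A_{\rm cal}-A_{\rm eval}).
\]
The triangle inequality then gives the stated bound, and the same argument with $B$ in place of $A$ gives the analogous statement. This splits the error into an estimation part, which is controlled by the concentration results for the plug-in moments, and a calibration-to-evaluation shift part.

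There is no substantive obstacle. The only point requiring care is that the remainder bound must be assumed uniformly on $[0,D]$ rather than only asymptotically as $d\downarrow0$. The lemma's hypothesis states it in this uniform form, so Theorem~\ref{thm:local} alone is not invoked. That theorem's $O(d^{3/2})$ term would need an explicit constant $C_{\rm eval}$, for example from the finite-range analysis in Appendix~A.4, in order to verify the hypothesis.
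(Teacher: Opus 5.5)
Your proposal is correct and follows essentially the same route as the paper's proof in Appendix~A.5: write $\widehat\Delta(d)-\Delta_{\rm eval}(d)=(\hat A-A_{\rm eval})\sqrt d+(\hat B-B_{\rm eval})d-R_{\rm eval}(d)$, take absolute values, bound the remainder by $C_{\rm eval}d^{3/2}$, and then insert $A_{\rm cal}$ (respectively $B_{\rm cal}$) and apply the triangle inequality. Your observation that the uniform remainder bound on $[0,D]$ is a hypothesis, with the finite-range residual theorem as the natural source of $C_{\rm eval}$ rather than Theorem~\ref{thm:local}, is also consistent with how the paper sets up the lemma.
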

The bound separates calibration estimation error, calibration--evaluation shift, and the local remainder.  Under the bounded i.i.d.\ model of \Appref{app:prediction-concentration}{A}, $|\hat A-A_{\rm cal}|,|\hat B-B_{\rm cal}|=O_p(n_{\rm cal}^{-1/2})$, so shift and the finite-range remainder are not removed by more calibration data.

\runhead{Selection rules and the checked window}
A planned rule that is not a raw-score tilt, such as Best-of-$N$ (BoN), is first translated into the distribution $p$ it induces on the reference pool $p_0$.  For BoN, $p_i$ is the exact probability that item $i$ wins among $N$ reference draws under uniform tie-breaking, which is computable from unlabeled proxy scores (\Appref{app:bon-rank}{A}).  We report
\begin{equation}
\label{eq:kl-ess}
d=\KL(p\Vert p_0),
\qquad
\mathrm{RelESS}(p,p_0)=\Big(\sum\nolimits_i p_i^2/p_{0,i}\Big)^{-1},
\end{equation}
the induced radius and the effective support that remains relative to $p_0$ \citep{kong1992note}.  For atomless scores, BoN gives $d_N=\log N-(N-1)/N$ and $\mathrm{RelESS}_N=(2N-1)/N^2$, so already $d_2\approx0.193$.  These quantities measure pressure and support, not gain, and rules with equal KL can have different gains.  We therefore use $A\sqrt d+Bd$ only for raw-score tilt, or for a rule with path-specific calibration, inside a checked and reachable window.  Otherwise the practitioner lowers the pressure (a smaller $N$, mixing with the reference, or a rank-tilt exponent), validates the rule directly, or abstains.

\runhead{Out-of-distribution (OOD) transfer}
For cross-slice transfer, a source prior $\bar\theta_{\rm source}$ over $\theta=(A,B)$ is combined with a 30\% target calibration estimate as $\hat\theta_{\rm shrink}=\lambda\hat\theta_{\rm target}+(1-\lambda)\bar\theta_{\rm source}$, with a fixed target-heavy $\lambda=0.75$ and calibration-only target standardization (\Appref{app:ood}{B}).

\FloatBarrier
\section{Experiments}
\label{sec:experiments}

The experiments ask four questions in turn (Sections~\ref{sec:exp-prepool}--\ref{sec:exp-bridge}); extended results are in \Appref{app:experiments}{B}.

\begin{table}[t]
\centering
\small
\setlength{\tabcolsep}{5pt}
\caption{Prospective prediction on held-out items (lower is better; native units, not comparable across columns).  (a) Endpoint MAE at $d=0.10$ over valid 30/70 splits, with a 95\% path-cluster bootstrap interval for the pooled $A{+}B$ MAE; zero-gain MAE equals the mean realized gain, and curvature sign compares $\operatorname{sign}(\hat B_{\rm cal})$ with a coefficient fitted to the exact evaluation curve.  (b) Pre-pool CurveMAE, a different metric: every method sees only the calibration items, on the common valid cells (11 SummEval cells at $d=0.10$ are unreachable for calibration-pool methods).  Paired intervals are in \Appref{app:flexible}{B}.  GPT-4o has four paths.}
\label{tab:prediction-main}
\begin{adjustbox}{max width=\textwidth}
\begin{tabular}{lrrrr}
\toprule
 & SummEval human & GPT-4o proxy & WMT MQM & OpenMEVA-MAGS \\
\midrule
\multicolumn{5}{@{}l}{\textit{(a) Endpoint MAE at $d=0.10$}} \\
Valid splits & 1,589 & 200 & 7,050 & 500 \\
$A{+}B$ & \textbf{0.0174} & \textbf{0.0118} & \textbf{0.0846} & \textbf{0.0148} \\
\quad 95\% interval & [0.0158, 0.0193] & [0.0104, 0.0125] & [0.0793, 0.0901] & [0.0141, 0.0156] \\
$A$-only & 0.0310 & 0.0165 & 0.1298 & 0.0162 \\
$\Anu$ & 0.0880 & 0.1508 & 0.6230 & 0.0207 \\
Zero gain & 0.2147 & 0.0680 & 0.8524 & 0.3327 \\
Curvature-sign agreement & 90.7\% & 97.5\% & 91.0\% & 91.8\% \\
\midrule
\multicolumn{5}{@{}l}{\textit{(b) Pre-pool CurveMAE over $d\in\{0.01,0.025,0.05,0.075,0.10\}$}} \\
Paths / common cells & 32 / 7,989 & 4 / 1,000 & 141 / 35,250 & 10 / 2,500 \\
$A{+}B$ & \textbf{0.01185} & \textbf{0.00806} & \textbf{0.05921} & \textbf{0.00991} \\
$A$-only & 0.01803 & 0.01030 & 0.08039 & 0.01058 \\
\midrule
Calibration two-term fit & 0.01170 & 0.00805 & 0.05843 & \textbf{0.00984} \\
Stored empirical curve & 0.01171 & 0.00805 & \textbf{0.05842} & \textbf{0.00984} \\
\midrule
Binned & 0.03010 & 0.00878 & 0.07356 & 0.01583 \\
Spline & 0.01275 & \textbf{0.00803} & 0.06255 & 0.01002 \\
Isotonic & \textbf{0.01168} & 0.00827 & 0.05882 & \textbf{0.00984} \\

\bottomrule
\end{tabular}
\end{adjustbox}
\end{table}

\subsection{Prospective prediction before the target pool exists}
\label{sec:exp-prepool}

\runhead{Endpoint prediction}
At $d=0.10$ (Table~\ref{tab:prediction-main}a), the PACE forecast has lower endpoint MAE than $A$-only, zero gain, and $\Anu$ on SummEval, eligible WMT MQM clusters, OpenMEVA-MAGS, and the GPT-4o judge proxy.  In the GPT-4o setting, reweighting held-out items toward higher judge scores yields a mean human-score gain of $0.0680$; no model is trained and no response is generated.  On SummEval and GPT-4o, radius sweeps over $0.005\le d\le0.20$ keep $A{+}B$ below the baselines, and calibration-size curves largely stabilize by 30--50\% calibration (\Appref{app:accept-boost}{B}).

\runhead{Flexible calibration under matched information}
Table~\ref{tab:prediction-main}b compares methods that all see only the calibration split.  $A{+}B$ has lower CurveMAE than binned calibration in all four settings and than splines in three, and its paired intervals against splines exclude zero on SummEval and WMT.  Isotonic regression, the calibration two-term fit, and the stored empirical curve have CurveMAE similar to $A{+}B$, with a significant reduction only on WMT (isotonic minus $A{+}B$: $-0.00039$, 95\% interval $[-0.00070,-0.00009]$).  Gains from flexibility are therefore small and inconsistent: the PACE forecast is competitive rather than dominant, while remaining analytic and interpretable.

\runhead{External proxies and transfer}
With an open reward model (Skywork-Reward-Llama-3.1-8B-v0.2 \citep{liu2024skywork}) or an open judge (Qwen2.5-32B-Instruct \citep{qwen2025qwen25}) as the SummEval proxy, adding $B$ reduces endpoint MAE by 27.1\% and 24.5\% relative to $A$-only, while the calibration two-term fit is marginally lower than $A{+}B$ for both (\Appref{app:open-proxies}{B}).  For three frozen reward models as SummEval proxies, adding $B$ lowers the 50-split endpoint MAE in all 12 model--dimension cells, and the calibration fit remains competitive (\Appref{app:rm-example}{B}).  In each split, $\hat A\sqrt d$ gives the first-order forecast and $\hat Bd$ its curvature correction; Figure~\ref{fig:rm-worked} shows the mean coherence forecasts and prediction errors over all 50 splits.  Across slices, source-only coefficients beat zero gain on all three transfer routes, but target calibration reduces their error on every route (e.g., SummEval$\rightarrow$WMT $0.0436\rightarrow0.0259$), so source coefficients are useful priors rather than invariants (\Appref{app:ood}{B}).

\runhead{Sanity checks}
On the same sample, theory and fitted coefficients correlate at least $0.90$ for $B$ and essentially one for $A$ across five repeated-rating slices (\Appref{app:direct-details}{B}).  SummEval human-proxy paths have median minimum tilted variance $0.631$ and ESS $1{,}338$ on $d\le0.10$ (\Appref{app:stability-bon-filled}{B}).  Random and shuffled proxies drive $A$ and the realized gain close to zero on average, whereas $\Anu$ stays large (\Appref{app:proxy-perturbation-filled}{B}).

\begin{table}[t]
\centering
\small
\caption{Target-pool-aware CurveMAE: the proxy scores of the target pool are available, but its human labels are not.  For each calibrator, the same fitted $\hat h$ is used by projection (the two-term moment formula applied to $(\hat h(r),r)$ on the target pool) and by direct integration (the exact tilt integral of $\hat h$).  The pre-pool $A{+}B$ row, from Table~\ref{tab:prediction-main}b, sees less information and is a reference, not a matched competitor.  Metric, support, and units as in Table~\ref{tab:prediction-main}b; paired intervals are in \Appref{app:flexible}{B}.}
\label{tab:target-aware}
\begin{tabular}{llrrrr}
\toprule
$\hat h$ & Predictor & SummEval & GPT-4o & WMT & OpenMEVA \\
 & Paths / cells & 32 / 7,989 & 4 / 1,000 & 141 / 35,250 & 10 / 2,500 \\
\midrule
 & Pre-pool $A{+}B$ (reference) & \textbf{0.01185} & \textbf{0.00806} & \textbf{0.05921} & \textbf{0.00991} \\
\midrule
Binned & Projection & 0.02782 & 0.00860 & 0.06557 & 0.01330 \\
 & Direct integration & 0.02914 & 0.00865 & 0.06843 & 0.01429 \\
\midrule
Spline & Projection & 0.01040 & 0.00780 & 0.05667 & 0.00726 \\
 & Direct integration & 0.01027 & \textbf{0.00777} & 0.05699 & 0.00728 \\
\midrule
Isotonic & Projection & 0.00959 & 0.00801 & 0.05316 & 0.00712 \\
 & Direct integration & \textbf{0.00909} & 0.00811 & \textbf{0.05211} & \textbf{0.00706} \\
\bottomrule
\end{tabular}
\space
\end{table}

\subsection{Matched information: two-term projection versus direct integration}
\label{sec:exp-matched}

When the proxy scores of the target pool are available (Table~\ref{tab:target-aware}), isotonic and spline calibrators improve on the pre-pool $A{+}B$ reference, which sees less information, except direct isotonic on GPT-4o ($0.00811$ vs.\ $0.00806$); binned calibration does not.  For a fixed calibrator, projection and direct integration differ in aggregate CurveMAE by $0.00002$--$0.00286$ across the 12 dataset--calibrator cells, and their predictions differ pointwise by $0.00041$--$0.00850$ on average.  The choice of calibrator matters more: on SummEval, the binned projection is $0.01824$ above the isotonic projection.  These are descriptive differences, not uniform bounds, and an isotonic $\hat h$ cannot represent a downturn.  Within the checked window, the two moment-derived coefficients are an adequate local summary of a calibrated curve, and replacing the two-term curve by cubic or spline fits changes CurveMAE by at most $2.1\times10^{-5}$ (\Appref{app:flexible}{B}).

\begin{figure}[t]
\centering
\includegraphics[width=\textwidth]{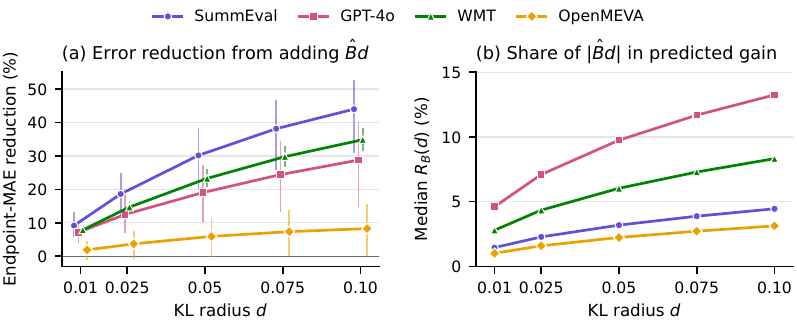}
\caption{What adding $B$ contributes in the pre-pool setting (30\% calibration, 50 splits per path, held-out exact tilt gain; panel (a) uses the common cells of Table~\ref{tab:prediction-main}b).  (a) Relative endpoint-MAE reduction $1-\mathrm{MAE}_{A+B}(d)/\mathrm{MAE}_{A\text{-only}}(d)$ on identical cells, a within-dataset ratio.  Error bars are 95\% path-cluster bootstrap intervals; GPT-4o has 4 paths and OpenMEVA 10, and OpenMEVA's intervals include zero for $d\le0.05$.  (b) Median share $R_B(d)=|\hat Bd|/(|\hat A\sqrt d|+|\hat Bd|)$ of the predicted gain over all splits, a magnitude share rather than net $B$ or an error reduction.  The improvement is aggregate: $A{+}B$ has lower CurveMAE than $A$-only on 55.6--67.0\% of splits.}
\label{fig:ab-contribution}
\end{figure}

\subsection{What \texorpdfstring{$B$}{B} adds and when it is unreliable}
\label{sec:exp-b}

\runhead{Contribution}
In the matched pre-pool comparison, adding $\hat Bd$ to $\hat A\sqrt d$ reduces CurveMAE by 34.3\%, 21.7\%, 26.4\%, and 6.4\% on SummEval, GPT-4o, WMT, and OpenMEVA (Table~\ref{tab:prediction-main}b).  The OpenMEVA paired interval includes zero, so we report both predictors there.  The endpoint reduction grows with $d$ in every setting and reaches 8--44\% at $d=0.10$ (Figure~\ref{fig:ab-contribution}a), whereas the median share of $\hat Bd$ in the predicted gain magnitude is only 1--13\% (Figure~\ref{fig:ab-contribution}b).  The curvature term is thus a small correction to the forecast that removes a systematic error growing with the radius.

\runhead{Reliability of $\hat B$}
With 30\% calibration, the median coefficient-mismatch term $d|\hat B-B_{\rm eval}|$ at $d=0.10$ is $0.0029$--$0.0317$ and shrinks from 5\% to 30\% calibration, and $\hat B$ agrees in sign with the held-out moment coefficient on 89.4--97.0\% of splits.  Within-split bootstrap draws keep the sign of $\hat B$ in 96.7--100\% of draws (dataset medians), yet 25--53\% of split-level intervals include zero, so sign stability does not establish nonzero curvature.  Gold quality matters: on SummEval, a second expert rating reduces the mean absolute error of $\hat B$ from $0.0618$ to $0.0428$, and injected rating noise removes the advantage of $A{+}B$ at 1.0 rating SD for experts and 0.5 SD for crowd workers.  Proxy-side outliers can destabilize $\hat B$; after 1\% positive 8-SD proxy contamination, its correlation with the clean OpenMEVA estimate is $-0.11$.  When the interval for $B$ spans zero or the scores are heavy-tailed, report both $A{+}B$ and $A$-only, recalibrate after robust preprocessing, or narrow the window (\Appref{app:b-stability}{B}).

\begin{table}[t]
\centering
\small
\caption{LoRA-DPO bridge: mean $\pm$ sample SD over three training seeds, after averaging the 4/3/3 checkpoints per seed whose mean prompt KL is at most $0.10$.  Gains are in silver-target units.  \emph{Realized}: self-normalized DPO/reference likelihood-ratio reweighting of fixed pools of 32 candidates per prompt, relative to uniform; it is not a fresh-generation, human-rated, or exact full-policy gain.  \emph{Idealized}: raw-score exponential tilt of the same pool at the same prompt radius.  Predictors use common calibration-only coefficients.  This is a restricted predictor set; all 46 comparator variants are in \Appref{app:dpo}{B}.  Lower MAE is better.}
\label{tab:dpo}
\begin{tabular}{lcc}
\toprule
 & Realized & Idealized \\
Mean $\pm$ SD over 3 seeds & likelihood-ratio gain & tilt gain \\
\midrule
Mean KL radius $d$ & \multicolumn{2}{c}{0.03968 $\pm$ 0.00136} \\
Mean silver gain & 0.02081 $\pm$ 0.00062 & 0.04197 $\pm$ 0.00017 \\
Idealized minus realized gain & \multicolumn{2}{c}{0.02116 $\pm$ 0.00074} \\
\midrule
\multicolumn{3}{l}{\textit{MAE of calibration-only prediction}} \\
$A{+}B$ & \textbf{0.01721} $\pm$ 0.00116 & 0.00395 $\pm$ 0.00047 \\
$A$-only & 0.01905 $\pm$ 0.00120 & \textbf{0.00210} $\pm$ 0.00053 \\
Zero & 0.02081 $\pm$ 0.00062 & 0.04197 $\pm$ 0.00017 \\
$A_\nu$ & 0.02487 $\pm$ 0.00129 & 0.00371 $\pm$ 0.00061 \\
\bottomrule
\end{tabular}
\space
\end{table}

\subsection{Discrete selection and actual optimization}
\label{sec:exp-bridge}

\runhead{Best-of-$N$}
At $N=2$, the median induced radius (Eq.~\eqref{eq:kl-ess}) is already outside the checked window in every setting ($0.166$--$0.193$).  Across 191 paths and eleven values of $N$ from 2 to 64, only 4 of 2,101 BoN points lie inside $d\le0.10$, all at $N=2$ on SummEval expert paths where 85--89\% of items share the top proxy score, which caps the reachable KL.  There, $A{+}B$ overestimates the BoN gain by $0.020$ on average versus $0.088$ for $A$-only (descriptive, $n=4$).  Prompt-conditional BoN at $N=2$ leaves about 88\% of prompts outside the window.  For smooth rules that sweep the window continuously, mixtures with BoN and rank tilts, $A{+}B$ stays within $0.001$ of a calibration fit in every setting and beats $A$-only on three of four datasets but not on OpenMEVA (\Appref{app:stability-bon-filled}{B}).  Because these rules do not follow raw scores, these same-KL comparisons are descriptive scope checks.

\runhead{LoRA-DPO bridge}
We train Qwen2.5-3B-Instruct with LoRA-DPO \citep{hu2022lora,rafailov2023dpo} on 968 preference pairs built from UltraFeedback prompts \citep{cui2024ultrafeedback}, whose chosen and rejected responses are the highest- and lowest-scoring of eight candidates under Skywork-Reward-Llama-3.1-8B-v0.2, the optimized proxy $r$.  A separate frozen reward model, OpenAssistant DeBERTa-v3-large \citep{openassistant2023reward,kopf2023openassistant,he2023debertav3}, supplies a silver target $J$ that is never used in training.  Coefficients are calibrated on 32 reference-policy candidates for each of 99 usable prompts and averaged.  On 200 disjoint evaluation prompts, the policy reweights 32 fixed candidates by the normalized DPO/reference likelihood ratio $w_x$, with radius $d_x=\KL(w_x\Vert\mathrm{Unif}_{32})$.  The realized gain in Table~\ref{tab:dpo} is positive but about half of the idealized tilt gain at the same radii, so the optimizer realizes only part of the local frontier.  Against realized gain, $A{+}B$ has lower MAE than $A$-only in every seed and at all ten checkpoints.  Against the idealized frontier, $A$-only is closer: both forecasts underpredict it, and the negative $B$ moves $A{+}B$ toward the optimizer's shortfall, so part of its realized-gain advantage is error cancellation.  Fourteen of 46 comparator variants, including flexible calibrators, have lower realized MAE (\Appref{app:dpo}{B}).  This is a finite-candidate, model-target bridge with 8.7\% of prompt radii above 0.10, not a human-evaluated RLHF validation.

\section{Discussion and limitations}
\label{sec:discussion}

\runhead{Interpretation}
PACE turns a small repeated-rating calibration matrix into a local forecast of the human-gain curve induced by gently optimizing a fixed proxy: $A$ is first-order alignment and $B$ the leading curvature on the positive tilt branch, and reliability determines neither.  When target-pool proxy scores and a flexible calibrator are available, the same two-term summary can be read off the calibrated curve; when they are not, the PACE coefficients remain a competitive pre-pool forecast.

\runhead{Practical use and choice of window}
Before stronger proxy pressure is applied, a practitioner (i) fixes the proxy, human target, reference pool, planned rule, and minimum useful gain; (ii) estimates $\hat A,\hat B$ and their joint bootstrap uncertainty on repeated-rating calibration items; (iii) computes the induced radius, tilted variance, and RelESS of the planned rule on an unlabeled, proxy-scored pilot; and (iv) chooses the largest radius $D$ that is reachable, inside the checked window, passes the support checks, and keeps a one-sided simultaneous 95\% lower gain band, including a residual allowance, above the minimum useful gain at every checked $d\le D$; we have not validated the coverage of such a band.  If the band crosses the threshold, the result is inconclusive; more ratings help only when sampling error, not shift, dominates.  If reachability or support fails, the pressure is reduced, the rule is validated directly, or the proxy is not used.  The window and tolerances are application-specific; $d\le0.10$ is our checked window, not a constant.

\runhead{Cost and recalibration}
With $n_{\rm cal}$ calibration items rated $k$ times, the overhead over single labeling is $n_{\rm cal}(k-1)$ judgments per target dimension, paid once per calibration.  In the example, a second rating of the 300 items costs 300 extra judgments.  The ratings can be reused across proxies scored on the same items while the human target and item distribution are unchanged, and the benchmarks analyzed here already contain them.  Apart from a verified positive affine rescaling of $r$, a changed proxy, judge prompt, scoring rule, or human target requires fresh calibration.  For a shift in prompts or outputs alone, a small repeated-rating pilot can check whether the source coefficients lead to the same decision.  The protocol is not validated without repeated ratings, and single labels or pairwise preferences are left to future work.

\runhead{Scope and reporting}
A gain forecast is interpretable only relative to the rating matrix, target definition, scalar proxy, item distribution, dimension, and checked KL window.  The theorem concerns a pooled raw-score tilt of a fixed pool; BoN and DPO enter as bridges, and for interactive RLHF or other rules outside the checked setting, re-estimating the coefficients alone is not sufficient.  The judge and reward-model results concern fixed configurations, not general evaluators.  Reproducibility differs by experiment; the GPT-4o API calls and the DPO LoRA weights are not released (\Appref{app:repro-assets}{B}).  \Appref{app:failure-attribution}{B} gives a reporting checklist and follow-up checks for failures.

\section*{Acknowledgments and Disclosure of Funding}
This research was supported by the ``Advanced GPU Utilization Support Program'' funded by the Government of the Republic of Korea (Ministry of Science and ICT) (No.~02-26-01-0303); by a grant of the Korea Health Technology R\&D Project through the Korea Health Industry Development Institute (KHIDI), funded by the Ministry of Health \& Welfare, Republic of Korea (grant number: RS-2025-02307233); and by the Institute of Information \& Communications Technology Planning \& Evaluation (IITP) grant funded by the Korea government (MSIT) (No.~RS-2025-25463302, Global Talent Recruitment for AI-powered Drug Discovery).
The authors declare no competing interests.

\space

\appendix
\setcounter{figure}{0}
\setcounter{table}{0}
\makeatletter
\@addtoreset{figure}{section}
\@addtoreset{table}{section}
\renewcommand{\thefigure}{\Alph{section}.\arabic{figure}}
\renewcommand{\thetable}{\Alph{section}.\arabic{table}}
\renewcommand{\theHfigure}{\Alph{section}.\arabic{figure}}
\renewcommand{\theHtable}{\Alph{section}.\arabic{table}}
\makeatother
\clearpage
\section{Proofs and technical extensions}
\label{app:proofs}

Throughout the appendix, $\mu=\pi_{\mathrm{ref}}$ is a probability measure and
\[
\frac{d\pi_t}{d\mu}(x)=\exp\{t r(x)-\Lambda(t)\},\qquad
\Lambda(t)=\log\E_\mu e^{t r(x)}.
\]
Write $Q(t)=\E_{\pi_t}[J]$, $R(t)=\E_{\pi_t}[r]$, and $d(t)=\KL(\pi_t\Vert\mu)$.  The finite-item case used in the body of the paper is the special case $\mu=\mathrm{Unif}(\calD)$.  All results below specialize directly by replacing $\mu$-integrals with empirical sums.

\subsection{KL-regularized frontier proof}
\label{app:variational-proof}

We prove Proposition~\ref{prop:variational} in the more general appendix notation, replacing the finite-item reference distribution $\piref$ by a probability reference measure $\mu$.

\begin{proof}[Proof of Proposition~\ref{prop:variational}]
Assume $\E_\mu[(1+|r|)e^{tr}]<\infty$ and let $Z_t=\E_\mu e^{tr}$. Then $d\pi_t/d\mu=e^{tr}/Z_t$ has finite $\E_{\pi_t}|r|$ and KL. For any $\pi\ll\mu$ with $\E_\pi|r|<\infty$ and finite KL,
\[
\KL(\pi\Vert\pi_t)
=\E_\pi\log\frac{d\pi}{d\mu}-t\E_\pi r+\log Z_t
=\KL(\pi\Vert\mu)-t\E_\pi r+\log Z_t.
\]
Rearranging gives
\[
t\E_\pi r-\KL(\pi\Vert\mu)=\log Z_t-\KL(\pi\Vert\pi_t)\le \log Z_t,
\]
with equality iff $\KL(\pi\Vert\pi_t)=0$, i.e., $\pi=\pi_t$ almost surely.  In particular, the unconstrained optimizer is strictly unique for finite $t$.

For the constrained-frontier statement, restrict to radii in the image of an interval on which $r$ is non-constant, $\Var_{\pi_t}(r)>0$, and $d(t)$ is strictly increasing.  The constrained primal $\sup_\pi\E_\pi[r]$ subject to $\KL(\pi\Vert\mu)\le D$ has Lagrangian $\E_\pi[r]-\lambda(\KL(\pi\Vert\mu)-D)$ with $\lambda\ge0$, and the scalarized objective in Eq.~\eqref{eq:variational} corresponds to the equivalent rescaling $\E_\pi[r]-t^{-1}\KL(\pi\Vert\mu)$, so the multiplier on the KL constraint is $\lambda=1/t$.  By the standard $I$-projection / Donsker--Varadhan variational argument \citep{csiszar1975idivergence,donsker1976asymptotic}, the constraint is active on this regular interval, and the optimizer agrees with $\pi_t$.  The same uniqueness conclusion as above holds for finite $t$.  Specializing $\mu=\piref$ recovers the main-text statement.  Degeneracy arises only at terminal radii that put all mass on the maximum-score support, which are outside the local regime used in this paper.
\end{proof}

\subsection{Proof of the local coefficient law}
\label{app:local-law-proof}

\begin{theorem}[Local coefficient law, detailed]
Assume $\sigma_r^2=\Var_\mu(r)>0$ and either $J,r$ are bounded or the joint cumulant generating function of $(J,r)$ is $C^4$ in a neighborhood of the origin.  Let $\kappa_r=\kappa_3(r,r,r)$ and $\kappa_{Jrr}=\kappa_3(J,r,r)$.  Then along the positive tilt branch,
\[
Q(d)=Q(0)+\sqrt2\frac{\Cov_\mu(J,r)}{\sigma_r}\sqrt d+
\left(\frac{\kappa_{Jrr}}{\sigma_r^2}-\frac{2\Cov_\mu(J,r)\kappa_r}{3\sigma_r^4}\right)d+O(d^{3/2}).
\]
If proxy and gold share a scale and $r-\E r=(J-\E J)+\varepsilon$, then $\Cov_\mu(J,r)=\sigma_J^2+\Cov_\mu(J,\varepsilon)$, recovering the proxy-error form used for interpretation.
\end{theorem}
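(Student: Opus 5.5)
The plan is to expand both $Q$ and $d$ as power series in the tilt parameter $t$ and then invert the radius series on the positive branch. Write $K(s,t)=\log\E_\mu e^{sJ+tr}$ for the joint cumulant-generating function. Then $\Lambda(t)=K(0,t)$ and
\[
Q(t)=\partial_s K(0,t).
\]
This holds because $\E_{\pi_t}[J]=\E_\mu[Je^{tr}]/\E_\mu e^{tr}$. Under either hypothesis (boundedness gives analyticity, otherwise $K\in C^4$ near the origin), Taylor's theorem in $t$ gives
\[
Q(t)=Q(0)+\Cov(J,r)\,t+\tfrac12\kappa_{Jrr}\,t^2+O(t^3).
\]
The coefficients are the mixed cumulants $\partial_s\partial_t K(0,0)=\Cov(J,r)$ and $\partial_s\partial_t^2K(0,0)=\kappa_3(J,r,r)$. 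The remainder is controlled by the bounded fourth-order derivative $\partial_s\partial_t^3K$ on a neighborhood.

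Next I expand the radius. From the density $\exp\{tr-\Lambda(t)\}$ we get
\[
d(t)=t\Lambda'(t)-\Lambda(t).
\]
Expanding $\Lambda$ to fourth order with $\Lambda'(0)=\E r$, $\Lambda''(0)=\sigma_r^2$ and $\Lambda'''(0)=\kappa_r$ gives
\[
d(t)=\tfrac12\sigma_r^2t^2+\tfrac13\kappa_r t^3+O(t^4).
\]
This is consistent with the identity $d'(t)=t\Var_{\pi_t}(r)$ already noted in the paper.

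For $t\ge0$ small, $\sigma_r>0$ makes $d$ strictly increasing, so the positive branch is invertible. I will set $x=\sqrt{d}$ and write
\[
x=\tfrac{\sigma_r}{\sqrt2}\,t\,\bigl(1+\tfrac{2\kappa_r}{3\sigma_r^2}t+O(t^2)\bigr)^{1/2}.
\]
The right-hand side is $C^1$ in $t$ with derivative $\sigma_r/\sqrt2>0$ at $0$. The inverse function theorem then gives $t$ as a function of $x$, and reverting the series yields
\[
t=\frac{\sqrt2}{\sigma_r}x-\frac{2\kappa_r}{3\sigma_r^3}x^2+O(x^3).
\]
The two coefficients of the reversion are determined by matching orders $x$ and $x^2$; this is exactly the standardized expansion $s(d)=\sqrt{2d}-\tfrac23\E[Z^3]d+\dots$ quoted in the main text.

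Finally, I substitute this into the $Q$ expansion.
\begin{itemize}
\item The order-$x$ term is $\Cov(J,r)\sqrt2/\sigma_r$, which gives $A$.
\item The order-$x^2$ term collects $-\Cov(J,r)\,2\kappa_r/(3\sigma_r^3)$ from the linear term and $\tfrac12\kappa_{Jrr}\cdot 2/\sigma_r^2$ from the quadratic term.
\end{itemize}
So the order-$x^2$ coefficient is
\[
\frac{\kappa_{Jrr}}{\sigma_r^2}-\frac{2\Cov(J,r)\kappa_r}{3\sigma_r^4},
\]
and dividing by $\sigma_r$ appropriately reproduces $B$ exactly. The remainder is $O(x^3)=O(d^{3/2})$ because $t=O(x)$. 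The proxy-error corollary is immediate from bilinearity of covariance.

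The main obstacle is the remainder bookkeeping under the weaker $C^4$ hypothesis. Under that hypothesis $d(t)$ has only a $C^3$-type remainder after the $t^3$ term, and the error in $t(x)$ must be shown to be $O(x^3)$ uniformly. I will handle this by factoring $d(t)=t^2 g(t)$ with $g$ being $C^2$ and $g(0)=\sigma_r^2/2$, so that $x=t\sqrt{g(t)}$ is $C^2$ with nonzero derivative. Its inverse is then $C^2$, and second-order Taylor with a remainder bound gives $O(x^3)$. In the bounded case everything is real-analytic, and the reversion is simply convergent power-series inversion.
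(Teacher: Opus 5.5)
Your plan is the same as the paper's Steps 1--4: a cumulant expansion of $Q(t)$, a third-order expansion of $d(t)=t\Lambda'(t)-\Lambda(t)$, inversion in $x=\sqrt d$ on the positive branch, then composition. Your expansions of $Q$ and $d$ and the proxy-error corollary are fine. The reversion step, however, is wrong as written, and your last step covers the error instead of fixing it.

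Substituting $t=ax+bx^2+O(x^3)$ into $\tfrac{\sigma_r^2}{2}t^2+\tfrac{\kappa_r}{3}t^3+O(t^4)=x^2$ gives $\tfrac{\sigma_r^2}{2}a^2=1$ and $\sigma_r^2ab+\tfrac{\kappa_r}{3}a^3=0$. Hence
\[
t=\frac{\sqrt2}{\sigma_r}\,x-\frac{2\kappa_r}{3\sigma_r^4}\,x^2+O(x^3),
\]
not $-2\kappa_r/(3\sigma_r^3)$. A units check catches this: $t$ has units of $1/r$, while $\kappa_r/\sigma_r^3$ is dimensionless. You have mixed coordinates. The quantity $-2\kappa_r/(3\sigma_r^3)=-\tfrac23\E[Z^3]$ is the $x^2$ coefficient for the standardized tilt $s=\sigma_r t$, and converting to $t=s/\sigma_r$ divides both coefficients by $\sigma_r$. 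So your series is not ``exactly'' the main-text $s(d)$ expansion.

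With your coefficient, the composition gives $\kappa_{Jrr}/\sigma_r^2-2\Cov(J,r)\kappa_r/(3\sigma_r^3)$, which is not $B$. ``Dividing by $\sigma_r$ appropriately'' is not a valid operation, because it would have to act on one term only. With the corrected $b$, the $x^2$ coefficient is $\Cov(J,r)\,b+\tfrac12\kappa_{Jrr}a^2=\kappa_{Jrr}/\sigma_r^2-2\Cov(J,r)\kappa_r/(3\sigma_r^4)=B$ directly, with no adjustment.

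A smaller point concerns the remainder. Knowing only that the inverse $t(x)$ is $C^2$ gives $o(x^2)$, not $O(x^3)$. The bound does hold here. Write $x(t)=t\,h(t)$ with $h=\sqrt g\in C^2$; then $x''(t)-x''(0)=2\{h'(t)-h'(0)\}+t\,h''(t)=O(t)$, so $t''$ is Lipschitz at $0$.

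It is simpler to bootstrap from $x=\alpha t+\beta t^2+O(t^3)$, which follows from the $O(t^4)$ remainder in $d$. First $t=O(x)$, then $t=x/\alpha+O(x^2)$, then $t=x/\alpha-\beta x^2/\alpha^3+O(x^3)$. The paper's Step 3 is equally brief here, so this tightens the argument rather than departing from it.
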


\begin{proof}
The proof has four steps: (i) expand the bivariate cumulant generating function to extract $Q(t)$, (ii) expand $d(t)$ as a power series in $t$, (iii) invert $d(t)$ along the positive branch to obtain $t(d)$, and (iv) compose to obtain $Q(d)=Q(t(d))$ and read off the coefficients.

\smallskip
\noindent\emph{Step 1 (bivariate cumulant expansion of $Q$).}  Let $G(s,t)=\log\E_\mu\exp\{sJ+tr\}$.  The boundedness or $C^4$ assumption ensures $G$ is $C^4$ in a neighborhood of $(0,0)$, so its mixed partials $\partial_s^a\partial_t^b G(0,0)$ for $a+b\le3$ exist and equal the joint cumulants $\kappa_{a+b}(J^{\otimes a},r^{\otimes b})$.  Differentiating in $s$ at $s=0$ gives the moment generating identity $\partial_sG(0,t)=\E_{\pi_t}[J]=Q(t)$.  Hence
\begin{align*}
Q(t)
&=\partial_sG(0,t)
=\partial_sG(0,0)+t\,\partial_t\partial_sG(0,0)+\tfrac{t^2}{2}\,\partial_t^2\partial_sG(0,0)+R_Q(t)\\
&=\E_\mu[J]+t\,\Cov_\mu(J,r)+\tfrac{t^2}{2}\,\kappa_3(J,r,r)+R_Q(t),
\end{align*}
where $|R_Q(t)|\le C_Q t^3$ for some $C_Q$ depending only on the local fourth-order cumulants of $(J,r)$ (an explicit Taylor remainder bound under the $C^4$ assumption).  Since $Q(0)=\E_\mu[J]$, this gives
\[
Q(t)-Q(0)=\Cov_\mu(J,r)\,t+\tfrac12\kappa_{Jrr}\,t^2+O(t^3).
\]

\smallskip
\noindent\emph{Step 2 (power series for $d(t)$).}  Apply the same Taylor expansion to $\Lambda(t)=G(0,t)$ in $t$ alone:
\[
\Lambda(t)=\E_\mu[r]\,t+\tfrac{\sigma_r^2}{2}t^2+\tfrac{\kappa_r}{6}t^3+R_\Lambda(t),\qquad |R_\Lambda(t)|\le C_\Lambda t^4.
\]
Then $d(t)=t\Lambda'(t)-\Lambda(t)$ gives, after combining terms,
\[
d(t)=\tfrac{\sigma_r^2}{2}\,t^2+\tfrac{\kappa_r}{3}\,t^3+R_d(t),\qquad |R_d(t)|\le C_d t^4.
\]
Since $\sigma_r^2>0$, $d(t)>0$ for $t>0$ small, and $d'(t)=t\Var_{\pi_t}(r)>0$ on a positive interval.  In particular $d$ is strictly increasing there.

\smallskip
\noindent\emph{Step 3 (inversion of $d$ along the positive branch).}  Because $d(t)\sim(\sigma_r^2/2)t^2$ with $\sigma_r>0$, on the positive branch $d^{1/2}\sim(\sigma_r/\sqrt2)\,t$, suggesting the ansatz
\[
t(d)=a\sqrt d+b\,d+R_t(d),\qquad |R_t(d)|\le C_t d^{3/2},
\]
with $a>0$.  Substituting into the expansion for $d(t)$ and matching coefficients of $d$ and $d^{3/2}$:
\begin{align*}
\text{coefficient of }d:\quad &\tfrac{\sigma_r^2}{2}\,a^2=1
\quad\Longrightarrow\quad a=\frac{\sqrt2}{\sigma_r}\ \ \text{(positive root)},\\
\text{coefficient of }d^{3/2}:\quad &\sigma_r^2\,a\,b+\tfrac{\kappa_r}{3}a^3=0
\quad\Longrightarrow\quad b=-\frac{\kappa_r}{3}\frac{a^2}{\sigma_r^2}=-\frac{2\kappa_r}{3\sigma_r^4}.
\end{align*}
The negative root $a=-\sqrt2/\sigma_r$ corresponds to the negative tilt branch (decreasing $E_\pi[r]$), which is excluded by the positive-branch hypothesis.  This determines $a,b$ uniquely.  Set $x(t)=t\sqrt{d(t)/t^2}$, extending $d(t)/t^2$ continuously at zero. Then $x(t)=\sigma_rt/\sqrt2+O(t^2)$ has nonzero derivative at zero. Taylor inversion in $x=\sqrt d$ yields $R_t(d)=O(x^3)=O(d^{3/2})$; one does not apply the implicit-function theorem directly to $d-d(t)$ at zero.

\smallskip
\noindent\emph{Step 4 (composition).}  Substitute $t=t(d)$ into the expansion for $Q(t)-Q(0)$.  The leading two terms give
\[
\Cov_\mu(J,r)\Big[\tfrac{\sqrt2}{\sigma_r}\sqrt d-\tfrac{2\kappa_r}{3\sigma_r^4}d\Big]
+\tfrac{\kappa_{Jrr}}{2}\Big[\tfrac{\sqrt2}{\sigma_r}\sqrt d\Big]^2+O(d^{3/2}).
\]
The squared term is $\tfrac{\kappa_{Jrr}}{2}\cdot\tfrac{2}{\sigma_r^2}\,d=\tfrac{\kappa_{Jrr}}{\sigma_r^2}d$.  Cross-terms involving $\sqrt d\cdot d$ are already $O(d^{3/2})$ and absorbed into the remainder.  The cubic remainder $|R_Q(t(d))|\le C_Q t(d)^3=O(d^{3/2})$ since $t(d)=O(\sqrt d)$.  Collecting,
\[
Q(d)-Q(0)
=\frac{\sqrt2\,\Cov_\mu(J,r)}{\sigma_r}\sqrt d
+\Big[\frac{\kappa_{Jrr}}{\sigma_r^2}-\frac{2\Cov_\mu(J,r)\,\kappa_r}{3\sigma_r^4}\Big]d
+O(d^{3/2}),
\]
which gives the stated $A$ and $B$.
\end{proof}

\subsection{Standardized coefficient decomposition}
\label{app:standardized-decomposition}

Let $Z=(r-\E r)/\sigma_r$ and $\bar J=J-\E J$.  Since $\Cov(J,r)=\sigma_r\E[\bar JZ]$, the alignment coefficient satisfies
\[
A=\sqrt2\frac{\Cov(J,r)}{\sigma_r}=\sqrt2\,\E[\bar JZ].
\]
Moreover,
\[
\kappa_{Jrr}=\E[(J-\E J)(r-\E r)^2]=\sigma_r^2\E[\bar JZ^2],
\qquad
\kappa_r=\sigma_r^3\E[Z^3].
\]
Substituting these identities into Theorem~\ref{thm:local} gives
\[
B=\E[\bar JZ^2]-\frac23\E[\bar JZ]\E[Z^3].
\]
Thus $A$ is first-order standardized proxy--gold alignment and $B$ decomposes into a two-sided proxy-tail human-quality moment and a proxy-skewness correction.

\subsection{Skewness correction as a change of coordinate}
\label{app:skewness}

The skewness term in $B$ arises because the tilt path is indexed by the KL radius.  Let $s=t\sigma_r$ be the standardized tilt coordinate, with $Z$ and $\bar J$ as in Appendix~\ref{app:standardized-decomposition} and all expectations under $\mu$.  Steps~1 and~2 of Appendix~\ref{app:local-law-proof}, rewritten in $s$, give
\[
Q(s)-Q(0)=\E[\bar JZ]\,s+\tfrac12\E[\bar JZ^2]\,s^2+O(s^3),
\qquad
d(s)=\tfrac12s^2+\tfrac13\E[Z^3]\,s^3+O(s^4).
\]
Inverting the second expansion on the positive branch gives $s(d)=\sqrt{2d}-\tfrac23\E[Z^3]\,d+O(d^{3/2})$, and substituting into the first gives $A=\sqrt2\,\E[\bar JZ]$ and $B=\E[\bar JZ^2]-\tfrac23\E[\bar JZ]\E[Z^3]$.  The first term of $B$ is the second-order quality response in the tilt coordinate.  It equals $\Cov(J,Z^2)$, a two-sided moment to which large-$|Z|$ items on both sides contribute, so it is not an upper-tail-only quantity.  The second term converts from $s$ to $d$.  With positive alignment, right skew makes a given positive $s$ consume more KL, so a fixed $d$ permits a smaller $s$ and the correction lowers $B$ relative to its first term; left skew gives a positive correction.  The correction vanishes when either alignment or skewness is zero.

Table~\ref{tab:skewness} and Figure~\ref{fig:skewness} measure when the correction matters.  The reported share, $|{\rm corr}|/(|\E[\bar JZ^2]|+|{\rm corr}|)$ with ${\rm corr}=-\tfrac23\E[\bar JZ]\E[Z^3]$, divides by the sum of the absolute components rather than by $|B|$, which avoids shares above 100\% when the components cancel.  In a synthetic family with a standardized skew-normal proxy and a fixed conditional mean, the share is 0.25\% in the symmetric case and rises to about 29\% for strong right skew and 52\% for strong left skew.  For the full SummEval coherence pool, it is 26\% with ROUGE-1 \citep{lin2004rouge} and 34\% with GPT-4o as the proxy.  Changing the skew-normal shape also changes higher moments, so the synthetic family is an intuition audit, not an isolated causal effect of skewness, and two real proxies do not establish the range of skewness found in practice.

\begin{table}[H]
\centering
\small
\caption{Size of the skewness correction in $B$.  Synthetic rows: $Z$ is a standardized skew-normal variable with shape $a$ (20,000 items each), with conditional mean $\E[J\mid Z]=1.3Z+0.6(Z^2-1)$ and proxy $r=5+2Z$.  SummEval rows: the full pool of 1,600 coherence items with ROUGE-1 or GPT-4o as the proxy (descriptive full-pool coefficients, not calibration predictors).  Share is $|{\rm corr}|/(|\E[\bar JZ^2]|+|{\rm corr}|)$, not a fraction of $B$.}
\label{tab:skewness}
\begin{adjustbox}{max width=\textwidth}
\begin{tabular}{lrrrrr}
\toprule
Setting & $\mathbb{E}[Z^3]$ & $\mathbb{E}[\bar J Z^2]$ & $-\tfrac23\mathbb{E}[\bar J Z]\mathbb{E}[Z^3]$ & $B$ & Share (\%) \\
\midrule
\multicolumn{6}{l}{\textit{Synthetic skew-normal family, shape $a$ (20,000 items each)}} \\
$a=-30$ & $-$0.992 & 0.4308 & 0.4661 & 0.8969 & 52.0 \\
$a=-8$ & $-$0.938 & 0.4692 & 0.4609 & 0.9301 & 49.6 \\
$a=-3$ & $-$0.650 & 0.6095 & 0.3944 & 1.0039 & 39.3 \\
$a=-1$ & $-$0.130 & 1.0605 & 0.1062 & 1.1667 & 9.1 \\
$a=0$ & 0.004 & 1.2256 & $-$0.0031 & 1.2225 & 0.3 \\
$a=+1$ & 0.138 & 1.4217 & $-$0.1270 & 1.2947 & 8.2 \\
$a=+3$ & 0.679 & 2.3698 & $-$0.7732 & 1.5966 & 24.6 \\
$a=+8$ & 0.946 & 2.9262 & $-$1.1780 & 1.7482 & 28.7 \\
$a=+30$ & 0.958 & 2.8736 & $-$1.1974 & 1.6762 & 29.4 \\
\midrule
\multicolumn{6}{l}{\textit{SummEval coherence, full pool (1,600 items), proxy:}} \\
ROUGE-1 & 0.082 & 0.0304 & $-$0.0108 & 0.0196 & 26.2 \\
GPT-4o & $-$0.240 & $-$0.1533 & 0.0802 & $-$0.0731 & 34.4 \\
\bottomrule
\end{tabular}
\space
\end{adjustbox}
\end{table}

\begin{figure}[H]
\centering
\includegraphics[width=\linewidth]{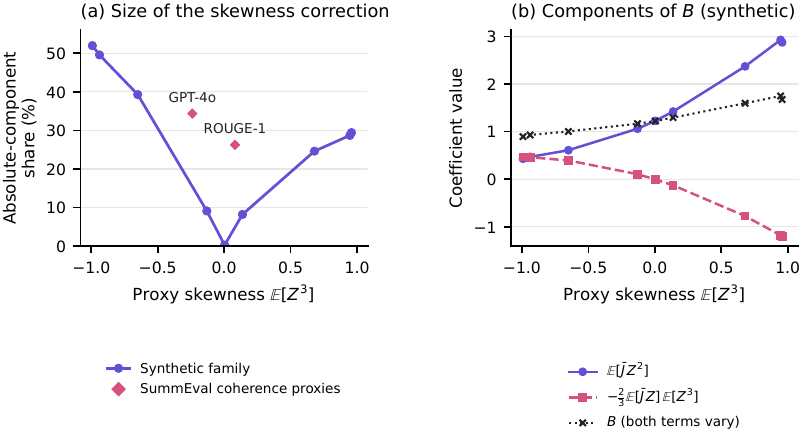}
\caption{When the skewness correction matters.  (a) Correction share as a function of proxy skewness $\E[Z^3]$ for the synthetic skew-normal family (line) and the two SummEval coherence proxies (points).  (b) The two components of $B$ and their sum for the synthetic family: with positive alignment, right skew contributes a negative correction and left skew a positive one. Other moments also vary across this family, so total $B$ need not decrease with skewness.}
\label{fig:skewness}
\end{figure}

\subsection{Finite-range natural coordinate and global slope}
\label{app:finite-range-natural}

\begin{proposition}[Quadratic comparability]
\label{prop:quad}
If $0<v_-\le\Var_{\pi_t}(r)\le v_+<\infty$ for all $t\in[0,T]$, then
\[
\frac{v_-}{2}t^2\le d(t)\le\frac{v_+}{2}t^2.
\]
Consequently, $\sqrt{2/v_+}\sqrt d\le t(d)\le\sqrt{2/v_-}\sqrt d$ for attainable $d\in[0,d(T)]$.
\end{proposition}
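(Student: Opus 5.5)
The plan is to work from the exact derivative identity $d'(t)=t\Var_{\pi_t}(r)$ and integrate it from $0$ to $t$, then invert the resulting two-sided quadratic bound.

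\emph{Step 1: the derivative identity.} Write $d(t)=t\Lambda'(t)-\Lambda(t)$. This holds because $\log(d\pi_t/d\mu)=tr-\Lambda(t)$ and $\E_{\pi_t}[r]=\Lambda'(t)$. Differentiating gives
\[
d'(t)=\Lambda'(t)+t\Lambda''(t)-\Lambda'(t)=t\Lambda''(t)=t\Var_{\pi_t}(r).
\]
On $[0,T]$ this differentiation under the integral is justified by the same integrability assumptions already used for Theorem~\ref{thm:local}. Those assumptions make $\Lambda$ smooth on a neighborhood of $[0,T]$, and the variance bound $v_+<\infty$ presupposes that $\Var_{\pi_t}(r)$ is finite there. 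Also $d(0)=0$, since $\pi_0=\mu$.

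\emph{Step 2: integrate the bounds.} By the fundamental theorem of calculus,
\[
d(t)=\int_0^t u\Var_{\pi_u}(r)\,du.
\]
Insert $v_-\le\Var_{\pi_u}(r)\le v_+$ for $u\in[0,t]\subseteq[0,T]$. Since $\int_0^t u\,du=t^2/2$, this gives $\tfrac{v_-}{2}t^2\le d(t)\le\tfrac{v_+}{2}t^2$. This is the same bound the main text records as Eq.~\eqref{eq:quadratic}.

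\emph{Step 3: the inverse bounds.} Because $d'(t)=t\Var_{\pi_t}(r)\ge tv_->0$ for $t\in(0,T]$, the map $d$ is continuous and strictly increasing on $[0,T]$. It is therefore a bijection onto $[0,d(T)]$, so for each attainable $d$ the inverse $t(d)$ is well defined. Put $t=t(d)$ into Step 2:
\begin{itemize}
\item from $d\le\tfrac{v_+}{2}t^2$ we get $t\ge\sqrt{2d/v_+}$;
\item from $d\ge\tfrac{v_-}{2}t^2$ we get $t\le\sqrt{2d/v_-}$.
\end{itemize}
Both square roots are legitimate because $t\ge0$ on the positive branch.

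\emph{Main obstacle.} The only delicate point is the regularity in Step 1: that $\Lambda$ is twice differentiable on all of $[0,T]$, not just near $0$. I would justify this with the standard fact that the cumulant generating function is analytic on the interior of its finiteness domain. For an endpoint at the boundary, I would pass to one-sided derivatives, or use the mean value form of the integral. In the finite-item case used in the body, $\Lambda$ is a finite log-sum-exp, so it is smooth everywhere and this issue does not arise.
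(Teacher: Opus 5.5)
Your proof is correct and follows the paper's own argument: the paper also obtains $d'(t)=t\Lambda''(t)=t\Var_{\pi_t}(r)$ from $d(t)=t\Lambda'(t)-\Lambda(t)$, uses $d(0)=0$, and integrates from $0$ to $t$. Your explicit monotonicity and inversion step and your regularity remarks fill in details the paper leaves implicit; note, however, that the hypotheses of Theorem~\ref{thm:local} are only local at the origin, so differentiability on all of $[0,T]$ rests on your \emph{Main obstacle} argument rather than on that theorem.
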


\begin{proof}
Since $d(t)=t\Lambda'(t)-\Lambda(t)$ and $\Lambda''(t)=\Var_{\pi_t}(r)$, we have $d'(t)=t\Var_{\pi_t}(r)$ and $d(0)=0$.  Integration from $0$ to $t$ proves the bounds.
\end{proof}

\begin{proposition}[Exact global derivative identity]
\label{prop:global}
Assume $\E_\mu[(1+|J|)(1+r^2)e^{tr}]<\infty$ on $(0,t_{\max})$.  Then $\Lambda(t)<\infty$ and
\[
Q'(t)=\Cov_{\pi_t}(J,r),\quad R'(t)=\Var_{\pi_t}(r),\quad d'(t)=t\Var_{\pi_t}(r).
\]
If $h(u)=\E_\mu[J\mid r=u]$, then $\E_{\pi_t}[J\mid r]=h(r)$ and, for $t>0$,
\[
\frac{dQ}{dd}(d(t))=\frac{\Cov_{\pi_t}(h(r),r)}{t\Var_{\pi_t}(r)}
\]
whenever the denominator is nonzero.
\end{proposition}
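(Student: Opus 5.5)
The plan is to work directly with the log-partition function $\Lambda(t)=\log\E_\mu e^{tr}$ and the tilted measure $\pi_t$, differentiating under the integral sign. The integrability hypothesis $\E_\mu[(1+|J|)(1+r^2)e^{tr}]<\infty$ on $(0,t_{\max})$ is what licenses this. On a compact subinterval $[t_1,t_2]$ we have $|r|^k e^{tr}\le |r|^k(e^{t_1 r}+e^{t_2 r})$ for $k\le2$, and similarly with an extra factor $|J|$. So the dominated convergence theorem applies to the $t$-derivatives of $\E_\mu e^{tr}$, $\E_\mu[re^{tr}]$ and $\E_\mu[Je^{tr}]$. Finiteness of $\Lambda(t)$ is immediate, since $e^{tr}$ is dominated by the assumed integrand.

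First, I will compute $\Lambda'(t)=\E_{\pi_t}[r]=R(t)$ and $\Lambda''(t)=\Var_{\pi_t}(r)$, which gives $R'(t)=\Var_{\pi_t}(r)$. Next, write $Q(t)=\E_\mu[Je^{tr}]/\E_\mu e^{tr}$ and differentiate the quotient. This yields $Q'(t)=\E_{\pi_t}[Jr]-\E_{\pi_t}[J]\E_{\pi_t}[r]=\Cov_{\pi_t}(J,r)$. For the KL radius, use $d(t)=\E_{\pi_t}[tr-\Lambda(t)]=tR(t)-\Lambda(t)$. Differentiating gives $d'(t)=R(t)+tR'(t)-\Lambda'(t)=t\Var_{\pi_t}(r)$, the same identity already used in Proposition~\ref{prop:quad}.

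For the conditional-mean claim, the key observation is that $d\pi_t/d\mu=\exp\{tr-\Lambda(t)\}$ is $\sigma(r)$-measurable. Hence, for any bounded measurable $g$,
\[
\E_{\pi_t}[Jg(r)]=\E_\mu\bigl[J g(r)e^{tr-\Lambda(t)}\bigr]=\E_\mu\bigl[h(r)g(r)e^{tr-\Lambda(t)}\bigr]=\E_{\pi_t}[h(r)g(r)],
\]
by the tower property under $\mu$; integrability of $Je^{tr}$ is given. This shows that $h(r)$ is a version of $\E_{\pi_t}[J\mid r]$. The same argument with $g(r)=r$, extended from bounded $g$ by truncation and dominated convergence using $|J||r|e^{tr}$ integrable, gives $\E_{\pi_t}[Jr]=\E_{\pi_t}[h(r)r]$. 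Taking $g\equiv1$ gives $\E_{\pi_t}J=\E_{\pi_t}h(r)$, and together these yield $\Cov_{\pi_t}(J,r)=\Cov_{\pi_t}(h(r),r)$.

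Finally, for $t>0$ with $\Var_{\pi_t}(r)>0$ we have $d'(t)>0$, so $d$ is locally invertible. The chain rule then gives $\frac{dQ}{dd}=Q'(t)/d'(t)$, which is the stated ratio. The main obstacle is the justification of differentiation under the integral, specifically building a dominating function near $t$ when $t$ is close to an endpoint of $(0,t_{\max})$. I will handle this by working on closed subintervals strictly inside the open interval, and by noting that $|J||r|\le |J|(1+r^2)$ controls the mixed term.
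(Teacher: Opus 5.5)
Your proposal is correct and follows essentially the same route as the paper. Both arguments dominate $|J|e^{tr}$, $|r|e^{tr}$, $|Jr|e^{tr}$ and $r^2e^{tr}$ near $t$ to differentiate under the integral, use $\E_{\pi_t}f=\E_\mu[fe^{tr}]/\E_\mu e^{tr}$ to get $Q'$ and $R'$, use $d(t)=t\Lambda'(t)-\Lambda(t)$ for $d'$, and observe that $d\pi_t/d\mu$ depends on $r$ alone so tilting leaves $\E[J\mid r]=h(r)$ unchanged. You also make explicit several steps the paper leaves implicit: the dominating function $e^{t_1r}+e^{t_2r}$, the truncation step giving $\Cov_{\pi_t}(J,r)=\Cov_{\pi_t}(h(r),r)$, and the inverse-function chain rule, where local invertibility holds because $t\Var_{\pi_t}(r)$ is continuous by the same domination.
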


\begin{proof}
The integrability assumption majorizes $|J e^{tr}|$, $|r e^{tr}|$, $|Jr e^{tr}|$, and $r^2 e^{tr}$ uniformly on a neighborhood of $t$, justifying differentiation under the integral.  For integrable $f$, $\E_{\pi_t}f=\E_\mu[f e^{tr}]/\E_\mu[e^{tr}]$, so $d\E_{\pi_t}f/dt=\Cov_{\pi_t}(f,r)$.  Taking $f=J$ and $f=r$ gives the first identities.  Since $d(t)=t\Lambda'(t)-\Lambda(t)$, differentiating gives $d'(t)=t\Lambda''(t)=t\Var_{\pi_t}(r)$.  The density ratio $e^{tr}/Z_t$ is a function of $r$ alone, so tilting reweights $r$-fibers without changing the conditional law of $J$ given $r$.
\end{proof}

\begin{theorem}[Finite-range residual bound]
\label{thm:finite}
Fix $D>0$, and let $T_D$ satisfy $d(T_D)=D$.  Suppose the tilt family is well-defined on $[0,T_D]$, proxy variance is bounded above and below there, and fourth cumulants $|\kappa_4^{\pi_t}(J,r,r,r)|$ and $|\kappa_4^{\pi_t}(r,r,r,r)|$ are uniformly bounded.  Let $A,B$ be the coefficients in Theorem~\ref{thm:local}.  Then there exists $C_D<\infty$ such that
\[
|Q(d)-Q(0)-A\sqrt d-Bd|\le C_D d^{3/2}\qquad\forall d\in[0,D].
\]
\end{theorem}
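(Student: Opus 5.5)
The plan is to upgrade the local argument of Theorem~\ref{thm:local} from an asymptotic statement to a uniform bound on $[0,D]$. We replace each $O(\cdot)$ remainder by an integral-form Taylor remainder whose constant is controlled by the assumed uniform bounds on $[0,T_D]$. We work in the tilt coordinate $t$ and convert to $d$ only at the end, using Proposition~\ref{prop:quad}.

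\textbf{Step 1: expand $Q(t)$.} By Proposition~\ref{prop:global}, $Q'(t)=\Cov_{\pi_t}(J,r)=\kappa_2^{\pi_t}(J,r)$. Differentiating cumulants along the tilt raises the $r$-order by one, so
\[
Q''(t)=\kappa_3^{\pi_t}(J,r,r),\qquad Q'''(t)=\kappa_4^{\pi_t}(J,r,r,r).
\]
Taylor's theorem with Lagrange remainder then gives
\[
Q(t)-Q(0)=\Cov(J,r)\,t+\tfrac12\kappa_{Jrr}t^2+E_Q(t),\qquad |E_Q(t)|\le \tfrac16 K_J t^3 \text{ on } [0,T_D],
\]
where $K_J=\sup_{t\le T_D}|\kappa_4^{\pi_t}(J,r,r,r)|$.

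\textbf{Step 2: expand $d(t)$.} We have $d'(t)=t\Lambda''(t)$, with $\Lambda''=\Var_{\pi_t}(r)$, $\Lambda'''=\kappa_3^{\pi_t}(r,r,r)$ and $\Lambda''''=\kappa_4^{\pi_t}(r,r,r,r)$. This yields
\[
d(t)=\tfrac{\sigma_r^2}{2}t^2+\tfrac{\kappa_r}{3}t^3+E_d(t),\qquad |E_d(t)|\le C K_r t^4,
\]
with $K_r$ the uniform bound on the fourth cumulant of $r$.

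\textbf{Step 3 (main obstacle): invert $d(t)$ uniformly.} We need
\[
\bigl|t(d)-a\sqrt d-b d\bigr|\le C_t d^{3/2}\quad\text{on all of } [0,D],
\]
not just near zero. Near zero the first-order approach is to write $x=\sqrt{d(t)}$ as $t\sqrt{g(t)}$ with $g(t)=d(t)/t^2$. The function $g$ is bounded below by $v_-/2$ (Proposition~\ref{prop:quad}), is $C^2$, and has $g''$ controlled by $K_r$ and the bound on $|\kappa_3^{\pi_t}|$; the latter is available because $\kappa_3^{\pi_t}(r,r,r)=\frac{d}{dt}\Var_{\pi_t}(r)$ is bounded by integrating $\kappa_4$ from $t=0$. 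Then $x(t)$ is a $C^2$ function with $x'\ge c>0$, using $d'=t\Var_{\pi_t}(r)\ge v_-t$ and $x'=d'/(2x)\ge v_-t/(2\sqrt{v_+/2}\,t)$. The inverse $t(x)$ is therefore $C^2$ on $[0,\sqrt D]$, with second derivative bounded in terms of $v_\pm$, $K_r$ and $T_D$. The coefficients $a$ and $b$ come from the matching already done in Step~3 of the proof of Theorem~\ref{thm:local}, and the Taylor remainder of $t(x)$ at third order gives $C_t x^3=C_t d^{3/2}$. Obtaining a third-order remainder needs $t'''(x)$ bounded. That in turn needs $x'''(t)$, which involves $d''''$ and hence $\kappa_4$; this is exactly where the fourth-cumulant hypothesis enters, and it is the step I expect to require the most care.

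\textbf{Step 4: compose.} Substitute $t(d)$ into the expansion of $Q$ and expand:
\[
\Cov(J,r)\bigl(a\sqrt d+bd+R_t\bigr)+\tfrac12\kappa_{Jrr}\bigl(a\sqrt d+bd+R_t\bigr)^2+E_Q(t(d)).
\]
The $\sqrt d$ and $d$ terms give $A\sqrt d+Bd$ exactly as before. Every other term is bounded by a multiple of $d^{3/2}$ on $[0,D]$, since $t(d)\le\sqrt{2/v_-}\sqrt d$ and $\sqrt d\le\sqrt D$ lets higher powers such as $d^2\le\sqrt D\,d^{3/2}$ be absorbed. Collecting these constants gives $C_D$.
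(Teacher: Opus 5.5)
Your plan works, but Step~3 is not yet closed, and the reason you give for why it closes is off by one order. You propose to bound $t'''$ via $x'''$, and $x'''$ via $d''''$. That chain leaves the hypotheses: $d''''(t)=3\kappa_4^{\pi_t}(r,r,r,r)+t\,\kappa_5^{\pi_t}(r,\dots,r)$ contains a fifth cumulant, which is not assumed bounded. A uniform bound on $x'''$ does hold under $\kappa_4$ alone: write $x=t\,h(t)$ with $h=\sqrt g$, so $x'''=3h''+t\,h'''$ and $t\,g'''(t)=\Lambda''''(t)-4g''(t)$. But this has to be shown, not asserted.

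The cleaner repair is to avoid third derivatives altogether. You already have $g(t)=\int_0^1u\,\Lambda''(tu)\,du$ with $g\ge v_-/2$ and $|g''|\le K_r/4$. Hence $k=g^{-1/2}$ is $C^2$ on $[0,T_D]$ with bounded $k''$. Since $x=t\sqrt{g(t)}$, you have $t=x\,k(t)$. Combined with $t\le\sqrt{2/v_-}\,x$, this gives first $t=k(0)x+O(x^2)$, and then $t=k(0)x+k(0)k'(0)x^2+O(x^3)$ uniformly on $[0,\sqrt D]$. Here $k(0)=\sqrt2/\sigma_r=a$ and $k(0)k'(0)=-g'(0)/(2g(0)^2)=-2\kappa_r/(3\sigma_r^4)=b$. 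The explicit factor $t$ in $x=t\sqrt g$ supplies the extra order you were trying to extract from $t'''$.

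With that repair, your argument is a genuinely different proof from the paper's. The paper expands only on a small interval $[0,d_0]$, reusing the local-law argument, and handles $[d_0,D]$ by compactness. Because the tilted variance is bounded below, $d(t)$ is strictly increasing. So $S(d)=Q(t(d))-Q(0)-A\sqrt d-Bd$ is continuous on $[d_0,D]$ while $d^{3/2}\ge d_0^{3/2}$. This makes $\sup_{[d_0,D]}|S|/d^{3/2}$ finite, and $C_D$ is the larger of the two constants.

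That route never needs uniform control of the inversion away from $0$, but its $C_D$ is purely existential. Your uniform Taylor route pays for the inversion step above and in return gives an explicit $C_D$. It depends on $v_\pm$, $K_J$, $K_r$, $T_D$, $|\Cov(J,r)|$, $|\kappa_{Jrr}|$, and the induced bound on $\kappa_3^{\pi_t}(r,r,r)$. It also uses the variance and fourth-cumulant hypotheses quantitatively across the whole window, not only near $0$. If you only want existence of $C_D$, the paper's compactness step means you need your Step~3 only in a neighborhood of $t=0$.
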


\begin{proof}
The result extends the local expansion to a finite checked range by combining the local residual with continuity away from zero.  We spell out the argument because it clarifies why the constant is range-dependent.

\smallskip
\noindent\emph{Step 1 (local residual near zero).}  Let $K=\sup_{t\in[0,T_D]}\max\{|\kappa_4^{\pi_t}(J,r,r,r)|,|\kappa_4^{\pi_t}(r,r,r,r)|\}<\infty$.  Taylor's theorem with remainder applies to the marginal expansions of $Q(t)$ and $\Lambda(t)$ on a small interval $[0,t_0]\subset[0,T_D]$.  The third and fourth derivatives are the corresponding tilted cumulants, bounded by $K$ on that interval.  Hence
\[
Q(t)-\Big[Q(0)+\Cov_\mu(J,r)\,t+\tfrac12\kappa_{Jrr}\,t^2\Big]
=O(t^3),
\]
\[
\Lambda(t)-\Big[\E_\mu[r]\,t+\tfrac{\sigma_r^2}{2}t^2+\tfrac{\kappa_r}{6}t^3\Big]
=O(t^4).
\]
Using $d(t)=t\Lambda'(t)-\Lambda(t)$ gives
\[
d(t)=\tfrac{\sigma_r^2}{2}t^2+\tfrac{\kappa_r}{3}t^3+O(t^4).
\]
Since $\sigma_r>0$, the positive branch is locally invertible and
\[
t(d)=\frac{\sqrt2}{\sigma_r}\sqrt d-\frac{2\kappa_r}{3\sigma_r^4}d+O(d^{3/2}).
\]
Composing the expansion for $Q(t)$ with this inverse gives, on $0\le d\le d_0:=d(t_0)$,
\[
|Q(d)-Q(0)-A\sqrt d-Bd|\le C_0 d^{3/2}
\]
for some finite $C_0$.

\smallskip
\noindent\emph{Step 2 (compact tail of the checked range).}  If $d_0\ge D$, Step 1 proves the claim.  Otherwise, proxy variance is bounded below on $[0,T_D]$, so $d(t)$ is strictly increasing and the inverse $t(d)$ is continuous on $[d_0,D]$.  Therefore
\[
S(d)=Q(t(d))-Q(0)-A\sqrt d-Bd
\]
is continuous on the compact interval $[d_0,D]$.  Since $d^{3/2}$ is bounded away from zero there,
\[
C_1=\sup_{d\in[d_0,D]} \frac{|S(d)|}{d^{3/2}}<\infty.
\]
Taking $C_D=\max\{C_0,C_1\}$ yields
\[
|Q(d)-Q(0)-A\sqrt d-Bd|\le C_D d^{3/2}\qquad\forall d\in[0,D].
\]
The constant $C_D$ depends on the data distribution and on the checked range.  The theorem therefore does not assert that the two-coefficient approximation is numerically tight for a chosen deployment radius.  Appendix~\ref{app:stability-bon-filled} reports finite-range diagnostics that measure this tightness empirically.
\end{proof}

\subsection{Calibration prediction error and finite-sample concentration}
\label{app:prediction-concentration}

The main text states the prediction-error decomposition as Lemma~\ref{lem:prediction-error}.  The proof is immediate but useful for interpreting failures.  If
\[
\Delta_{\rm eval}(d)=A_{\rm eval}\sqrt d+B_{\rm eval}d+R_{\rm eval}(d),
\qquad |R_{\rm eval}(d)|\le C_{\rm eval}d^{3/2},
\]
and $\widehat\Delta(d)=\hat A\sqrt d+\hat B d$, then
\[
\widehat\Delta(d)-\Delta_{\rm eval}(d)
=(\hat A-A_{\rm eval})\sqrt d+(\hat B-B_{\rm eval})d-R_{\rm eval}(d).
\]
Taking absolute values gives Eq.~\eqref{eq:prediction-error}.  The additional split into estimation error and calibration--evaluation shift is the triangle inequality, e.g.
\[
|\hat A-A_{\rm eval}|\le |\hat A-A_{\rm cal}|+|A_{\rm cal}-A_{\rm eval}|.
\]
The sign-margin statement follows because if the error bound is smaller than $|\Delta_{\rm eval}(d_0)|$, the prediction cannot cross zero.

\paragraph{Plug-in coefficient estimators.}  For a calibration sample $(J_i,r_i)_{i=1}^n$, define empirical moments $\bar J=n^{-1}\sum_i J_i$, $\bar r=n^{-1}\sum_i r_i$, $\hat\sigma_r^2=n^{-1}\sum_i(r_i-\bar r)^2$, and $\hat Z_i=(r_i-\bar r)/\hat\sigma_r$, $\hat{\bar J}_i=J_i-\bar J$.  The plug-in estimators corresponding to Eq.~\eqref{eq:standardized-coefficients} are
\[
\hat A=\sqrt2\,\frac{1}{n}\sum_i\hat{\bar J}_i\hat Z_i,
\qquad
\hat B=\frac{1}{n}\sum_i\hat{\bar J}_i\hat Z_i^2-\frac23\!\left(\frac{1}{n}\sum_i\hat{\bar J}_i\hat Z_i\right)\!\!\left(\frac{1}{n}\sum_i\hat Z_i^3\right).
\]
Equivalently, $\hat A=\sqrt2\,\widehat{\Cov}(J,r)/\hat\sigma_r$ and $\hat B=\widehat{\kappa_{Jrr}}/\hat\sigma_r^2-\tfrac23\,\widehat{\Cov}(J,r)\widehat{\kappa_r}/\hat\sigma_r^4$, where all covariances and third-order cumulants are the same $n$-normalized empirical central moments displayed above.  The implementation uses these plug-in empirical moments, not finite-sample unbiased $k$-statistics; this convention matches the finite-item target population used for the exact tilt curves.

\begin{proposition}[Coefficient concentration]
\label{prop:coefficient-concentration}
In an idealized i.i.d. calibration model, let $(J_i,r_i)_{i=1}^n$ be i.i.d., with $|J_i|\le M_J$, $|r_i|\le M_r$, and population proxy standard deviation $\sigma_r\ge s>0$.  Use the plug-in estimators of Eq.~\eqref{eq:standardized-coefficients} when $\hat\sigma_r>0$, and set $\hat A=\hat B=0$ on a zero-variance sample for this probability statement. The empirical protocol instead excludes such samples.  Then there exist constants $C_1,C_2>0$, depending only on $(M_J,M_r,s)$, such that for sufficiently small fixed $u$,
\[
\Pr(|\hat A-A_{\rm cal}|>u)\le C_1e^{-C_2 n u^2},
\qquad
\Pr(|\hat B-B_{\rm cal}|>u)\le C_1e^{-C_2 n u^2}.
\]
In particular $|\hat A-A_{\rm cal}|,|\hat B-B_{\rm cal}|=O_p(n^{-1/2})$.
\end{proposition}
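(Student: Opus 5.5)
The estimators are smooth functions of a finite vector of bounded empirical means. The plan is to combine Hoeffding's inequality for those means with a local Lipschitz bound for the map from means to coefficients. Concretely, I would collect the raw empirical moments
\[
\hat m=\frac1n\sum_i\bigl(J_i,\;r_i,\;r_i^2,\;r_i^3,\;J_ir_i,\;J_ir_i^2\bigr)\in\mathbb{R}^6 .
\]
Each coordinate is an average of i.i.d.\ variables bounded in absolute value by a constant $M$ depending only on $(M_J,M_r)$; for instance $|J_ir_i^2|\le M_JM_r^2$. Hoeffding's inequality and a union bound give
\[
\Pr\bigl(\|\hat m-m\|_\infty>\delta\bigr)\le 12\,e^{-n\delta^2/(2M^2)} .
\]

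Next I would express the coefficients as $A=\phi_A(m)$ and $B=\phi_B(m)$, where $m$ is the population moment vector and $\phi_A,\phi_B$ are rational functions built from the central moments. Explicitly,
\[
\sigma_r^2=m_3-m_2^2,\qquad \Cov=m_5-m_1m_2,
\]
and $\kappa_r,\kappa_{Jrr}$ are the corresponding polynomial expressions. Then
\[
\phi_A=\sqrt2\,\Cov/\sigma_r,\qquad \phi_B=\kappa_{Jrr}/\sigma_r^2-\tfrac23\,\Cov\,\kappa_r/\sigma_r^4 .
\]
Because the plug-in estimators use $n$-normalized central moments, $\hat A=\phi_A(\hat m)$ and $\hat B=\phi_B(\hat m)$ exactly. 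Here $A_{\rm cal},B_{\rm cal}$ are read as the population values $\phi(m)$.

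The key step is a uniform local Lipschitz bound. On the set
\[
K=\{x:\ |x_k|\le M\ \text{for all }k,\ \ x_3-x_2^2\ge s^2/4\},
\]
the variance stays bounded below, so $\phi_A$ and $\phi_B$ are $C^1$ with gradients bounded by some $L=L(M_J,M_r,s)$. The numerators are polynomials on a bounded box and the denominators are powers of $\sigma$ that are at least $(s/2)^k$. I would choose $\delta_0=\delta_0(M_r,s)$ so that $\|x-m\|_\infty\le\delta_0$ forces
\[
x_3-x_2^2\ge\sigma_r^2-3M_r\delta_0-\delta_0^2\ge s^2/4 .
\]
The constant $3M_r\delta_0$ comes from $|x_2^2-m_2^2|\le(2M_r+\delta_0)\delta_0$. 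Since $K$ is not convex, I would apply the mean value theorem along the segment from $m$ to $\hat m$; every point of that segment is still within $\delta_0$ of $m$, so it lies in $K$. This gives
\[
|\phi(\hat m)-\phi(m)|\le 6L\,\|\hat m-m\|_\infty \quad\text{whenever } \|\hat m-m\|_\infty\le\delta_0 .
\]

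To conclude, take $u\le 6L\delta_0$, which is the meaning of "sufficiently small fixed $u$," and set $\delta=u/(6L)\le\delta_0$. Then
\[
\{|\hat A-A_{\rm cal}|>u\}\subseteq\{\|\hat m-m\|_\infty>\delta\},
\]
and Hoeffding gives the bound with $C_1=12$ and $C_2=1/(72L^2M^2)$; the same argument works for $B$. The zero-variance convention matters only on the event $\hat\sigma_r=0$, which lies inside $\{\|\hat m-m\|_\infty>\delta_0\}$ and is therefore already covered by the tail bound. For $O_p(n^{-1/2})$, take $u=c/\sqrt n$: once $n$ is large enough that $u\le 6L\delta_0$, the probability is at most $C_1e^{-C_2c^2}$, which can be made arbitrarily small by choosing $c$ large. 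I expect the main obstacle to be the bookkeeping needed to verify that $L$ and $\delta_0$ depend only on $(M_J,M_r,s)$ and not on the particular distribution. This reduces to bounding the partial derivatives of the third-order cumulant polynomials and of $\sigma^{-k}$ on $K$, which is routine but has to be carried through for each term.
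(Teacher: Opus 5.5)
Your argument is correct and is essentially the paper's: Hoeffding on the bounded empirical moments plus a Lipschitz (delta-method) bound on a region where the proxy variance stays above $s^2/4$. The only difference is cosmetic: you place the small-variance event inside the Hoeffding deviation event, whereas the paper bounds $\Pr(\hat\sigma_r<s/2)$ separately and absorbs it into $C_1,C_2$ (and, as a minor slip that changes nothing, your variance bound should read $\sigma_r^2-(1+2M_r)\delta_0-\delta_0^2$, since $x_3$ also moves by up to $\delta_0$).
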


\begin{proof}
The coefficients in Eq.~\eqref{eq:standardized-coefficients} are smooth functions of bounded empirical moments as long as the proxy standard deviation is bounded away from zero.  Hoeffding concentration controls each required empirical moment at rate $n^{-1/2}$, and a delta-method/Lipschitz argument on the event $\{\hat\sigma_r\ge s/2\}$ gives the displayed bounds, with constants depending on $(M_J,M_r,s)$.  The probability of the complementary event is itself exponentially small: applying Hoeffding separately to $n^{-1}\sum_i r_i^2$ and $\bar r$, using $\hat\sigma_r^2=n^{-1}\sum_i r_i^2-\bar r^2$ and $\sigma_r^2\ge s^2$, gives $\Pr(\hat\sigma_r<s/2)\le C_3 e^{-C_4 n}$ for constants $C_3,C_4$ depending on $(M_J,M_r,s)$, which can be absorbed into $C_1,C_2$ to yield the unconditional bounds stated.
\end{proof}

\begin{corollary}[Uniform concentration of the local prediction curve]
For fixed $D>0$,
\[
\sup_{0\le d\le D}|(\hat A-A_{\rm cal})\sqrt d+(\hat B-B_{\rm cal})d|
=O_p\left(\frac{\sqrt D+D}{\sqrt n}\right).
\]
\end{corollary}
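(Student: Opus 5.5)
The proof is a direct consequence of Proposition~\ref{prop:coefficient-concentration}. The first step is to bound the supremum pathwise by the triangle inequality. For every $0\le d\le D$ we have $\sqrt d\le\sqrt D$ and $d\le D$, so
\[
\sup_{0\le d\le D}\bigl|(\hat A-A_{\rm cal})\sqrt d+(\hat B-B_{\rm cal})d\bigr|
\le \sqrt D\,|\hat A-A_{\rm cal}|+D\,|\hat B-B_{\rm cal}|
\le (\sqrt D+D)\max\bigl\{|\hat A-A_{\rm cal}|,|\hat B-B_{\rm cal}|\bigr\}.
\]
This bound is deterministic given the calibration sample. It removes the supremum over $d$ entirely, so no chaining or empirical-process argument over the radius is needed. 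The reason is that the function class is a two-dimensional linear span of the fixed functions $\sqrt d$ and $d$.

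The second step controls the maximum of the two coefficient errors. Fix $\varepsilon>0$ and take $u=M/\sqrt n$ in the two tail bounds of Proposition~\ref{prop:coefficient-concentration}. A union bound then gives
\[
\Pr\bigl(\max\{|\hat A-A_{\rm cal}|,|\hat B-B_{\rm cal}|\}>M/\sqrt n\bigr)\le 2C_1e^{-C_2M^2}.
\]
The right-hand side is below $\varepsilon$ once $M$ is large enough. Hence $\max\{\cdot\}=O_p(n^{-1/2})$, and multiplying by the deterministic factor $\sqrt D+D$ yields the claimed rate $O_p\bigl((\sqrt D+D)/\sqrt n\bigr)$.

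The only delicate point is the proviso in the proposition that the tail bound holds "for sufficiently small fixed $u$", whereas here $u=M/\sqrt n$ varies with $n$. I would resolve it this way. For fixed $M$ and all sufficiently large $n$, the value $M/\sqrt n$ lies below the admissible threshold $u_0$, so the bound applies. The $O_p$ statement only needs the probability bound for $n\ge n_0(M)$, and finitely many small $n$ are irrelevant to stochastic boundedness. If one prefers to avoid the threshold altogether, an alternative is to recall that each error is bounded by a constant under the boundedness assumptions with $\hat\sigma_r\ge s/2$. Together with the exponentially small probability of the event $\{\hat\sigma_r<s/2\}$, this gives the same conclusion. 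Finally, the constants depend only on $(M_J,M_r,s)$ and not on $D$, so the $D$-dependence enters solely through the factor $\sqrt D+D$.
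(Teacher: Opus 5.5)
Your proof is correct and is essentially the argument the paper has in mind. The paper gives no separate proof and treats the corollary as immediate from Proposition~\ref{prop:coefficient-concentration}: bound the supremum by $\sqrt D\,|\hat A-A_{\rm cal}|+D\,|\hat B-B_{\rm cal}|$ and apply the coefficient rates. Your treatment of the small-$u$ proviso is sound, though you could skip it by citing the proposition's stated conclusion $|\hat A-A_{\rm cal}|,|\hat B-B_{\rm cal}|=O_p(n^{-1/2})$ directly.
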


\paragraph{Boundedness and tail robustness.}  Proposition~\ref{prop:coefficient-concentration} is a finite-sample concentration statement under bounded scores and a nondegenerate proxy variance.  The local coefficient law itself allows a local cumulant-generating-function condition, but the plug-in estimator can be sensitive to extreme proxy values because $B$ contains third-order moments.  In finite empirical rating matrices with nonzero proxy variance the coefficients are defined after the reported preprocessing, but the diagnostic should still be accompanied by tail checks when raw proxy scores are unbounded or heavy-tailed.  Appendix~\ref{app:accept-boost} reports a winsorization, rank-normalization, and controlled-outlier audit: mild winsorization and gold-side outliers largely preserve $B$ correlation, but sign agreement can change. Proxy outliers can perturb curvature, motivating robust preprocessing and revalidation.

\subsection{Best-of-\texorpdfstring{$N$}{N} as rank-score tilt}
\label{app:bon-rank}

This appendix records the rank-score tilt formulation of best-of-$N$ used as a scope check in the body.  Recent theoretical and algorithmic analyses of BoN policies provide complementary perspectives \citep{beirami2025theoretical,jinnai2025regularized}.

Let $F(u)=\mu(r\le u)$.  If $r$ is atomless under $\mu$, the probability that a sample with score $r(x)$ is selected as the maximum among $N$ iid draws is proportional to $F(r(x))^{N-1}$.  Therefore
\[
\frac{d\pi_N^{\rm BoN}}{d\mu}(x)=NF(r(x))^{N-1}
=\exp\{(N-1)\log F(r(x))+\log N\}.
\]
Thus BoN is an exponential tilt on the rank score $s(x)=\log F(r(x))$, not on the raw score $r(x)$.

For the atomless KL calculation, under $X\sim\mu$ the random variable $U=F(r(X))$ is uniform on $[0,1]$.  Hence
\[
\KL(\pi_N^{\rm BoN}\Vert\mu)
=\E_{\pi_N^{\rm BoN}}\log\{NU^{N-1}\}
=\log N+(N-1)\E_{U\sim\mathrm{Beta}(N,1)}\log U.
\]
Since $\E_{\mathrm{Beta}(N,1)}\log U=-1/N$, we get
\[
 d_N^{\rm BoN}=\log N-\frac{N-1}{N}.
\]
In particular $d_2^{\rm BoN}=\log 2-1/2\approx0.193$.

\paragraph{Tie-corrected discrete form.}  For discrete or tied proxy scores, let $F_-(u)=\mu(r<u)$.  We derive the selection density under uniform tie-breaking among maximizers.  Draw $X_1,\ldots,X_N$ i.i.d.\ from $\mu$, and select $X_1$ as the BoN winner whenever (i) no other draw exceeds $r(X_1)$, and (ii) among draws tied with $r(X_1)$, $X_1$ is chosen uniformly.  Conditional only on $r(X_1)=u$, write $p:=F(u)-F_-(u)$ for the atom mass.  The joint event that exactly $j$ of the $N-1$ competitors tie at score $u$, the remaining $N-1-j$ are below $u$, and none exceed $u$ has probability $\binom{N-1}{j}F_-(u)^{N-1-j}p^j$.  Uniform tie-breaking then selects $X_1$ with probability $1/(j+1)$ on that joint event.  Summing these unconditional joint probabilities gives
\[
\Pr(X_1\text{ selected}\mid r(X_1)=u)
=\sum_{j=0}^{N-1}\binom{N-1}{j}F_-(u)^{N-1-j}p^{j}\cdot\frac{1}{j+1}
=\frac{F(u)^{N}-F_-(u)^{N}}{N\,p},
\]
using the binomial identity $\sum_{j=0}^{N-1}\binom{N-1}{j}a^{N-1-j}b^{j}/(j+1)=((a+b)^{N}-a^{N})/(Nb)$ with $a=F_-(u)$ and $b=p$.  Multiplying by $N$ (Bayes' rule with $\Pr(X_1\text{ selected})=1/N$) gives the density ratio $w_N(u)=\big(F(u)^N-F_-(u)^N\big)/\big(F(u)-F_-(u)\big)$ on the atom case $p>0$.  Letting $p\downarrow0$ recovers $w_N(u)=NF(u)^{N-1}$.  Hence
\[
 w_N(u)=
 \begin{cases}
 \dfrac{F(u)^N-F_-(u)^N}{F(u)-F_-(u)}, & F(u)>F_-(u),\\[0.6em]
 NF(u)^{N-1}, & F(u)=F_-(u),
 \end{cases}
\]
and $d\pi_N^{\rm BoN}/d\mu(x)=w_N(r(x))$.  The atomless formula is recovered when $\mu$ has no atoms in $r$.

\paragraph{Relative ESS, softened rules, and saturation.}
For a rule-induced distribution $p$ on a pool with reference $p_0$, $\mathrm{RelESS}(p,p_0)=(\sum_ip_i^2/p_{0,i})^{-1}=1/\{1+\chi^2(p\Vert p_0)\}\in(0,1]$.  For a uniform reference on $m$ items it equals $\mathrm{ESS}/m$ with $\mathrm{ESS}=1/\sum_ip_i^2$, so the effective number of candidates is $m\,\mathrm{RelESS}$.  For atomless scores, $\E_\mu[w_N^2]=\int_0^1N^2u^{2N-2}\,du=N^2/(2N-1)$, so $\mathrm{RelESS}_N=(2N-1)/N^2$, which is $0.750$, $0.438$, and $0.234$ at $N=2,4,8$, where $d_N=0.193$, $0.636$, and $1.204$.  For atomless scores, $q_\lambda\propto\mu F(r)^\lambda$ has density $(\lambda+1)F^\lambda=w_{\lambda+1}$ and equals BoN when $N=\lambda+1$ is an integer. The finite-pool experiments instead use $U_i=\operatorname{rank}_{\rm avg}(r_i)/m$, with one-based average ranks for ties, and $q_{\lambda,i}\propto p_{0,i}U_i^\lambda$. This empirical rank tilt need not equal finite-pool BoN, even without ties.  The mixture $p_\alpha=(1-\alpha)\mu+\alpha\pi^{\rm BoN}_N$ reaches arbitrarily small radii and satisfies $\chi^2(p_\alpha\Vert\mu)=\alpha^2\chi^2(\pi^{\rm BoN}_N\Vert\mu)$, hence $\mathrm{RelESS}(p_\alpha)=1/\{1+\alpha^2\chi^2(\pi^{\rm BoN}_N\Vert\mu)\}$; its KL has no closed form and is computed exactly.  Ties also cap the radius.  As $N\to\infty$, BoN and the pooled tilt both converge to the uniform distribution on the $n_{\rm top}$ items tied at the maximum score, so $d\le\log(m/n_{\rm top})$, and a proxy with many tied top scores keeps BoN at small radii even for moderate $N$ (Appendix~\ref{app:stability-bon-filled}).  All of these quantities are computable from unlabeled proxy scores.  Lying inside the checked window is necessary but not sufficient, because rules with equal KL can have different gains.

\paragraph{Pooled versus prompt-conditional optimization.}  The exponential-tilt target in the main theorem is a pooled finite-item tilt over the reference distribution supplied to the diagnostic.  If items are pairs $(x,y)$ with prompt $x$ and candidate response $y$, the pooled tilt
\[
\pi_t(x,y)\propto \pi_{\rm ref}(x,y)\exp\{t r(x,y)\}
\]
can change the prompt marginal because prompts with larger partition functions receive more mass.  Standard RLHF, reranking, and BoN deployments more often apply proxy pressure conditionally on each prompt,
\[
\pi_t(y\mid x)\propto \pi_{\rm ref}(y\mid x)\exp\{t r(x,y)\},\qquad \pi_t(x)=\pi_{\rm ref}(x),
\]
which equals the pooled tilt only when the prompt-level normalizer is constant across prompts or when the reference distribution is already defined on a fixed prompt-conditional slice.  The paper therefore treats the smooth pooled tilt as the theorem target and reports prompt-conditional BoN as a scope audit rather than as a direct theorem validation; Table~\ref{tab:prompt-conditional-bon-scope} and Figure~\ref{fig:prompt-conditional-bon-scope} quantify this distinction.

\subsection{Measurement aggregation toy theorem}
\label{app:aggregation}

Let $Y_{ij}=J_i+\eta_{ij}$ with $\eta_{ij}$ i.i.d.\ across $j$, independent of $J_i$, with $\E\eta=0$ and $\Var(\eta)=\tau^2$. Assume finite third moments and $\sigma_J^2+\tau^2>0$.  Let the $k$-rater proxy be $r_k=J+\bar\eta_k$, where $\bar\eta_k=k^{-1}\sum_{j=1}^k\eta_{ij}$.

\begin{proposition}[First and second moments under aggregation]
\label{prop:aggregation-A}
For this additive-noise model with $\sigma_J^2+\tau^2>0$,
\[
\sigma_{r_k}^2=\sigma_J^2+\tau^2/k,
\qquad
\Cov(J,r_k)=\sigma_J^2.
\]
Therefore
\[
A_k=\sqrt2\frac{\sigma_J^2}{\sqrt{\sigma_J^2+\tau^2/k}}
\]
is nondecreasing in $k$, strictly increasing when both $\tau^2>0$ and $\sigma_J^2>0$, and converges to $\sqrt2\,\sigma_J$.
\end{proposition}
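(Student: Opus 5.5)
The proof is a direct moment computation in the additive-noise model, followed by substitution into the alignment coefficient $A=\sqrt2\,\Cov(J,r)/\sigma_r$ of Theorem~\ref{thm:local} and an elementary monotonicity argument in $k$.

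\emph{Moments.} Write $r_k=J+\bar\eta_k$. The errors $\eta_{ij}$ are i.i.d. across $j$ with variance $\tau^2$, so $\Var(\bar\eta_k)=\tau^2/k$. Independence of $J$ and $\bar\eta_k$ gives $\Cov(J,\bar\eta_k)=0$. Hence
\[
\sigma_{r_k}^2=\sigma_J^2+\tau^2/k,
\qquad
\Cov(J,r_k)=\Var(J)+\Cov(J,\bar\eta_k)=\sigma_J^2 .
\]
This is exactly the proxy-error decomposition of Eq.~\eqref{eq:proxy-decomp} with $\varepsilon=\bar\eta_k$ and $\rho=0$. The assumption $\sigma_J^2+\tau^2>0$ guarantees $\sigma_{r_k}>0$ for every finite $k$, so $A_k$ is well defined. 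Finite third moments are not needed for $A$; they would only enter $B$.

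\emph{Formula for $A_k$.} Substituting these moments into the alignment coefficient gives
\[
A_k=\sqrt2\,\frac{\sigma_J^2}{\sqrt{\sigma_J^2+\tau^2/k}} .
\]

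\emph{Monotonicity and limit.} Treat $k$ as a continuous variable. The denominator $\sqrt{\sigma_J^2+\tau^2/k}$ is nonincreasing in $k$, and the numerator $\sigma_J^2\ge0$ does not depend on $k$, so $A_k$ is nondecreasing.

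\begin{itemize}
\item If $\tau^2>0$, the denominator is strictly decreasing. If in addition $\sigma_J^2>0$, the numerator is positive, so $A_k$ is strictly increasing.
\item If $\sigma_J^2=0$, then $A_k\equiv0$.
\item If $\tau^2=0$, then $A_k$ is constant in $k$.
\end{itemize}

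As $k\to\infty$, $\tau^2/k\to0$. When $\sigma_J>0$ this gives $A_k\to\sqrt2\,\sigma_J^2/\sigma_J=\sqrt2\,\sigma_J$. When $\sigma_J=0$ both sides equal $0$, since then $\tau^2>0$ and $A_k=0$ for every $k$.

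The only step needing care is the degenerate case $\sigma_J=0$: there the limit cannot be obtained by cancelling $\sigma_J$, and it must be checked directly as above. The rest of the argument is routine.
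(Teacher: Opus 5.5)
Your proof is correct; the paper states this proposition without a written proof, and your direct moment computation followed by substitution into $A=\sqrt2\,\Cov(J,r)/\sigma_r$ is the argument it takes as immediate. Your separate check of the degenerate case $\sigma_J=0$, where the limit cannot be obtained by cancelling $\sigma_J$, covers a point the statement leaves implicit.
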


\begin{proposition}[Third cumulants under aggregation]
\label{prop:aggregation-B}
Let $\kappa_J=\kappa_3(J,J,J)$ and $\kappa_\eta=\kappa_3(\eta,\eta,\eta)$.  Under the i.i.d.\ rater-noise assumption above,
\[
\kappa_3(J,r_k,r_k)=\kappa_J,
\qquad
\kappa_3(r_k,r_k,r_k)=\kappa_J+\kappa_\eta/k^2.
\]
Consequently
\[
B_k=\frac{\kappa_J}{\sigma_{r_k}^2}
-\frac{2\sigma_J^2(\kappa_J+\kappa_\eta/k^2)}{3\sigma_{r_k}^4},
\]
which need not improve monotonically with $k$.
\end{proposition}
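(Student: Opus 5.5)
The plan is a direct computation with multilinearity of joint cumulants. Write $r_k=J+\bar\eta_k$, where $\bar\eta_k$ is independent of $J$. Two facts do most of the work. First, a joint cumulant vanishes whenever its arguments split into two mutually independent groups. Second, cumulants are additive over independent summands. For cumulants of sums, I will use the standard rule that third cumulants of independent averages scale as $\kappa_3(\bar\eta_k)=k^{-2}\kappa_\eta$. This holds because $\kappa_3(\sum_j\eta_j/k)=k\cdot k^{-3}\kappa_\eta$, using independence of the $\eta_{ij}$ across $j$ and finite third moments.

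For the first identity, I expand $\kappa_3(J,r_k,r_k)$ by multilinearity in the second and third slots into four terms:
\[
\kappa_3(J,J,J),\quad 2\kappa_3(J,J,\bar\eta_k),\quad \kappa_3(J,\bar\eta_k,\bar\eta_k).
\]
Every term except the first contains both $J$ and $\bar\eta_k$ as arguments. Since $J$ and $\bar\eta_k$ are independent, those mixed cumulants vanish, leaving $\kappa_J$. An equivalent check is $\kappa_3(J,r,r)=\E[\bar J(r-\E r)^2]$. Expanding $(\bar J+\bar\eta_k)^2$ gives $\E[\bar J^3]+2\E[\bar J^2]\E[\bar\eta_k]+\E[\bar J]\E[\bar\eta_k^2]=\kappa_J$, because $\E\bar\eta_k=0$ and $\E\bar J=0$. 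For the second identity, additivity over the independent summands gives $\kappa_3(r_k)=\kappa_3(J)+\kappa_3(\bar\eta_k)=\kappa_J+\kappa_\eta/k^2$.

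Next I substitute into the coefficient formula of Theorem~\ref{thm:local}. From Proposition~\ref{prop:aggregation-A} we have $\Cov(J,r_k)=\sigma_J^2$ and $\sigma_{r_k}^2=\sigma_J^2+\tau^2/k$. This gives the displayed $B_k$, assuming $\sigma_{r_k}>0$, which follows from $\sigma_J^2+\tau^2>0$.

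The remaining claim, that $B_k$ "need not improve monotonically", is existential, so a counterexample or a structural argument suffices. I see this as the only mildly nontrivial step. First I would note the structure: as $k\to\infty$, $B_k\to\kappa_J/\sigma_J^2-\tfrac23\kappa_J/\sigma_J^2=\kappa_J/(3\sigma_J^2)$. Here the first term's denominator $\sigma_{r_k}^2$ shrinks with $k$, and the skewness correction has a $k$-dependence entering through both $\kappa_\eta/k^2$ and $\sigma_{r_k}^4$, so the two terms move at different rates.

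For the counterexample I would choose symmetric gold, $\kappa_J=0$, and skewed noise, $\kappa_\eta\ne0$. Then
\[
B_k=-\tfrac{2\sigma_J^2\kappa_\eta}{3k^2(\sigma_J^2+\tau^2/k)^2}=-\tfrac{2\sigma_J^2\kappa_\eta}{3(k\sigma_J^2+\tau^2)^2},
\]
which is monotone in magnitude. That choice fails, so I would instead take $\kappa_J\neq0$ and exhibit explicit numbers with a non-monotone sequence, or with a sign change relative to the limit. One way is to tune $\kappa_\eta$ so that $B_1$ equals the limit $\kappa_J/(3\sigma_J^2)$ while $B_2$ differs. Since "improve" is informal, concluding that $B_k$ is not a monotone function of $k$ toward its limit in general is sufficient.
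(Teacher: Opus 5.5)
Your proof is correct and is the direct multilinearity-and-independence computation that the paper leaves implicit; the paper states this proposition without a written proof, and $B_k$ then follows by substitution into Theorem~\ref{thm:local}. Your tuning idea for the non-monotonicity clause also goes through: with $\sigma_J^2>0$, $\tau^2>0$ and $\kappa_J\neq0$, the choice $\kappa_\eta=\kappa_J\tau^2(\sigma_J^2-\tau^2)/(2\sigma_J^4)$ gives $B_1=\kappa_J/(3\sigma_J^2)=\lim_k B_k$ and $B_2\neq B_1$, so the sequence $(B_k)$ cannot be monotone. Any real third cumulant is attainable at a fixed variance, so this choice of $\kappa_\eta$ is admissible.
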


The toy calculation matches the intervention experiments: rater aggregation reliably improves reliability and $A$, but local headroom also depends on curvature.

\subsection{ICC random-effects clarification}
\label{app:icc-random-effects}

We follow the standard intraclass-correlation conventions of \citet{shrout1979intraclass} and \citet{mcgraw1996forming}.  A standard random-effects model writes
\[
Y_{ij}=\mu+U_i+V_j+W_{ij},
\]
where $U_i\sim(0,\sigma_U^2)$ is item-level quality, $V_j\sim(0,\sigma_V^2)$ is rater bias, and $W_{ij}\sim(0,\sigma_W^2)$ is residual interaction/noise.  The three components are mutually independent.  Two single-rater reliability conventions appear in the body:
\[
\mathrm{ICC}(A,1)=\frac{\sigma_U^2}{\sigma_U^2+\sigma_V^2+\sigma_W^2},
\qquad
\mathrm{ICC}(C,1)=\frac{\sigma_U^2}{\sigma_U^2+\sigma_W^2},
\]
that is, ICC(A,1) is the absolute-agreement form (treating $V_j$ as variance to be explained), and ICC(C,1) is the consistency form (rater bias absorbed/conditioned out).  The corresponding $k$-rater versions replace $\sigma_W^2$ by $\sigma_W^2/k$ (and, for ICC(A,1), $\sigma_V^2$ by $\sigma_V^2/k$ when raters are crossed and averaged).  Thus, schematically,
\[
\mathrm{ICC}(C,k)=\frac{\sigma_U^2}{\sigma_U^2+\sigma_W^2/k},
\qquad
\mathrm{ICC}(A,k)=\frac{\sigma_U^2}{\sigma_U^2+\sigma_V^2/k+\sigma_W^2/k}.
\]
We use $\nu_k^2\approx\mathrm{Rel}_k$ as a generic reliability anchor.  In concrete tables, $\mathrm{Rel}_k$ is taken to be the reported ICC variant.  ICC(A,1) is the default absolute-agreement reliability anchor.  ICC(C,1) is reported separately to diagnose rater-bias or personalization gaps, not as the default $\Anu$ estimator.

In the prospective direct-proxy tables, the operational reliability-only baseline is the plug-in surrogate
\[
\Anu=\sqrt2\,\hat\sigma_J(\hat\sigma_J/\hat\sigma_r),
\]
computed on the calibration split for each proxy--gold path.  This is the empirical analogue of $\nu^2=\sigma_J^2/\sigma_r^2$ in the simple noise-only model $r=J+\eta$ with $\eta\perp J$, which corresponds to ICC(C,1) after identifying $\sigma_U^2$ with $\sigma_J^2$ and $\sigma_W^2$ with $\Var(\eta)$.  When an ICC estimate is converted to a reliability amplitude, we use the square root of its nonnegative part.

\subsection{Qualitative coefficient readout}
\label{app:regime-readout}

Table~\ref{tab:regime-main} lists qualitative interpretations of the local coefficient law in Eq.~\eqref{eq:local-law} for representative coefficient patterns.  The readout is local.  Finite-range deployment still requires the reachability, tilted-variance, and ESS checks of Section~\ref{sec:protocol}.

\begin{table}[tbp]
\centering
\small
\caption{Qualitative readout of the local coefficient law.  The table is only a local interpretation of Eq.~\eqref{eq:local-law}.  Finite-range deployment requires the reachability, tilted-variance, and ESS checks of Section~\ref{sec:protocol}.}
\label{tab:regime-main}
\begin{adjustbox}{max width=\textwidth}
\begin{tabular}{lll}
\toprule
Coefficient pattern & Local gain-curve interpretation & Practical implication \\
\midrule
$A>0,\ B<0$ & initial human gain with leading saturation & optimize only within a checked local window \\
$A>0,\ B\ge0$ & initially favorable proxy pressure & still verify finite-range tail behavior \\
$A\approx0$ & weak first-order alignment & sign claims need a margin or more calibration \\
$A<0$ & proxy pressure initially lowers human gold & do not use this proxy direction without redesign \\
large $|B|$ & strong tail or skewness effect & inspect selected items and reachable radii \\
\bottomrule
\end{tabular}
\end{adjustbox}
\end{table}

\FloatBarrier
\section{Experimental details and additional results}
\label{app:experiments}
\setlength{\floatsep}{6pt plus 2pt minus 2pt}
\setlength{\textfloatsep}{8pt plus 2pt minus 2pt}
\setlength{\intextsep}{8pt plus 2pt minus 2pt}

This appendix gives the experiment-level details behind the main tables and figures.  The goal is not to add new headline claims, but to make the repeated-rating construction, admissibility filters, robustness checks, and failure analyses inspectable.

\subsection{Reproducibility package and asset provenance}
\label{app:repro-assets}

Code, prepared inputs, and derived results are released at \url{https://github.com/pacifiic/local-proxy-diagnostics}.  The release regenerates the tables and figures of this paper from saved results; large files are stored compressed with recorded checksums, and a verification script checks the saved numerical outputs.  Reproducibility differs by experiment, and we distinguish three levels.
(i) \emph{GPT-4o judge.}  The released aggregate proxy scores reproduce all downstream analyses, but not the original API calls: an exact rerun would require raw per-call logs with the model snapshot, decoding parameters, and invalid-output and retry counts, which the aggregate does not contain (Appendix~\ref{app:gpt-repro}).
(ii) \emph{Open reward models and open judge} (Appendices~\ref{app:open-proxies} and~\ref{app:rm-example}).  The scored files reproduce all downstream analyses, and the model identifiers, resolved revisions, and scoring settings are recorded; we do not claim byte-level replay of the upstream scoring runs.
(iii) \emph{LoRA-DPO bridge} (Appendix~\ref{app:dpo}).  The pinned candidate pools and scores, code, seed schedule, and derived summaries are released, but the LoRA weights were not retained.

Large public source datasets are not redistributed when their terms require users to obtain them from the original providers; the release lists the download commands, pinned source revisions, and preparation steps.  All resampling uses explicit seeds.  Each experiment family records its numerical environment.  Results are reproduced in that environment; for example, the cross-validation tie order of a few DPO comparators can change selected hyperparameters on other hardware (Appendix~\ref{app:dpo}).

\paragraph{Compute.}  The experimental node had two Intel Xeon 6960P processors (144 physical cores / 288 threads), approximately 2.2 TiB of system RAM, and eight NVIDIA B200 GPUs (approximately 179 GiB of device memory each).  Human-rating analyses, including flexible calibration, coefficient stability, and selection-rule audits, are CPU-only.  The logged endpoint, transfer, and validation reproduction took approximately 4.25 wall-clock hours, with another 0.5 hours for the robustness checks of Appendix~\ref{app:accept-boost}.  These mostly single-process Python jobs used about 5--10 CPU-hours in total and less than 32 GB of RAM.  Candidate generation, reward-model and judge scoring, and LoRA-DPO training used one GPU per worker, allowing independent jobs to run in parallel.  Each of the three reward-model scorers in Appendix~\ref{app:rm-example} used one B200.

\paragraph{GPU runtimes.}  Generating the shared 17,600 DPO candidates with Qwen2.5-3B-Instruct took approximately 3.5 minutes on one GPU (0.058 GPU-hours).  The three-seed DPO sweep used six separately trained step budgets per seed (64, 90, 120, 150, 180, and 200), totaling 1.216 GPU-hours of training-loop time across 18 jobs, including those later excluded by the KL gate.  This excludes model loading and likelihood-ratio evaluation.  The five-pass Qwen2.5-32B-Instruct judge run produced 32,000 SummEval judgments in approximately 18.5 minutes on one GPU (0.309 GPU-hours, tensor parallelism 1).  Generation and judge timings include engine initialization and shutdown.  These logged timings cover the stated stages, rather than the full project GPU cost.

Table~\ref{tab:asset-provenance} lists the source, license or access terms where verified, and redistribution status for each external asset; when we could not verify a public license page for the source used here, the table marks the terms as not specified.

\begin{table}[H]
\centering
\scriptsize
\caption{Asset provenance, license/terms, and redistribution status.  Raw external datasets are cited in the paper and obtained from their original sources unless the reproducibility package explicitly includes a derived data product.  Users should follow the source license or access terms for each external asset.}
\label{tab:asset-provenance}
\begin{adjustbox}{max width=\textwidth}
\begin{tabular}{@{}>{\raggedright\arraybackslash}p{0.19\textwidth}>{\raggedright\arraybackslash}p{0.17\textwidth}>{\raggedright\arraybackslash}p{0.22\textwidth}>{\raggedright\arraybackslash}p{0.19\textwidth}>{\raggedright\arraybackslash}p{0.17\textwidth}@{}}
\toprule
Asset & Role in paper & Source or pinned access & License / terms & Redistribution status \\
\midrule
SummEval annotations & main repeated-rating slice & public SummEval source used by parser & MIT & raw data not redistributed \\
WMT MQM resources & eligible MQM repeated-rating clusters & \href{https://github.com/google/wmt-mqm-human-evaluation}{official repository}; revision in package notes & Apache-2.0 & raw data not redistributed \\
OpenMEVA-MAGS ROC/WP & story-generation repeated-rating slice & pinned source revision listed in package notes & license/terms not specified & raw data prepared locally \\
QGEval human files & direct coefficient-law check & user-supplied raw spreadsheets & license/terms not specified & raw data not redistributed \\
RewardBench margins & auxiliary geometry check & pinned public source revision listed in package notes & repo Apache-2.0; dataset card ODC-BY & generated locally from source \\
MT-Bench judgments & auxiliary split-half BT check & pinned public source revision listed in package notes & FastChat repo Apache-2.0; dataset card CC-BY-4.0 & generated locally from source \\
GPT-4o aggregate proxy scores & external-proxy experiment & derived aggregate included in the release & authors' derived artifact; model-service terms apply & derived artifact included \\
Skywork-Reward-Llama-3.1-8B-v0.2 & open reward-model proxy; DPO preference labels & Hugging Face, pinned revision & Skywork Community License (Llama~3.1 base) & weights not redistributed; derived scores included \\
ArmoRM-Llama3-8B-v0.1 & reward-model proxy in the three-model evaluation & Hugging Face, pinned revision & Llama~3 license & weights not redistributed; derived scores included \\
OpenAssistant reward-model-deberta-v3-large-v2 & reward-model proxy; DPO silver target & Hugging Face, pinned revision & MIT & weights not redistributed; derived scores included \\
Qwen2.5-32B-Instruct & open judge proxy & Hugging Face, pinned revision & Apache-2.0 & weights not redistributed; derived scores included \\
Qwen2.5-3B-Instruct & DPO reference and trained policy & Hugging Face & Qwen Research License & weights not redistributed; LoRA weights not retained \\
UltraFeedback (binarized) & DPO prompts & Hugging Face, pinned revision & MIT & prompt subset and derived candidate pools included \\
Analysis code, tables, and figures & reproduction and inspection & public code release & authors' release (MIT) & included with the release \\
\bottomrule
\end{tabular}
\end{adjustbox}
\end{table}

\subsection{Datasets, repeated-rating construction, and admissibility}
\label{app:data-admissibility}

The clean theorem-validation experiments require repeated ratings on the same item.  For an item $i$ and held-out rater $j$, the proxy is $r_i=Y_{ij}$ and the gold target is the leave-one-rater-out mean $J_i=\frac{1}{m_i-1}\sum_{j'\ne j}Y_{ij'}$.  This rule is deliberately restrictive: splits without repeated segment--system ratings are excluded because they cannot support a human proxy and a disjoint human gold signal on the same item.  Table~\ref{tab:protocol-main} summarizes the analysis-eligible slices, which retain only the systems, language pairs, story sources, raters, and dimensions that support finite-item path construction, leave-one-rater-out or disjoint human gold, and repeated calibration/evaluation resampling.  The 11 skipped SummEval attempts are finite-sample path failures, not post-hoc outcome filtering.

\begin{table}[H]
\centering
\scriptsize
\setlength{\tabcolsep}{4pt}
\renewcommand{\arraystretch}{1.08}
\caption{Experimental protocol and split accounting for the endpoint and transfer analyses.  Paths are proxy--gold tilt paths, and valid attempts are calibration/evaluation resamples used for prospective prediction at $d=0.10$.  Human-proxy rows use repeated ratings to construct both the proxy view and the human target, and the GPT-4o row uses an external judge score as the proxy and repeated SummEval human ratings as the target.  Labels name analysis-eligible slices or transfer routes, not entire source benchmarks.}
\label{tab:protocol-main}
\begin{adjustbox}{max width=\textwidth}
\begin{tabular}{@{}>{\raggedright\arraybackslash}p{0.24\textwidth}>{\raggedright\arraybackslash}p{0.20\textwidth}>{\raggedright\arraybackslash}p{0.31\textwidth}>{\raggedright\arraybackslash}p{0.22\textwidth}@{}}
\toprule
Source and role & Paths / valid attempts & Proxy and human target & Split or transform \\
\midrule
SummEval (prospective) & 32 paths, 1,589/1,600 valid & held-out human rater $\rightarrow$ leave-one-rater-out mean & 30\% item calibration. Native units \\
QGEval-human (validation) & 63 eligible paths & held-out human rater $\rightarrow$ LOO human mean & Raw scores; same-item direct validation \\
OpenMEVA-MAGS ROC/WP \newline(prospective + validation) & 10 prospective paths, 500/500 valid. 15 direct paths & held-out human rater $\rightarrow$ disjoint human aggregate & 30\% item calibration. Prospective: native units \\
WMT MQM/pSQM \newline(prospective + validation) & 141 MQM prospective paths, 7,050/7,050 valid. 73 direct paths & held-out rater $\rightarrow$ LOO segment--system mean & Repeated-rating filter. Prospective: native units \\
GPT-4o judge proxy \newline(external proxy) & 4 paths, 200/200 valid & GPT-4o judge score $\rightarrow$ SummEval human aggregate & 30\% item calibration. Native units \\
OOD transfer (cross-slice) & three source--target routes & source coefficient prior $\rightarrow$ target human gold & 30\% target calibration. \texttt{z\_cal} \\
\bottomrule
\end{tabular}
\end{adjustbox}
\end{table}

\paragraph{Pairwise-preference data.}  Pairwise-preference datasets do not directly satisfy this requirement.  They provide comparisons rather than repeated scalar observations of the same item-level quality, so using them would require an additional utility model, such as a Bradley--Terry or Thurstone construction, before $J_i$ and $r_i$ can be put on a scalar finite-item tilt path.  Such utilities are useful engineering summaries, but they add structural assumptions that are outside the nonparametric repeated-rating validation protocol.  We therefore use pairwise-derived resources only as auxiliary geometry checks unless repeated scalar human ratings are available.

\paragraph{SummEval.} SummEval \citep{fabbri2021summeval} contains 1,600 evaluated summaries, four dimensions, and two rater pools: five crowd raters and three expert raters.  Table~\ref{tab:summeval-icc-detail} reports the ICC-style reliability decomposition used as the verifiability anchor.  Reliability varies sharply by dimension and pool.  Expert reliability is much higher than crowd reliability for consistency and fluency, moderately higher for coherence, and slightly lower for relevance under ICC(A,1).  This heterogeneity is why the paper treats verifiability as an observation-process quantity rather than as a benchmark constant.

\begin{table}[tbp]
\centering
\scriptsize
\caption{SummEval reliability and score statistics.  $G_{\rm pers}$ is the gap between consistency-style and absolute-agreement reliability.}
\label{tab:summeval-icc-detail}
\begin{adjustbox}{max width=\textwidth}
\begin{tabular}{llrrrrrrr}
\toprule
Dimension & Pool & Items & Raters & ICC(A,1) & ICC(C,1) & $G_{\rm pers}$ & Mean & Std. \\
\midrule
coherence & all & 1600 & 8 & 0.228 & 0.246 & 0.0178 & 3.662 & 1.070 \\
coherence & crowd & 1600 & 5 & 0.494 & 0.494 & 0.0000 & 3.811 & 0.926 \\
coherence & expert & 1600 & 3 & 0.573 & 0.631 & 0.0579 & 3.412 & 1.234 \\
consistency & all & 1600 & 8 & 0.228 & 0.276 & 0.0474 & 4.136 & 1.037 \\
consistency & crowd & 1600 & 5 & 0.495 & 0.495 & 0.0001 & 3.822 & 0.956 \\
consistency & expert & 1600 & 3 & 0.899 & 0.902 & 0.0027 & 4.660 & 0.950 \\
fluency & all & 1600 & 8 & 0.208 & 0.274 & 0.0660 & 4.075 & 0.997 \\
fluency & crowd & 1600 & 5 & 0.483 & 0.483 & 0.0000 & 3.716 & 0.925 \\
fluency & expert & 1600 & 3 & 0.727 & 0.734 & 0.0069 & 4.673 & 0.806 \\
relevance & all & 1600 & 8 & 0.235 & 0.244 & 0.0093 & 3.887 & 0.930 \\
relevance & crowd & 1600 & 5 & 0.478 & 0.478 & -0.0001 & 3.953 & 0.878 \\
relevance & expert & 1600 & 3 & 0.468 & 0.512 & 0.0437 & 3.777 & 1.001 \\
\bottomrule
\end{tabular}
\end{adjustbox}
\end{table}

\begin{table}[tbp]
\centering
\small
\caption{Expert--crowd reliability lift on SummEval.  The expert lift is dimension-dependent rather than uniform.}
\label{tab:expert-crowd-lift-detail}
\begin{adjustbox}{max width=0.88\textwidth}
\begin{tabular}{lrrrrr}
\toprule
Dimension & crowd ICC(A,1) & expert ICC(A,1) & lift & CI lower & CI upper \\
\midrule
coherence & 0.494 & 0.573 & 0.079 & 0.036 & 0.121 \\
consistency & 0.495 & 0.899 & 0.405 & 0.362 & 0.446 \\
fluency & 0.483 & 0.727 & 0.244 & 0.189 & 0.294 \\
relevance & 0.478 & 0.468 & -0.010 & -0.056 & 0.034 \\
\bottomrule
\end{tabular}
\end{adjustbox}
\end{table}

\begin{figure}[tbp]
\centering
\includegraphics[width=0.72\linewidth]{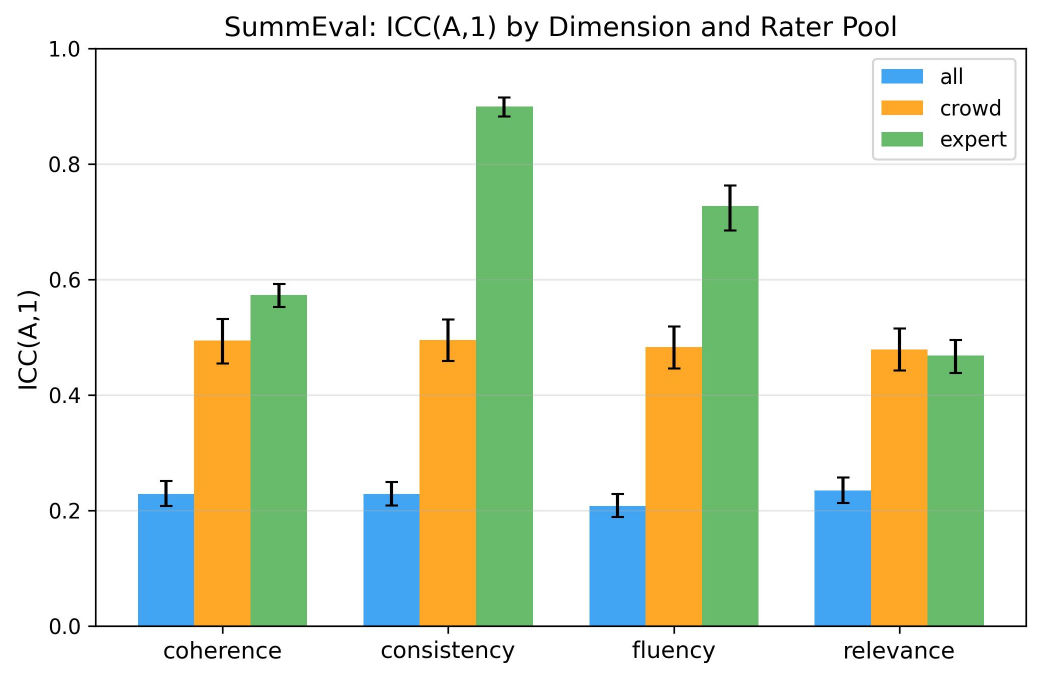}
\caption{SummEval reliability by dimension and rater pool.  The plot visualizes Table~\ref{tab:summeval-icc-detail}: reliability is a property of the observation protocol, not a single scalar attached to the benchmark.  Error bars, where shown, are bootstrap intervals for the ICC estimate under the split construction.}
\label{fig:icc-app-filled}
\end{figure}

\begin{figure}[tbp]
\centering
\includegraphics[width=0.72\linewidth]{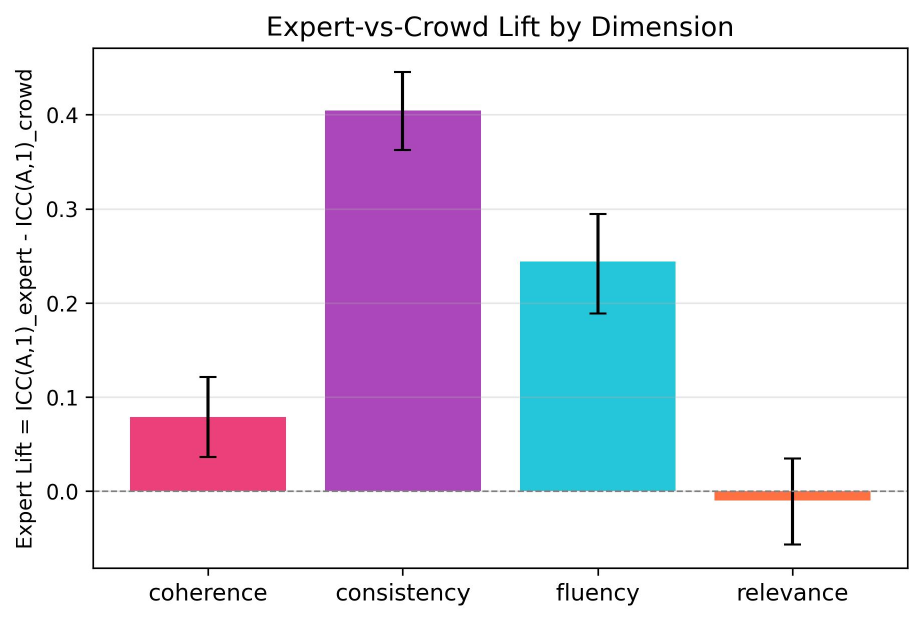}
\caption{Expert--crowd reliability lift.  Consistency and fluency show large measurement gains from expert raters.  Relevance does not, which motivates separating measurement improvement from preference heterogeneity.  Error bars show bootstrap intervals for the expert-minus-crowd ICC difference.}
\label{fig:expert-lift-app}
\end{figure}

\paragraph{WMT MQM/pSQM admissibility.} The raw WMT inventory \citep{freitag2021large,lommel2014mqm} contains many one-rating-per-item splits.  These are useful for benchmark evaluation, but not for our direct human tilt construction.  Table~\ref{tab:wmt-admissible-detail} lists the WMT splits that pass the repeated-rating filter and therefore enter the direct-tilt analysis.  For MQM, 8 of 19 inspected splits provide repeated ratings and yield 61 held-out-rater paths.  For pSQM, both inspected 2020 language-pair splits pass the filter and yield 12 paths.  The main prospective WMT row uses a stricter prepared MQM cluster matrix with 47 complete rater-set clusters and 141 held-out-rater paths.  It should be read as eligible WMT MQM clusters, not all WMT MQM.

\paragraph{WMT preprocessing correction.} The prospective cluster preparation treats quotation marks literally in the tab-delimited source files and includes document IDs in segment keys, which prevents dropped or merged records and collisions between document-local segment IDs.  An earlier preparation lacked both safeguards and produced 45 clusters and 139 paths instead of 47 and 141; every WMT result in this paper uses the corrected input, which lowers the endpoint $A{+}B$ MAE from 0.0958 to 0.0846 and preserves the ordering of the predictors.  The separate raw-data direct-validation loader already used document-aware keys and literal-tab parsing, so its 61 MQM and 12 pSQM paths are unaffected.  The two pipelines also use different scoring conventions: the prepared clusters use severity weights, whereas raw MQM validation additionally distinguishes error categories.  Their path counts therefore describe different analysis slices.

\begin{table}[H]
\centering
\scriptsize
\caption{WMT repeated-rating splits eligible to the direct-tilt analysis.  Splits with one rating per segment--system item are excluded by construction.}
\label{tab:wmt-admissible-detail}
\begin{adjustbox}{max width=\textwidth}
\begin{tabular}{llrrrrrr}
\toprule
Family & Split & Paths & corr($A$) & corr($B$) & sign agree & median items & ICC(1,1) \\
\midrule
MQM & \texttt{generalMT2022\_\_ende\_\_mqm\_\_3r} & 7 & 0.9999 & 0.9627 & 7/7 & 8460 & 0.488 \\
MQM & \texttt{generalMT2022\_\_enzh\_\_mqm\_\_raw} & 6 & 0.9999 & 0.9460 & 6/6 & 13240 & 0.417 \\
MQM & \texttt{generalMT2023\_\_ende\_\_mqm\_\_3r} & 10 & 0.9995 & 0.9606 & 10/10 & 1755 & 0.638 \\
MQM & \texttt{generalMT2023\_\_ende\_\_mqm\_\_sxs\_3r} & 10 & 0.9998 & 0.9564 & 10/10 & 310 & 0.499 \\
MQM & \texttt{generalMT2023\_\_zhen\_\_mqm\_\_3r} & 8 & 0.9993 & 0.9061 & 8/8 & 7056 & 0.579 \\
MQM & \texttt{generalMT2023\_\_zhen\_\_mqm\_\_sxs\_3r} & 8 & 0.9998 & 0.9769 & 8/8 & 1480 & 0.457 \\
MQM & \texttt{newstest2020\_\_ende\_\_mqm\_\_raw} & 6 & 0.9969 & 0.9107 & 6/6 & 7110 & 0.410 \\
MQM & \texttt{newstest2020\_\_zhen\_\_mqm\_\_raw} & 6 & 1.0000 & 0.9783 & 6/6 & 9945 & 0.556 \\
\midrule
pSQM & \texttt{newstest2020\_\_ende\_\_psqm\_\_raw} & 6 & 1.0000 & 0.9237 & 6/6 & 7030 & 0.535 \\
pSQM & \texttt{newstest2020\_\_zhen\_\_psqm\_\_raw} & 6 & 1.0000 & 0.9997 & 6/6 & 9925 & 0.098 \\
\bottomrule
\end{tabular}
\end{adjustbox}
\end{table}

\paragraph{OpenMEVA, QGEval, and auxiliary sources.} For OpenMEVA-MAGS ROC/WP annotations \citep{guan2021openmeva}, we use only the manually annotated human-rating files because the diagnostic needs repeated scalar ratings on the same story--model item.  Other OpenMEVA benchmark components are outside this analysis.  Within OpenMEVA, the prospective, OOD-transfer, and calibration-size analyses use the 10-path ROC/WP slice, while the direct-validation appendix tables report pooled, ROC, and WP scalar-rating paths where noted.  QGEval-human \citep{fu2024qgeval} is used only where repeated human dimensions permit direct coefficient validation.  RewardBench and MT-Bench are auxiliary geometry checks: RewardBench uses cross-model score margins, and MT-Bench uses split-half Bradley--Terry utilities rather than repeated scalar human ratings.

\begin{figure}[H]
\centering
\includegraphics[width=0.90\linewidth]{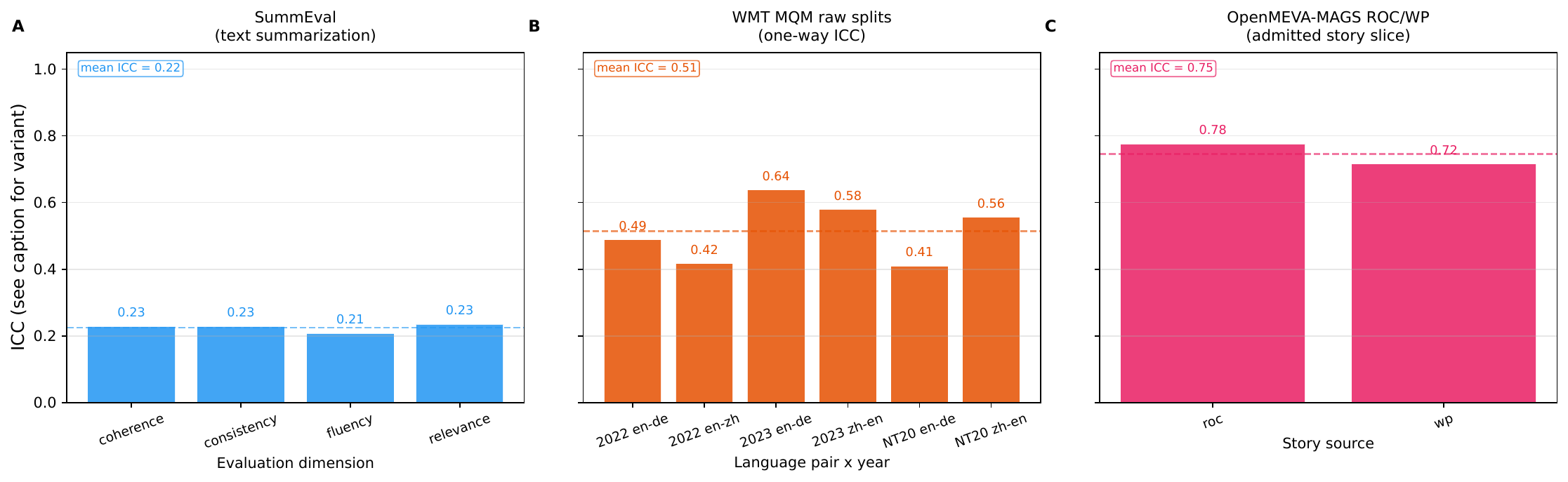}
\caption{ICC reliability profiles across the repeated-rating analysis slices.  Panel A shows SummEval dimension-level ICC(A,1), with mean ICC $\approx0.22$.  Panel B shows one-way ICC(1,1) for eligible raw WMT MQM splits, with mean ICC $\approx0.51$; these are the direct-validation splits, not the prepared prospective clusters.  Panel C shows OpenMEVA-MAGS reliability, with mean ICC $\approx0.75$.  The dashed line in each panel marks the slice mean.}
\label{fig:dataset-replication-icc}
\end{figure}

\subsection{Direct coefficient-law validation and robustness}
\label{app:direct-details}

The main text reports pooled direct-tilt validation.  This subsection expands the direct SummEval and robustness results.  Table~\ref{tab:direct-validation} gives the rendered multi-dataset validation row that supports the direct-validation claim, including the QGEval-human repeated scalar-rating slice.  For OpenMEVA-MAGS, the direct-validation row pools the pooled, ROC, and WP scalar-rating splits, with 5 paths from each split for 15 total paths.  In the default SummEval LOO construction, all 32 held-out-rater paths have negative theoretical and fitted $B$, with mean local $R^2$ essentially equal to one.  The strongest expert dimensions, consistency and fluency, also have the most negative curvature, so improved measurement does not automatically imply larger local headroom.

\begin{table}[tbp]
\centering
\scriptsize
\caption{Human direct-tilt validation of the local coefficient law. All fits use exact exponential tilt and LOO/disjoint human gold where applicable. The QGEval-human row is the direct scalar-human validation slice listed in Table~\ref{tab:protocol-main}.}
\label{tab:direct-validation}
\begin{adjustbox}{max width=\textwidth}
\begin{tabular}{llrrrrrrr}
\toprule
Dataset & Setting & Paths & corr($A$) & corr($B$) & sign($B$) agree & $B_{\rm th}<0$ & $B_{\rm fit}<0$ & median $R^2$ \\
\midrule
SummEval & human scalar LOO & 32 & 0.9997 & 0.9934 & 32/32 & 100.0\% & 100.0\% & 1.000000 \\
QGEval-human & admitted human scalar paths & 63 & 0.9998 & 0.9302 & 63/63 & 98.4\% & 98.4\% & 0.999874 \\
OpenMEVA-MAGS pooled/ROC/WP & admitted human scalar paths & 15 & 1.0000 & 0.9975 & 13/15 & 13.3\% & 26.7\% & 1.000000 \\
WMT MQM admitted paths & human scalar LOO & 61 & 0.9999 & 0.9816 & 61/61 & 100.0\% & 100.0\% & 0.999999 \\
WMT pSQM & human scalar LOO & 12 & 1.0000 & 0.9928 & 12/12 & 91.7\% & 91.7\% & 1.000000 \\
\bottomrule
\end{tabular}
\end{adjustbox}
\end{table}
\space

\begin{figure}[tbp]
\centering
\includegraphics[width=0.72\linewidth]{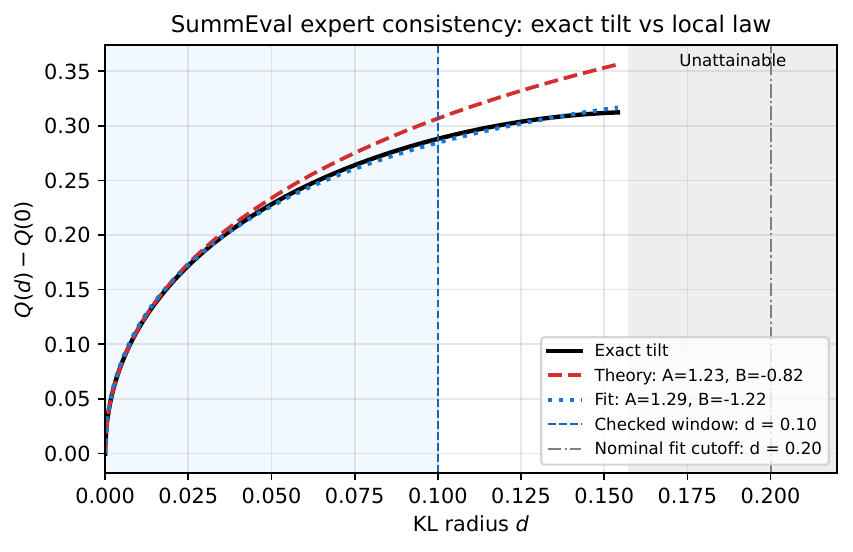}
\caption{SummEval exact-tilt validation example.  One human rater defines the proxy and LOO ratings define $J$. The checked window is shaded through $d=0.10$; the dash-dot line marks the nominal fitting cutoff $d=0.20$. Curves stop at the computed exact path near $d=0.15$, and the unattainable region is shaded gray.}
\label{fig:summeval-tilt-app}
\end{figure}

\begin{table}[tbp]
\centering
\scriptsize
\caption{SummEval LOO direct-tilt coefficient summary by rater pool and dimension at $d\le0.1$.}
\label{tab:summeval-loo-by-group}
\begin{adjustbox}{max width=\textwidth}
\begin{tabular}{llrrrrrrr}
\toprule
Pool & Dimension & Raters & mean $A_{\rm th}$ & mean $B_{\rm th}$ & mean $A_{\rm fit}$ & mean $B_{\rm fit}$ & mean $\rho$ & mean $\sigma_J/\sigma_r$ \\
\midrule
crowd & coherence & 5 & 0.647 & -0.091 & 0.647 & -0.096 & -0.108 & 0.788 \\
crowd & consistency & 5 & 0.669 & -0.044 & 0.671 & -0.070 & -0.115 & 0.788 \\
crowd & fluency & 5 & 0.632 & -0.061 & 0.634 & -0.078 & -0.111 & 0.783 \\
crowd & relevance & 5 & 0.594 & -0.079 & 0.596 & -0.098 & -0.100 & 0.781 \\
expert & coherence & 3 & 1.090 & -0.094 & 1.093 & -0.130 & -0.264 & 0.953 \\
expert & consistency & 3 & 1.221 & -0.812 & 1.250 & -1.093 & -0.044 & 0.991 \\
expert & fluency & 3 & 0.840 & -0.630 & 0.851 & -0.745 & -0.086 & 0.942 \\
expert & relevance & 3 & 0.706 & -0.208 & 0.706 & -0.211 & -0.230 & 0.886 \\
\bottomrule
\end{tabular}
\end{adjustbox}
\end{table}

The coefficient recovery is not a fragile consequence of choosing $d\le0.1$.  Table~\ref{tab:fit-range-sensitivity} shows the exact-tilt validation over wider fit windows on the eight SummEval pool--dimension groups.  Using reachable points below each nominal cutoff, the $A$ correlation is at least $0.9986$ and the $B$ correlation exceeds $0.982$, with $8/8$ sign agreement. Only 2 of 32 paths can reach $d=2.0$. The main window $d\le0.1$ lies below the atomless $N=2$ BoN radius; tied finite pools can have smaller radii.

\begin{table}[tbp]
\centering
\small
\caption{SummEval fit-range sensitivity for direct exact-tilt validation.}
\label{tab:fit-range-sensitivity}
\begin{adjustbox}{max width=0.82\textwidth}
\begin{tabular}{rrrrr}
\toprule
Fit cutoff $D$ & corr($A$) & corr($B$) & sign agree & mean $R^2$ \\
\midrule
0.05 & 1.0000 & 0.9983 & 8/8 & 0.999999 \\
0.10 & 0.9997 & 0.9953 & 8/8 & 0.999993 \\
0.20 & 0.9986 & 0.9909 & 8/8 & 0.999968 \\
0.50 & 0.9991 & 0.9891 & 8/8 & 0.999962 \\
1.00 & 0.9995 & 0.9846 & 8/8 & 0.999933 \\
2.00 & 0.9990 & 0.9825 & 8/8 & 0.999912 \\
\bottomrule
\end{tabular}
\end{adjustbox}
\end{table}

The default estimator is also robust to score standardization and to using the median of the other raters as gold.  Table~\ref{tab:estimator-sensitivity} fixes the fit window at $d\le0.1$.  Same-mean gold is included only as a mechanical sanity check.  It is not the preferred validation because it reuses the same mean structure on both sides.  The LOO and median-other variants give nearly identical conclusions.

\begin{table}[tbp]
\centering
\scriptsize
\caption{Estimator sensitivity at $d\le0.1$ on SummEval.}
\label{tab:estimator-sensitivity}
\begin{adjustbox}{max width=\textwidth}
\begin{tabular}{llrrrrrr}
\toprule
Score transform & Gold estimator & paths & corr($A$) & corr($B$) & sign match & $B_{\rm th}<0$ & mean $R^2$ \\
\midrule
rater-$z$ & LOO mean & 32 & 0.9998 & 0.9936 & 100\% & 100\% & 0.999995 \\
rater-$z$ & median others & 32 & 0.9998 & 0.9935 & 100\% & 100\% & 0.999995 \\
rater-$z$ & same mean & 32 & 0.9997 & 0.9955 & 100\% & 100\% & 0.999995 \\
raw & LOO mean & 32 & 0.9997 & 0.9934 & 100\% & 100\% & 0.999995 \\
raw & median others & 32 & 0.9997 & 0.9933 & 100\% & 100\% & 0.999995 \\
raw & same mean & 32 & 0.9995 & 0.9954 & 100\% & 100\% & 0.999995 \\
\bottomrule
\end{tabular}
\end{adjustbox}
\end{table}

\paragraph{Ordinal-artifact check.} A possible concern is that discretizing continuous scores into a small number of rating levels could itself induce negative $B$.  The ordinal simulation does not support this as the main explanation.  Across 5-point and 9-point synthetic scales with noise standard deviations from 0.25 to 1.25, the continuous baseline keeps $\Pr(B_{\rm fit}<0)$ near one half, whereas the ordinal transformation raises it only to about $0.49$--$0.68$, depending on scale and noise.  Thus, discretization can shift curvature modestly, but the effect is far weaker than the near-universal negative-$B$ pattern in SummEval, QGEval-human, and eligible WMT MQM paths.

\subsection{Cross-slice and feedback-geometry extensions}
\label{app:feedback-extensions-filled}

The main text summarizes the repeated scalar human-rating validation as the cleanest theorem-checking regime.  Table~\ref{tab:feedback-extensions-filled} records broader extensions that are useful but should not be read as equally controlled theorem validation.  MT-Bench \citep{zheng2023judging} uses split-half Bradley--Terry utilities from pairwise judgments \citep{bradley1952rank}.  QGEval metric-proxy paths use automatic metrics as $r$ against human gold \citep{fu2024qgeval}.  RewardBench uses a pinned recursive continuous score-margin snapshot from \texttt{allenai/reward-bench-results} rather than human repeated ratings \citep{lambert2025rewardbench}.  For RewardBench we use pairwise-available rows, so each model path uses the score-margin items available for that model instead of dropping a whole subset when another model is missing a margin.  The common pattern is that coefficient recovery remains strong, but the sign regime is more heterogeneous outside the repeated scalar human setup.

\begin{table}[tbp]
\centering
\small
\caption{Feedback-geometry and proxy-regime extensions.  These are breadth checks, not the main human direct-rating validation regime.}
\label{tab:feedback-extensions-filled}
\begin{adjustbox}{max width=\textwidth}
\begin{tabular}{llrrrrrr}
\toprule
Dataset & Setting & Paths & corr($A$) & corr($B$) & sign agree & $B_{\rm th}<0$ & median $R^2$ \\
\midrule
MT-Bench BT split-half & split-half BT utilities & 200 & 0.9999 & 0.9831 & 186/200 & 93.0\% & 1.0000 \\
QGEval metrics & automatic metrics vs human gold & 315 & 0.9991 & 0.6915 & 301/315 & 77.5\% & 1.0000 \\
RewardBench margins & cross-model margin geometry & 2515 & 0.9999 & 0.9585 & 2359/2515 & 45.7\% & 1.0000 \\
\bottomrule
\end{tabular}
\end{adjustbox}
\end{table}

OpenMEVA-MAGS \citep{guan2021openmeva} is the clearest human-rating heterogeneity case among the eligible story-generation annotations.  The pooled split has 4/5 sign agreement with no negative theoretical $B$.  In the ROC split all 5 paths have non-negative theoretical and fitted $B$, while the WP split is mixed: $2/5$ theoretical paths and $3/5$ fitted paths have negative $B$.  This supports the paper's central wording: the coefficient law transfers, but the local sign regime is empirical.

\begin{table}[tbp]
\centering
\small
\caption{OpenMEVA-MAGS pooled/ROC/WP direct-tilt summary by split.  The table does not cover the full OpenMEVA benchmark suite.}
\label{tab:openmeva-split-detail}
\begin{adjustbox}{max width=0.9\textwidth}
\begin{tabular}{lrrrrrr}
\toprule
Split & Paths & corr($A$) & corr($B$) & sign agree & $B_{\rm th}<0$ & median $R^2$ \\
\midrule
pooled & 5 & 1.0000 & 0.9993 & 4/5 & 0.0\% & 1.0000 \\
ROC & 5 & 0.9996 & 0.9411 & 5/5 & 0.0\% & 1.0000 \\
WP & 5 & 1.0000 & 0.9989 & 4/5 & 40.0\% & 1.0000 \\
\bottomrule
\end{tabular}
\end{adjustbox}
\end{table}

\subsection{Exact \texorpdfstring{$A$}{A} versus normalized \texorpdfstring{$A_\nu$}{A-nu}}
\label{app:scaling-a-vs-nu-filled}

The exact local law predicts $A=\sqrt2(\sigma_J^2+\rho)/\sigma_r$.  In the direct human-proxy experiments, the simplified reliability-only coefficient $A_\nu=\sqrt2\sigma_J\nu$ is valid only when the correlated proxy-error term $\rho$ is negligible and the reliability quantity $\nu$ coincides with the trained-proxy signal ratio $\sigma_J/\sigma_r$ under the same measurement model.  Table~\ref{tab:scaling-a-vs-nu-filled} shows that this extra condition is not empirically satisfied: the median $|\rho|/\sigma_J^2$ is $0.212$ on SummEval and $0.338$ on eligible WMT MQM clusters.  The exact coefficient still matches the fitted coefficient essentially perfectly, while $A_\nu$ is noticeably rougher and induces large surrogate-headroom error.

For the GPT-4o external-proxy setting, the reported $A_\nu$ row is not an independently estimated proxy-rater ICC baseline.  The GPT scoring passes are first averaged into one item-level proxy score $r_i$.  On each calibration split, we then compute the same variance-ratio surrogate used in the direct-proxy experiments,
\[
\widehat A_\nu=\sqrt2\,\hat\sigma_J(\hat\sigma_J/\hat\sigma_r)=\sqrt2\,\hat\sigma_J^2/\hat\sigma_r,
\]
where $\hat\sigma_J^2$ and $\hat\sigma_r^2$ are the calibration-split item-level variances of the human aggregate target and the aggregated GPT-4o proxy score.  The stored number of GPT scoring passes ($n_{\rm judges}$ in Table~\ref{tab:gpt4o-repro-record}) is used only to form the item-level mean proxy score and does not enter the $A_\nu$ baseline separately.  Thus, in the external-proxy experiment, $A_\nu$ should be interpreted as a scale/reliability-style surrogate rather than as a measured GPT-rater ICC.

\begin{table}[H]
\centering
\scriptsize
\caption{Exact $A$ coefficient versus normalized $A_\nu$ approximation.  The ``small $\rho$'' threshold is $|\rho|/\sigma_J^2\le0.1$.}
\label{tab:scaling-a-vs-nu-filled}
\begin{adjustbox}{max width=\textwidth}
\begin{tabular}{lrrrrrrr}
\toprule
Dataset & Paths & median $|\rho|/\sigma_J^2$ & small $\rho$ & headroom error & corr($A_{\rm exact},A_{\rm fit}$) & corr($A_\nu,A_{\rm fit}$) & corr($B_{\rm th},B_{\rm fit}$) \\
\midrule
SummEval & 32 & 0.212 & 3.1\% & 22.2\% & 0.9997 & 0.9821 & 0.9934 \\
WMT MQM clusters & 141 & 0.338 & 13.5\% & 31.0\% & 1.0000 & 0.9814 & 0.9768 \\
\bottomrule
\end{tabular}
\end{adjustbox}
\end{table}

\subsection{Prospective prediction by group}
\label{app:prediction-by-group-filled}

Table~\ref{tab:prediction-baselines-ci-filled} compares the coefficient predictor with an empirical two-parameter fit and other baselines.  The empirical fit estimates $A$ and $B$ by least squares on the exact calibration-split gain curve over $0<d\le0.10$, then predicts the evaluation gain without using evaluation labels.  It is a usable data-driven baseline under the same calibration protocol.  The oracle instead uses evaluation moment coefficients and is only a retrospective diagnostic of finite-range approximation error.  The moment-based predictor has similar endpoint MAE to the empirical fit while providing closed-form coefficient estimates with an analytic interpretation.

Table~\ref{tab:prediction-multidataset-app} gives the compact multi-slice view used in the main text.  Prospective gain and MAE use each human target's native score units, including MQM penalties for WMT.  Absolute MAEs are therefore not directly comparable across datasets.  Within the eligible WMT slice, the coefficient predictor has lower MAE than $A$-only, zero-gain, and reliability-only baselines.

\begin{table}[H]
\centering
\scriptsize
\caption{Prospective prediction baselines at $d=0.10$, including the calibration-only empirical-fit baseline and retrospective oracle.  Brackets are 95\% path-cluster bootstrap intervals for pooled-split MAE (1,000 draws, seed 0). Each draw retains all valid splits of its sampled paths.  Split rows sharing paths/items make these descriptive uncertainty summaries rather than iid split-level intervals.  For the GPT-4o row, $A_\nu$ denotes the calibration-split variance-ratio surrogate computed after averaging GPT scoring passes to one item-level proxy score; no GPT-rater ICC is estimated.}
\label{tab:prediction-baselines-ci-filled}
\begin{adjustbox}{max width=\textwidth}
\begin{tabular}{lllrr}
\toprule
Setting & Predictor & Resamples & MAE [95\% CI] & mean $\Delta(0.10)$ \\
\midrule
Human proxy & coefficient $A{+}B$ & 1589 & 0.0174 [0.0158, 0.0193] & 0.2147 \\
Human proxy & empirical two-parameter fit & 1589 & \textbf{0.0168} [0.0155, 0.0183] & 0.2147 \\
Human proxy & $A$-only & 1589 & 0.0310 [0.0229, 0.0409] & 0.2147 \\
Human proxy & zero gain & 1589 & 0.2147 [0.2002, 0.2314] & 0.2147 \\
Human proxy & $A_\nu$ & 1589 & 0.0880 [0.0698, 0.1115] & 0.2147 \\
Human proxy & oracle evaluation coefficients & 1589 & 0.0041 [0.0023, 0.0064] & 0.2147 \\
GPT-4o proxy & coefficient $A{+}B$ & 200 & 0.0118 [0.0104, 0.0125] & 0.0680 \\
GPT-4o proxy & empirical two-parameter fit & 200 & \textbf{0.0117} [0.0103, 0.0126] & 0.0680 \\
GPT-4o proxy & $A$-only & 200 & 0.0165 [0.0130, 0.0200] & 0.0680 \\
GPT-4o proxy & zero gain & 200 & 0.0680 [0.0517, 0.0818] & 0.0680 \\
GPT-4o proxy & $A_\nu$ & 200 & 0.1508 [0.1234, 0.1782] & 0.0680 \\
GPT-4o proxy & oracle evaluation coefficients & 200 & 0.0005 [0.0003, 0.0009] & 0.0680 \\
\bottomrule
\end{tabular}
\end{adjustbox}
\end{table}

\begin{table}[H]
\centering
\scriptsize
\caption{Prospective prediction summary across the eligible analysis slices used in the main text.  WMT means complete-rater MQM clusters.  OpenMEVA means MAGS ROC/WP human annotations.}
\label{tab:prediction-multidataset-app}
\begin{adjustbox}{max width=\textwidth}
\begin{tabular}{llllllll}
\toprule
Setting & Splits & A+B MAE @0.10 & A-only MAE & Zero MAE & $A_\nu$ MAE & Sign acc. & Mean gain \\
\midrule
SummEval human & 1589 & \textbf{0.0174} [0.0158, 0.0193] & 0.0310 [0.0229, 0.0409] & 0.2147 [0.2002, 0.2314] & 0.0880 [0.0698, 0.1115] & 90.7\% & 0.2147 \\
WMT MQM clusters & 7050 & \textbf{0.0846} [0.0793, 0.0901] & 0.1298 [0.1198, 0.1406] & 0.8524 [0.7818, 0.9252] & 0.6230 [0.5394, 0.7180] & 91.0\% & 0.8524 \\
OpenMEVA-MAGS & 500 & \textbf{0.0148} [0.0141, 0.0156] & 0.0162 [0.0148, 0.0175] & 0.3327 [0.3083, 0.3556] & 0.0207 [0.0162, 0.0264] & 91.8\% & 0.3327 \\
GPT-4o proxy & 200 & \textbf{0.0118} [0.0104, 0.0125] & 0.0165 [0.0130, 0.0200] & 0.0680 [0.0517, 0.0818] & 0.1508 [0.1234, 0.1782] & 97.5\% & 0.0680 \\
\bottomrule
\end{tabular}
\space
\end{adjustbox}
\end{table}

The path/repeat/valid-split accounting for the headline prospective rows is in Table~\ref{tab:protocol-main}.  Figure~\ref{fig:main-results} summarizes the endpoint comparison and one transfer route graphically.

\begin{figure}[H]
\centering
\includegraphics[width=0.94\textwidth]{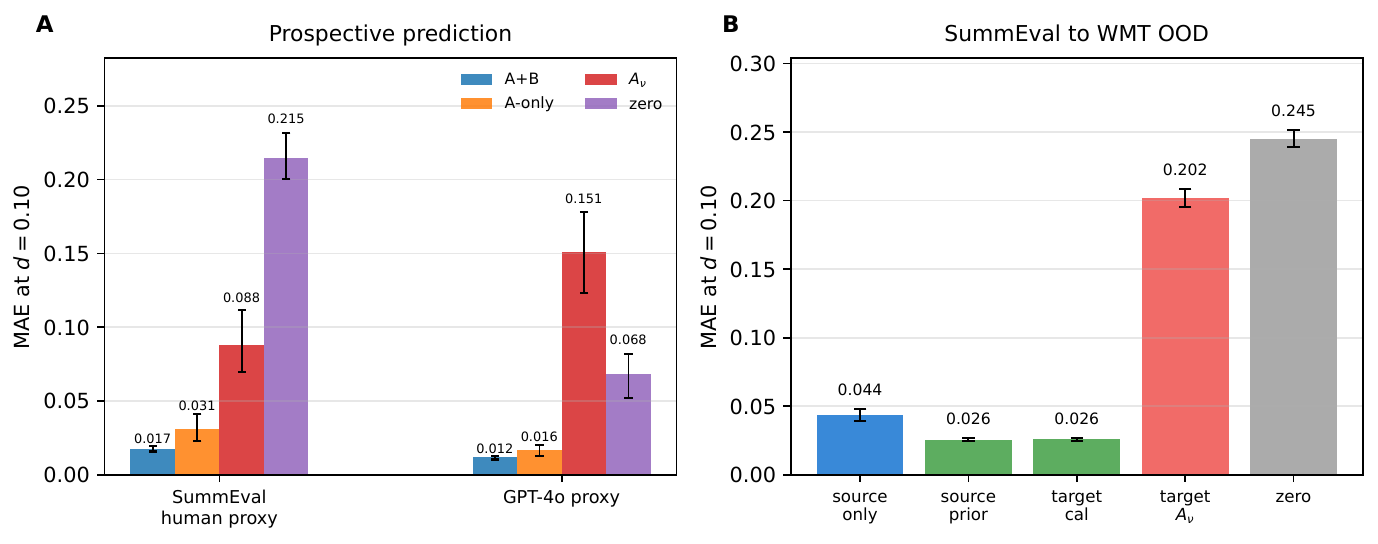}
\caption{Overview of endpoint prediction and OOD transfer. Panel A compares endpoint gain-prediction error at $d=0.10$ for SummEval human-rater proxy paths and the GPT-4o judge proxy experiment. Panel B shows the SummEval $\rightarrow$ WMT OOD transfer comparison under the stricter \texttt{z\_cal} protocol.  Source-prior denotes the fixed target-heavy shrinkage estimator. Error bars are 95\% path-cluster bootstrap intervals for pooled-split MAE; GPT-4o has only four dimension paths. Lower MAE is better.}
\label{fig:main-results}
\end{figure}

Table~\ref{tab:prediction-by-group-filled} localizes the prospective result using $B$-sign agreement, not fixed-gain sign accuracy.  The human-proxy expert groups have perfect $B$-sign agreement, while Turker consistency and fluency are harder coefficient-sign cases.  Their magnitude MAE remains small.  In GPT-4o, all four dimensions retain positive mean gain at $d=0.10$, with coherence largest at $0.0865$ and fluency smallest at $0.0446$.  $B$-sign agreement is $90$--$100\%$ by dimension, and theory MAE stays between $0.0097$ and $0.0126$.

\begin{table}[H]
\centering
\scriptsize
\caption{Prospective prediction diagnostics by group.  Resamples are valid item splits.  Both $B$-sign columns use the evaluation curve-fit coefficient as their reference.}
\label{tab:prediction-by-group-filled}
\begin{adjustbox}{max width=\textwidth}
\begin{tabular}{llrrrrrr}
\toprule
Setting & Group & Resamples & $B$-sign theory & $B$-sign emp-fit & MAE theory @\textbf{0.10} & MAE $A_\nu$ @\textbf{0.10} & mean $\Delta(0.10)$ \\
\midrule
Human proxy & coherence/expert & 150 & 100.0\% & 100.0\% & \textbf{0.0137} & 0.1574 & 0.3319 \\
Human proxy & coherence/turker & 250 & 91.6\% & 95.6\% & \textbf{0.0164} & 0.0630 & 0.1949 \\
Human proxy & consistency/expert & 139 & 100.0\% & 100.0\% & \textbf{0.0325} & 0.1250 & 0.2817 \\
Human proxy & consistency/turker & 250 & 76.0\% & 95.6\% & \textbf{0.0148} & 0.0594 & 0.2052 \\
Human proxy & fluency/expert & 150 & 100.0\% & 100.0\% & \textbf{0.0229} & 0.1212 & 0.1943 \\
Human proxy & fluency/turker & 250 & 82.0\% & 93.2\% & \textbf{0.0154} & 0.0597 & 0.1925 \\
Human proxy & relevance/expert & 150 & 100.0\% & 100.0\% & \textbf{0.0161} & 0.1346 & 0.2016 \\
Human proxy & relevance/turker & 250 & 91.6\% & 99.2\% & \textbf{0.0142} & 0.0596 & 0.1787 \\
GPT-4o proxy & coherence & 50 & 90.0\% & 92.0\% & \textbf{0.0125} & 0.1691 & 0.0865 \\
GPT-4o proxy & consistency & 50 & 100.0\% & 100.0\% & \textbf{0.0123} & 0.1356 & 0.0731 \\
GPT-4o proxy & fluency & 50 & 100.0\% & 100.0\% & \textbf{0.0126} & 0.1873 & 0.0446 \\
GPT-4o proxy & relevance & 50 & 100.0\% & 100.0\% & \textbf{0.0097} & 0.1113 & 0.0676 \\
\bottomrule
\end{tabular}
\end{adjustbox}
\end{table}

\FloatBarrier

\begin{table}[H]
\centering
\scriptsize
\caption{Audit separating the two sign metrics used in the paper.  $B$-sign agreement compares the calibration moment coefficient sign with the sign of $B_{\rm eval}^{\rm fit}$, fitted to the exact evaluation gain curve over $0<d\le0.10$.  Fixed-gain sign accuracy compares the predicted and actual gain signs at $d=0.10$.}
\label{tab:sign-metric-audit}
\begin{adjustbox}{max width=\textwidth}
\begin{tabular}{llllll}
\toprule
Setting & Metric & Rows & Accuracy & Source & Interpretation \\
\midrule
SummEval human & B-sign agreement & 1589 & 90.7\% & 07\_prospective\_prediction/exp27\_human\_baseline & sign(calibration B) equals sign(evaluation fitted B) \\
WMT MQM clusters & B-sign agreement & 7050 & 91.0\% & 07\_prospective\_prediction/exp27\_wmt\_mqm & sign(calibration B) equals sign(evaluation fitted B) \\
OpenMEVA-MAGS & B-sign agreement & 500 & 91.8\% & 07\_prospective\_prediction/exp27\_openmeva & sign(calibration B) equals sign(evaluation fitted B) \\
GPT-4o proxy & B-sign agreement & 200 & 97.5\% & 07\_prospective\_prediction/exp27\_model\_proxy\_4o & sign(calibration B) equals sign(evaluation fitted B) \\
SummEval human & fixed-gain sign at $d=0.10$ & 1589 & 100.0\% & 12\_accept\_boost/d\_sweep\_human & sign(predicted Delta) equals sign(actual Delta) \\
GPT-4o proxy & fixed-gain sign at $d=0.10$ & 200 & 100.0\% & 12\_accept\_boost/d\_sweep\_gpt4o & sign(predicted Delta) equals sign(actual Delta) \\
\bottomrule
\end{tabular}
\space
\end{adjustbox}
\end{table}

\begin{figure}[tbp]
\centering
\includegraphics[width=0.88\linewidth]{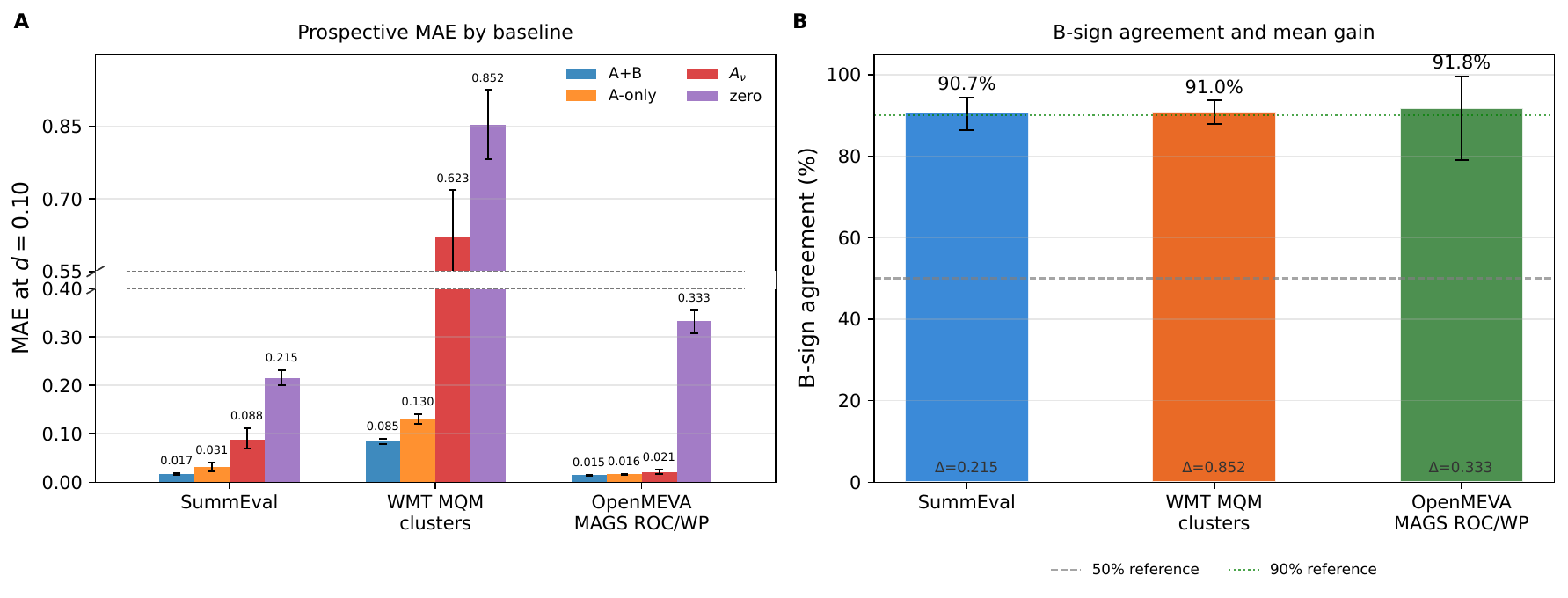}
\caption{Prospective prediction diagnostics across eligible human-proxy analysis slices.  Panel A compares $A{+}B$ MAE against the reliability-only baseline ($A_\nu$) at $d=0.10$ on SummEval, WMT MQM clusters, and OpenMEVA-MAGS ROC/WP; the broken $y$-axis separates the high-error $A_\nu$/zero baselines from the lower coefficient errors.  Panel B reports agreement between the calibration moment coefficient sign and the evaluation curve-fit coefficient sign, $\operatorname{sign}(\hat B_{\rm cal})=\operatorname{sign}(B_{\rm eval}^{\rm fit})$, together with mean realized gain $\Delta(0.10)$ for the same human-proxy slices.  The evaluation coefficient is fitted over $0<d\le0.10$ and is used only for retrospective scoring.  The 50\% and 90\% lines are visual references, not calibrated acceptance thresholds. GPT-4o results are in Tables~\ref{tab:prediction-baselines-ci-filled} and~\ref{tab:prediction-by-group-filled}.}
\label{fig:prediction-mae-multidataset}
\end{figure}

\begin{table}[H]
\centering
\scriptsize
\caption{Margin-conditioned fixed-gain sign accuracy at $d=0.10$.  The Condition column thresholds the actual held-out gain margin $|\Delta(0.10)|$, so this is an oracle diagnostic of sign-margin sensitivity rather than a deployable selection rule.  The human-proxy case has positive gains throughout.  GPT-4o proxy retains perfect sign accuracy across the realized-margin sweep until the most aggressive filter leaves no retained cases.}
\label{tab:margin-sweep-filled}
\begin{adjustbox}{max width=\textwidth}
\begin{tabular}{llllll}
\toprule
Setting & Condition & Coverage & $n$ & Fixed-gain sign acc. & MAE \\
\midrule
SummEval human & all & 100.0\% & 1589 & 100.0\% & 0.0174 \\
SummEval human & $|\Delta|>0.001$ & 100.0\% & 1589 & 100.0\% & 0.0174 \\
SummEval human & $|\Delta|>0.005$ & 100.0\% & 1589 & 100.0\% & 0.0174 \\
SummEval human & $|\Delta|>0.01$ & 100.0\% & 1589 & 100.0\% & 0.0174 \\
SummEval human & $|\Delta|>0.02$ & 100.0\% & 1589 & 100.0\% & 0.0174 \\
SummEval human & $|\Delta|>0.05$ & 100.0\% & 1589 & 100.0\% & 0.0174 \\
SummEval human & $|\Delta|>0.1$ & 100.0\% & 1589 & 100.0\% & 0.0174 \\
GPT-4o proxy & all & 100.0\% & 200 & 100.0\% & 0.0118 \\
GPT-4o proxy & $|\Delta|>0.001$ & 100.0\% & 200 & 100.0\% & 0.0118 \\
GPT-4o proxy & $|\Delta|>0.005$ & 100.0\% & 200 & 100.0\% & 0.0118 \\
GPT-4o proxy & $|\Delta|>0.01$ & 100.0\% & 200 & 100.0\% & 0.0118 \\
GPT-4o proxy & $|\Delta|>0.02$ & 100.0\% & 200 & 100.0\% & 0.0118 \\
GPT-4o proxy & $|\Delta|>0.05$ & 78.5\% & 157 & 100.0\% & 0.0121 \\
GPT-4o proxy & $|\Delta|>0.1$ & 0.0\% & 0 & -- & -- \\
\bottomrule
\end{tabular}
\space
\end{adjustbox}
\end{table}

\FloatBarrier
\subsection{Flexible calibration under matched information}
\label{app:flexible}

\paragraph{Calibrators and tuning.}  All calibrators are fitted on the calibration items of each split.  The two information regimes share the split plan, seeds, and code, and differ only in the pool over which the fitted $\hat h$ is integrated.  Isotonic regression is nondecreasing and untuned.  Binned calibration uses equal-frequency proxy bins, with candidate bin counts $b\in\{3,5,8,12\}$ restricted by $3b\le n_{\rm cal}$ (ties can leave smaller bins), and shrinks bin means toward the calibration mean with pseudo-count $k\in\{0,5,20\}$; both are chosen by five-fold cross-validated squared error.  The spline uses a cubic B-spline basis with constant extrapolation and $\{4,6,8\}$ knots chosen by five-fold cross-validation, with a ridge penalty chosen by efficient leave-one-out over 13 log-spaced values in $[10^{-3},10^{3}]$.  A small multilayer perceptron with two hidden ReLU layers of 16 units is pre-specified, untuned, and reported only as exploratory.  The calibration two-term fit samples the exact calibration-pool tilt curve at 200 equally spaced tilt values up to KL $\min(0.10,\,0.95\,d_{\max})$, where $d_{\max}$ is the largest KL reachable by the calibration pool, and fits $a\sqrt d+bd$ without intercept.

\paragraph{Support.}  Each path has 50 random 30/70 splits (seed 0), with 480/1,120 items for SummEval and GPT-4o, 300/700 for OpenMEVA, and a median of 267/623 for the WMT clusters.  WMT uses the corrected 141-path input throughout.  Methods that integrate over the calibration pool are undefined when that pool cannot reach the requested radius.  This happens only for 11 SummEval cells at $d=0.10$, all on one expert consistency path whose calibration splits reach at most KL $0.080$--$0.098$.  Every comparison uses the cells valid for all compared methods (7,989 of 8,000 for SummEval).  With this support, the SummEval endpoint row at $d=0.10$ of Table~\ref{tab:flex-radius} reproduces the endpoint MAE in Table~\ref{tab:prediction-main}a.

\paragraph{Paired comparisons.}  Table~\ref{tab:flex-paired} reports the pre-pool differences from $A{+}B$.  The two-term fit, the stored curve, and isotonic regression are significantly better than $A{+}B$ only on WMT; on the four-path GPT-4o slice, $A{+}B$ is lower than isotonic with an interval that barely excludes zero.  Table~\ref{tab:flex-radius} gives endpoint MAE at each radius, and Table~\ref{tab:flex-target-paired} gives the target-pool-aware differences, of which only projection minus direct integration is information-matched.  Across the 12 dataset--calibrator cells, the aggregate projection--direct gap is $0.00002$--$0.00286$ and the mean pointwise gap is $0.00041$--$0.00850$.  These are different quantities, and neither is a uniform bound from the theorem.

\paragraph{Functional form.}  In a separate run on all cells of each slice, replacing the fitted two-term calibration curve by cubic, six-knot spline, or residual-spline curves fitted to the same calibration data changes CurveMAE by at most $2.1\times10^{-5}$ on every dataset.  This supports the empirical adequacy of the two-term form on this grid, not its theoretical dominance.

\begin{table}[H]
\centering
\scriptsize
\caption{Pre-pool CurveMAE differences from $A{+}B$ (method minus $A{+}B$; positive values mean that $A{+}B$ has lower error), on the common valid cells of Table~\ref{tab:prediction-main}b.  Intervals are 95\% percentile path-clustered bootstrap intervals (2,000 draws, paths resampled, seed 0), not adjusted for multiplicity.  $^{\dagger}$Recomputed with the same protocol because the saved interval used all 8,000 SummEval cells.  GPT-4o has 4 paths, so its intervals are descriptive.  The MLP is exploratory.}
\label{tab:flex-paired}
\begin{adjustbox}{max width=\textwidth}
\begin{tabular}{llrrl}
\toprule
Dataset & Method & CurveMAE & Method $-$ $A{+}B$ & 95\% CI \\
\midrule
SummEval human & $A{+}B$ (reference) & 0.01185 & -- & -- \\
 & $A$-only & 0.01803 & 0.00618 & [0.00324, 0.00962]$^{\dagger}$ \\
 & Calibration two-term fit & 0.01170 & $-$0.00015 & [$-$0.00043, 0.00004]$^{\dagger}$ \\
 & Stored empirical curve & 0.01171 & $-$0.00014 & [$-$0.00044, 0.00004] \\
 & Binned & 0.03010 & 0.01825 & [0.01548, 0.02091] \\
 & Spline & 0.01275 & 0.00090 & [0.00036, 0.00136] \\
 & Isotonic & 0.01168 & $-$0.00017 & [$-$0.00045, 0.00002] \\
 & MLP (exploratory) & 0.01182 & $-$0.00003 & [$-$0.00032, 0.00016] \\
\midrule
GPT-4o proxy & $A{+}B$ (reference) & 0.00806 & -- & -- \\
 & $A$-only & 0.01030 & 0.00224 & [0.00100, 0.00364] \\
 & Calibration two-term fit & 0.00805 & $-$0.00001 & [$-$0.00006, 0.00006] \\
 & Stored empirical curve & 0.00805 & $-$0.00001 & [$-$0.00006, 0.00007] \\
 & Binned & 0.00878 & 0.00072 & [0.00043, 0.00101] \\
 & Spline & 0.00803 & $-$0.00002 & [$-$0.00017, 0.00014] \\
 & Isotonic & 0.00827 & 0.00022 & [0.00001, 0.00042] \\
 & MLP (exploratory) & 0.00814 & 0.00008 & [$-$0.00009, 0.00025] \\
\midrule
WMT MQM & $A{+}B$ (reference) & 0.05921 & -- & -- \\
 & $A$-only & 0.08039 & 0.02118 & [0.01826, 0.02451] \\
 & Calibration two-term fit & 0.05843 & $-$0.00078 & [$-$0.00099, $-$0.00058] \\
 & Stored empirical curve & 0.05842 & $-$0.00079 & [$-$0.00100, $-$0.00059] \\
 & Binned & 0.07356 & 0.01435 & [0.01220, 0.01680] \\
 & Spline & 0.06255 & 0.00334 & [0.00248, 0.00427] \\
 & Isotonic & 0.05882 & $-$0.00039 & [$-$0.00070, $-$0.00009] \\
 & MLP (exploratory) & 0.05916 & $-$0.00005 & [$-$0.00047, 0.00041] \\
\midrule
OpenMEVA-MAGS & $A{+}B$ (reference) & 0.00991 & -- & -- \\
 & $A$-only & 0.01058 & 0.00067 & [$-$0.00001, 0.00136] \\
 & Calibration two-term fit & 0.00984 & $-$0.00007 & [$-$0.00020, 0.00006] \\
 & Stored empirical curve & 0.00984 & $-$0.00007 & [$-$0.00021, 0.00006] \\
 & Binned & 0.01583 & 0.00593 & [0.00299, 0.00986] \\
 & Spline & 0.01002 & 0.00011 & [$-$0.00013, 0.00032] \\
 & Isotonic & 0.00984 & $-$0.00007 & [$-$0.00021, 0.00006] \\
 & MLP (exploratory) & 0.00985 & $-$0.00006 & [$-$0.00024, 0.00013] \\
\bottomrule
\end{tabular}
\space
\end{adjustbox}
\end{table}

\begin{table}[H]
\centering
\scriptsize
\caption{Pre-pool endpoint MAE at each radius, pooled over the common valid cells at that radius (SummEval uses 1,589 of 1,600 cells at $d=0.10$ for every method).  This is endpoint MAE, not CurveMAE; its $d=0.10$ column is comparable to Table~\ref{tab:prediction-main}a.  Native units; lower is better.  The MLP is exploratory.}
\label{tab:flex-radius}
\begin{adjustbox}{max width=\textwidth}
\begin{tabular}{llrrrrr}
\toprule
Dataset & Method & $d=\textbf{0.010}$ & $d=\textbf{0.025}$ & $d=\textbf{0.050}$ & $d=\textbf{0.075}$ & $d=\textbf{0.100}$ \\
\midrule
SummEval human & $A{+}B$ & \textbf{0.0057} & \textbf{0.0088} & \textbf{0.0123} & 0.0150 & 0.0174 \\
 & $A$-only & 0.0063 & 0.0108 & 0.0176 & 0.0243 & 0.0310 \\
 & Calibration two-term fit & \textbf{0.0057} & \textbf{0.0088} & \textbf{0.0123} & \textbf{0.0149} & \textbf{0.0168} \\
 & Stored empirical curve & \textbf{0.0057} & \textbf{0.0088} & \textbf{0.0123} & \textbf{0.0149} & \textbf{0.0168} \\
 & Binned & 0.0129 & 0.0211 & 0.0310 & 0.0392 & 0.0465 \\
 & Spline & 0.0062 & 0.0097 & 0.0134 & 0.0162 & 0.0182 \\
 & Isotonic & \textbf{0.0057} & \textbf{0.0088} & \textbf{0.0123} & \textbf{0.0149} & \textbf{0.0168} \\
 & MLP (exploratory) & \textbf{0.0057} & 0.0089 & 0.0124 & 0.0150 & 0.0169 \\
\midrule
GPT-4o proxy & $A{+}B$ & \textbf{0.0039} & \textbf{0.0060} & \textbf{0.0084} & \textbf{0.0102} & 0.0118 \\
 & $A$-only & 0.0042 & 0.0069 & 0.0104 & 0.0135 & 0.0165 \\
 & Calibration two-term fit & \textbf{0.0039} & \textbf{0.0060} & \textbf{0.0084} & \textbf{0.0102} & \textbf{0.0117} \\
 & Stored empirical curve & \textbf{0.0039} & \textbf{0.0060} & \textbf{0.0084} & \textbf{0.0102} & \textbf{0.0117} \\
 & Binned & 0.0043 & 0.0067 & 0.0092 & 0.0111 & 0.0126 \\
 & Spline & \textbf{0.0039} & \textbf{0.0060} & \textbf{0.0084} & \textbf{0.0102} & \textbf{0.0117} \\
 & Isotonic & 0.0040 & 0.0062 & 0.0086 & 0.0105 & 0.0120 \\
 & MLP (exploratory) & \textbf{0.0039} & 0.0061 & 0.0085 & 0.0103 & 0.0119 \\
\midrule
WMT MQM & $A{+}B$ & \textbf{0.0297} & 0.0454 & 0.0620 & 0.0743 & 0.0846 \\
 & $A$-only & 0.0323 & 0.0532 & 0.0808 & 0.1058 & 0.1298 \\
 & Calibration two-term fit & \textbf{0.0297} & \textbf{0.0453} & \textbf{0.0615} & \textbf{0.0731} & \textbf{0.0825} \\
 & Stored empirical curve & \textbf{0.0297} & \textbf{0.0453} & \textbf{0.0615} & \textbf{0.0731} & \textbf{0.0825} \\
 & Binned & 0.0373 & 0.0567 & 0.0771 & 0.0921 & 0.1046 \\
 & Spline & 0.0319 & 0.0485 & 0.0658 & 0.0782 & 0.0882 \\
 & Isotonic & 0.0300 & 0.0456 & 0.0619 & 0.0736 & 0.0829 \\
 & MLP (exploratory) & 0.0301 & 0.0459 & 0.0623 & 0.0740 & 0.0835 \\
\midrule
OpenMEVA-MAGS & $A{+}B$ & \textbf{0.0045} & \textbf{0.0072} & 0.0103 & 0.0127 & 0.0148 \\
 & $A$-only & 0.0046 & 0.0075 & 0.0109 & 0.0137 & 0.0162 \\
 & Calibration two-term fit & \textbf{0.0045} & \textbf{0.0072} & \textbf{0.0102} & \textbf{0.0126} & \textbf{0.0146} \\
 & Stored empirical curve & \textbf{0.0045} & \textbf{0.0072} & \textbf{0.0102} & \textbf{0.0126} & \textbf{0.0146} \\
 & Binned & 0.0065 & 0.0108 & 0.0162 & 0.0207 & 0.0249 \\
 & Spline & 0.0046 & 0.0073 & 0.0104 & 0.0128 & 0.0149 \\
 & Isotonic & \textbf{0.0045} & \textbf{0.0072} & \textbf{0.0102} & \textbf{0.0126} & \textbf{0.0146} \\
 & MLP (exploratory) & 0.0046 & \textbf{0.0072} & \textbf{0.0102} & \textbf{0.0126} & \textbf{0.0146} \\
\bottomrule
\end{tabular}
\space
\end{adjustbox}
\end{table}

\begin{table}[H]
\centering
\scriptsize
\caption{Target-pool-aware paired CurveMAE differences (first-named minus second-named; negative values favor the first).  Only projection minus direct integration compares the same $\hat h$ with the same information; comparisons with the pre-pool $A{+}B$ anchor mix information regimes.  The mean pointwise gap is the path-mean of $|\text{projection}-\text{direct}|$ predictions, a disagreement rather than an error.  Intervals as in Table~\ref{tab:flex-paired}; $^{\dagger}$computed with the same protocol from row-level predictions.}
\label{tab:flex-target-paired}
\begin{adjustbox}{max width=\textwidth}
\begin{tabular}{llrrrr}
\toprule
Dataset & $h$ & Projection $-$ direct & Projection $-$ anchor & Direct $-$ anchor$^{\dagger}$ & Mean pointwise gap \\
\midrule
SummEval human & Binned & $-$0.00132 & 0.01598 & 0.01729 & 0.00210 \\
 &  & [$-$0.00156, $-$0.00108] & [0.01250, 0.01909] & [0.01388, 0.02039] &  \\
 & Spline & 0.00013 & $-$0.00145 & $-$0.00158 & 0.00185 \\
 &  & [$-$0.00023, 0.00058] & [$-$0.00279, $-$0.00025] & [$-$0.00332, $-$0.00007] &  \\
 & Isotonic & 0.00050 & $-$0.00226 & $-$0.00276 & 0.00171 \\
 &  & [0.00008, 0.00103] & [$-$0.00336, $-$0.00132] & [$-$0.00432, $-$0.00148] &  \\
\midrule
GPT-4o proxy & Binned & $-$0.00004 & 0.00055 & 0.00059 & 0.00041 \\
 &  & [$-$0.00009, 0.00000] & [0.00024, 0.00075] & [0.00027, 0.00084] &  \\
 & Spline & 0.00003 & $-$0.00026 & $-$0.00029 & 0.00042 \\
 &  & [$-$0.00005, 0.00013] & [$-$0.00108, 0.00037] & [$-$0.00120, 0.00037] &  \\
 & Isotonic & $-$0.00010 & $-$0.00005 & 0.00005 & 0.00044 \\
 &  & [$-$0.00022, $-$0.00003] & [$-$0.00094, 0.00048] & [$-$0.00072, 0.00053] &  \\
\midrule
WMT MQM & Binned & $-$0.00286 & 0.00637 & 0.00922 & 0.00850 \\
 &  & [$-$0.00358, $-$0.00223] & [0.00419, 0.00874] & [0.00673, 0.01191] &  \\
 & Spline & $-$0.00031 & $-$0.00253 & $-$0.00222 & 0.00570 \\
 &  & [$-$0.00056, $-$0.00004] & [$-$0.00426, $-$0.00099] & [$-$0.00404, $-$0.00057] &  \\
 & Isotonic & 0.00105 & $-$0.00605 & $-$0.00710 & 0.00456 \\
 &  & [0.00083, 0.00128] & [$-$0.00750, $-$0.00467] & [$-$0.00860, $-$0.00573] &  \\
\midrule
OpenMEVA-MAGS & Binned & $-$0.00099 & 0.00339 & 0.00438 & 0.00180 \\
 &  & [$-$0.00125, $-$0.00075] & [0.00008, 0.00795] & [0.00093, 0.00904] &  \\
 & Spline & $-$0.00002 & $-$0.00265 & $-$0.00263 & 0.00124 \\
 &  & [$-$0.00015, 0.00011] & [$-$0.00336, $-$0.00184] & [$-$0.00342, $-$0.00177] &  \\
 & Isotonic & 0.00006 & $-$0.00279 & $-$0.00285 & 0.00120 \\
 &  & [$-$0.00007, 0.00019] & [$-$0.00342, $-$0.00210] & [$-$0.00353, $-$0.00212] &  \\
\bottomrule
\end{tabular}
\space
\end{adjustbox}
\end{table}

\FloatBarrier
\subsection{Contribution and reliability of \texorpdfstring{$B$}{B}}
\label{app:b-stability}

\paragraph{Contribution.}  Tables~\ref{tab:ab-reduction} and~\ref{tab:ab-bshare} give the numbers behind Figure~\ref{fig:ab-contribution}.  Both use the pre-pool protocol with 30\% calibration. Error reductions use the common valid cells of Table~\ref{tab:prediction-main}b; magnitude shares use all path-splits (1,600 for SummEval, versus 1,589 common cells at $d=0.10$).  The endpoint reduction grows with the radius in every dataset; OpenMEVA's interval includes zero for $d\le0.05$, and its CurveMAE interval includes zero.  The share $R_B(d)=|\hat Bd|/(|\hat A\sqrt d|+|\hat Bd|)$ is a magnitude share of the predicted gain.  It grows like $\sqrt d$, because $|\hat Bd|/|\hat A\sqrt d|=(|\hat B|/|\hat A|)\sqrt d$, and it says nothing about accuracy.  The aggregate improvement coexists with split-level heterogeneity: $A{+}B$ has lower CurveMAE than $A$-only on 55.6--67.0\% of path-split units.

\paragraph{Coefficient error and sign stability.}  Table~\ref{tab:b-stability}a decomposes the error at $d=0.10$ with the terms of Lemma~\ref{lem:prediction-error}.  Here $A_{\rm eval}$ and $B_{\rm eval}$ are moment coefficients computed on the held-out split and used only retrospectively, so $\sqrt d|\hat A-A_{\rm eval}|$ and $d|\hat B-B_{\rm eval}|$ combine calibration estimation error with calibration--evaluation shift, and the remainder is the finite-range error of the held-out curve's own two-term approximation.  The sign agreement in this panel is \emph{moment-based}: it compares $\hat B_{\rm cal}$ with $B_{\rm eval}$ and gives 89.4--97.0\% at 30\% calibration.  The 90.7--97.5\% in Table~\ref{tab:prediction-main}a instead compares $\hat B_{\rm cal}$ with a coefficient fitted to the exact held-out curve.  Table~\ref{tab:b-stability}b uses 300 cluster-bootstrap draws within each calibration split.  The sign of $\hat B$ is kept in a dataset median of 96.7--100\% of draws, yet 25.0--52.6\% of split-level 95\% intervals include zero, so sign stability does not establish nonzero population curvature.  With an operational threshold of $0.1\,\hat\sigma_{J}$ on the lower bootstrap percentile of the predicted gain, $A$-only and $A{+}B$ make different proceed decisions on 25.5\% of GPT-4o splits and on at most 2.5\% of splits elsewhere.

\paragraph{Rater count and rating noise.}  Table~\ref{tab:rater-noise} and Figure~\ref{fig:rater-noise} vary only the calibration gold on SummEval: the proxy is a fixed held-out human rater, and evaluation gold is always the clean mean of all non-proxy raters.  One to two expert gold raters reduce the mean absolute $B$ error from $0.0618$ to $0.0428$ and CurveMAE from $0.0269$ to $0.0144$; one to four crowd raters reduce them from $0.0659$ to $0.0590$ and from $0.0122$ to $0.0104$.  Under injected Gaussian or variance-normalized Student-$t_3$ noise, scaled by the group rating SD and clipped to the observed range, the sign of $\hat B$ agrees with the clean estimate in at least 99.3\% (experts) and 93.0\% (crowd) of splits.  Its predictive usefulness is less stable: the advantage of $A{+}B$ over $A$-only disappears at 1.0 rating SD for experts and 0.5 SD for crowd workers.  These CurveMAE values come from their own split streams and are not comparable with Table~\ref{tab:prediction-main}b.

\paragraph{Heavy tails.}  Table~\ref{tab:tail-outlier-robustness} in Appendix~\ref{app:accept-boost} audits preprocessing and contamination with full-pool coefficients.  Injecting 1\% positive 8-SD outliers into the gold scores leaves $B$ correlated 0.955--0.997 with the clean estimates (sign agreement 93.0--98.6\%), but contaminating the proxy destabilizes OpenMEVA most (correlation $-0.107$, sign agreement 20.0\%).  This bounded stress test does not establish robustness to unbounded heavy-tailed scores outside the regularity assumptions of Theorem~\ref{thm:local}.  When tails are heavy, we recommend inspecting them, recalibrating after robust preprocessing, narrowing the window, or abstaining.

\begin{table}[H]
\centering
\scriptsize
\caption{$A$-only versus $A{+}B$ in the pre-pool protocol (common valid cells).  (a) Endpoint MAE at each radius, with the relative reduction $1-\mathrm{MAE}_{A+B}/\mathrm{MAE}_{A\text{-only}}$ and its 95\% path-cluster bootstrap interval (2,000 draws, seed 0).  (b) CurveMAE with the paired difference $A{+}B$ minus $A$-only (negative favors $A{+}B$) and the share of path-split units won by $A{+}B$; paths/cells are 32/7,989 (SummEval), 4/1,000 (GPT-4o), 141/35,250 (WMT), and 10/2,500 (OpenMEVA).  MAEs are in native units and are not comparable across datasets; GPT-4o intervals rest on four paths.}
\label{tab:ab-reduction}
\begin{adjustbox}{max width=\textwidth}
\begin{tabular}{@{}c@{}}
\textit{(a) Endpoint MAE by radius}\\[2pt]
\begin{tabular}{llrrrrrr}
\toprule
Dataset & $d$ & Cells & $A$-only & $A{+}B$ & $A{+}B-A$-only & Red.\ (\%) & 95\% CI (\%) \\
\midrule
SummEval & 0.010 & 1,600 & 0.0063 & \textbf{0.0057} & $-$0.0006 & 9.2 & [5.9, 13.3] \\
 & 0.025 & 1,600 & 0.0108 & \textbf{0.0088} & $-$0.0020 & 18.6 & [12.0, 24.9] \\
 & 0.050 & 1,600 & 0.0176 & \textbf{0.0123} & $-$0.0053 & 30.2 & [19.8, 38.2] \\
 & 0.075 & 1,600 & 0.0243 & \textbf{0.0150} & $-$0.0093 & 38.1 & [25.9, 46.7] \\
 & 0.100 & 1,589 & 0.0310 & \textbf{0.0174} & $-$0.0136 & 44.0 & [30.8, 52.8] \\
\midrule
GPT-4o & 0.010 & 200 & 0.0042 & \textbf{0.0039} & $-$0.0003 & 7.2 & [4.1, 10.7] \\
 & 0.025 & 200 & 0.0069 & \textbf{0.0060} & $-$0.0009 & 12.5 & [6.8, 18.5] \\
 & 0.050 & 200 & 0.0104 & \textbf{0.0084} & $-$0.0020 & 19.0 & [10.3, 27.4] \\
 & 0.075 & 200 & 0.0135 & \textbf{0.0102} & $-$0.0033 & 24.4 & [13.1, 34.5] \\
 & 0.100 & 200 & 0.0165 & \textbf{0.0118} & $-$0.0047 & 28.7 & [14.9, 40.0] \\
\midrule
WMT & 0.010 & 7,050 & 0.0323 & \textbf{0.0297} & $-$0.0025 & 7.9 & [6.9, 8.9] \\
 & 0.025 & 7,050 & 0.0532 & \textbf{0.0454} & $-$0.0078 & 14.7 & [13.0, 16.5] \\
 & 0.050 & 7,050 & 0.0808 & \textbf{0.0620} & $-$0.0188 & 23.3 & [20.7, 25.9] \\
 & 0.075 & 7,050 & 0.1058 & \textbf{0.0743} & $-$0.0315 & 29.8 & [26.7, 32.9] \\
 & 0.100 & 7,050 & 0.1298 & \textbf{0.0846} & $-$0.0452 & 34.8 & [31.5, 38.2] \\
\midrule
OpenMEVA & 0.010 & 500 & 0.0046 & \textbf{0.0045} & $-$0.0001 & 1.9 & [$-$1.1, 4.6] \\
 & 0.025 & 500 & 0.0075 & \textbf{0.0072} & $-$0.0003 & 3.7 & [$-$0.7, 7.7] \\
 & 0.050 & 500 & 0.0109 & \textbf{0.0103} & $-$0.0006 & 5.9 & [$-$0.2, 11.3] \\
 & 0.075 & 500 & 0.0137 & \textbf{0.0127} & $-$0.0010 & 7.4 & [0.1, 13.8] \\
 & 0.100 & 500 & 0.0162 & \textbf{0.0148} & $-$0.0013 & 8.3 & [0.1, 15.5] \\
\bottomrule
\end{tabular}
\space\\[6pt]
\textit{(b) CurveMAE over $d\in\{0.01,0.025,0.05,0.075,0.10\}$}\\[2pt]
\begin{tabular}{lrrrcrr}
\toprule
 & \multicolumn{2}{c}{CurveMAE} & \multicolumn{2}{c}{$A{+}B-A$-only} &  & Units \\
\cmidrule(lr){2-3}\cmidrule(lr){4-5}Dataset & $A$-only & $A{+}B$ & Diff. & 95\% CI & Red.\ (\%) & won (\%) \\
\midrule
SummEval & 0.01803 & \textbf{0.01185} & $-$0.00618 & [$-$0.00962, $-$0.00324] & 34.3 & 67.0 \\
GPT-4o & 0.01030 & \textbf{0.00806} & $-$0.00224 & [$-$0.00364, $-$0.00100] & 21.7 & 65.5 \\
WMT & 0.08039 & \textbf{0.05921} & $-$0.02118 & [$-$0.02451, $-$0.01826] & 26.4 & 65.2 \\
OpenMEVA & 0.01058 & \textbf{0.00991} & $-$0.00067 & [$-$0.00136, 0.00001] & 6.4 & 55.6 \\
\bottomrule
\end{tabular}
\space
\end{tabular}
\end{adjustbox}
\end{table}

\begin{table}[H]
\centering
\scriptsize
\caption{Median magnitude share $R_B(d)=|\hat Bd|/(|\hat A\sqrt d|+|\hat Bd|)$ of the calibration-predicted gain over path-splits (30\% calibration, 50 splits per path), with the interquartile range at $d=0.10$.  $R_B$ is not net $B$, not a share of the realized gain, and not an error reduction.}
\label{tab:ab-bshare}
\begin{adjustbox}{max width=\textwidth}
\begin{tabular}{lrrrrrrc}
\toprule
\multicolumn{2}{c}{} & \multicolumn{5}{c}{Median $R_B(d)$ (\%) at $d=$} & \multicolumn{1}{c}{IQR (\%)} \\
\cmidrule(lr){3-7}Dataset & Path-splits & 0.010 & 0.025 & 0.050 & 0.075 & 0.100 & at $d=0.100$ \\
\midrule
SummEval & 1,600 & 1.5 & 2.3 & 3.2 & 3.9 & 4.4 & [2.2, 8.4] \\
GPT-4o & 200 & 4.6 & 7.1 & 9.7 & 11.7 & 13.3 & [7.2, 18.6] \\
WMT & 7,050 & 2.8 & 4.3 & 6.0 & 7.3 & 8.3 & [4.8, 12.3] \\
OpenMEVA & 500 & 1.0 & 1.6 & 2.2 & 2.7 & 3.1 & [1.8, 4.0] \\
\bottomrule
\end{tabular}
\space
\end{adjustbox}
\end{table}

\begin{table}[H]
\centering
\scriptsize
\caption{Reliability of $\hat B$.  (a) Median error terms at $d=0.10$ by calibration fraction (50 nested splits per path).  $A_{\rm eval}$ and $B_{\rm eval}$ are held-out moment coefficients used only retrospectively, and the sign agreement is moment-based.  (b) Within-split cluster bootstrap (30\% calibration, 300 draws; clusters are SummEval articles, OpenMEVA stories, and WMT source segments): median bootstrap SE of $\hat B$, median share of draws keeping the sign of $\hat B$, share of split-level 95\% intervals that include zero, and share of splits on which $A$-only and $A{+}B$ make different proceed decisions at the threshold $0.1\,\hat\sigma_J$.  Panels (a) and (b) use different split schedules.  Error columns are in native units.}
\label{tab:b-stability}
\begin{adjustbox}{max width=\textwidth}
\begin{tabular}{@{}c@{}}
\textit{(a) Coefficient error by calibration size}\\[2pt]
\begin{tabular}{lrrrrr}
\toprule
Dataset & Cal. & $\sqrt{d}\,|\hat A-A_{\rm eval}|$ & $d\,|\hat B-B_{\rm eval}|$ & Remainder & Sign agr.\ (\%) \\
\midrule
SummEval & 5\% & 0.0318 & 0.0093 & 0.0017 & 78.4 \\
 & 10\% & 0.0226 & 0.0068 & 0.0017 & 81.9 \\
 & 30\% & 0.0157 & 0.0044 & 0.0018 & 89.4 \\
\midrule
GPT-4o & 5\% & 0.0214 & 0.0069 & 0.0003 & 86.0 \\
 & 10\% & 0.0160 & 0.0049 & 0.0003 & 93.5 \\
 & 30\% & 0.0116 & 0.0037 & 0.0004 & 97.0 \\
\midrule
WMT & 5\% & 0.1621 & 0.0546 & 0.0082 & 79.5 \\
 & 10\% & 0.1187 & 0.0431 & 0.0083 & 84.0 \\
 & 30\% & 0.0777 & 0.0317 & 0.0087 & 89.7 \\
\midrule
OpenMEVA & 5\% & 0.0238 & 0.0056 & 0.0027 & 82.6 \\
 & 10\% & 0.0188 & 0.0044 & 0.0027 & 85.2 \\
 & 30\% & 0.0126 & 0.0029 & 0.0027 & 90.0 \\
\bottomrule
\end{tabular}
\space\\[6pt]
\textit{(b) Bootstrap sign and interval stability}\\[2pt]
\begin{tabular}{lrrrrr}
\toprule
Dataset & Path-splits & Median SE$(\hat B)$ & Sign kept (\%) & CI $\ni 0$ (\%) & Disagree (\%) \\
\midrule
SummEval & 1,600 & 0.0570 & 96.7 & 52.6 & 0.0 \\
GPT-4o & 200 & 0.0376 & 100.0 & 25.0 & 25.5 \\
WMT & 7,050 & 0.3870 & 97.3 & 49.8 & 2.5 \\
OpenMEVA & 500 & 0.0333 & 100.0 & 35.2 & 0.0 \\
\bottomrule
\end{tabular}
\space
\end{tabular}
\end{adjustbox}
\end{table}

\begin{table}[H]
\centering
\scriptsize
\caption{Rater count and rating noise on SummEval (30\% calibration).  The proxy is a fixed held-out human rater; calibration gold averages $k$ of the $K$ available non-proxy raters, with optional per-rating noise at nominal scales of the group rating SD, clipped to the observed range; evaluation gold is always the clean mean of all non-proxy raters.  $\hat B$ error is the mean $|\hat B-B_{\rm eval}|$.  The sign reference differs by block, as each block header states.  The last two columns give the paired CurveMAE difference and its 5,000-draw path-cluster bootstrap interval.  Values are not comparable with Table~\ref{tab:prediction-main}b.}
\label{tab:rater-noise}
\begin{adjustbox}{max width=\textwidth}
\begin{tabular}{lrrrrc}
\toprule
 &  &  & \multicolumn{3}{c}{CurveMAE} \\
\cmidrule(lr){4-6}Setting & $\hat B$ error & Sign (\%) & $A{+}B$ & $A{+}B-A$-only & 95\% CI \\
\midrule
\multicolumn{6}{l}{\textit{Clean ratings, $k$ of $K$ gold raters; sign vs.\ $B_{\rm eval}$}} \\
Expert, 1 of 2 & 0.0618 & 97.3 & 0.02687 & $-$0.01246 & [$-$0.01814, $-$0.00693] \\
Expert, 2 of 2 & 0.0428 & 100.0 & 0.01437 & $-$0.01343 & [$-$0.02012, $-$0.00708] \\
Crowd, 1 of 4 & 0.0659 & 81.6 & 0.01224 & $-$0.00127 & [$-$0.00160, $-$0.00096] \\
Crowd, 2 of 4 & 0.0612 & 84.0 & 0.01083 & $-$0.00135 & [$-$0.00174, $-$0.00093] \\
Crowd, 3 of 4 & 0.0596 & 85.8 & 0.01059 & $-$0.00146 & [$-$0.00180, $-$0.00112] \\
Crowd, 4 of 4 & 0.0590 & 85.4 & 0.01037 & $-$0.00149 & [$-$0.00181, $-$0.00117] \\
\midrule
\multicolumn{6}{l}{\textit{Noise (rating SD), all $K$ gold raters; sign vs.\ clean $\hat B$}} \\
Expert, Gauss. 0.25 & 0.0454 & 100.0 & 0.01469 & $-$0.00865 & [$-$0.01379, $-$0.00354] \\
Expert, Gauss. 0.50 & 0.0541 & 99.7 & 0.01826 & $-$0.00343 & [$-$0.00731, 0.00031] \\
Expert, Gauss. 1.00 & 0.0835 & 99.5 & 0.03253 & 0.00657 & [0.00465, 0.00857] \\
Expert, $t_3$ 0.25 & 0.0446 & 99.8 & 0.01444 & $-$0.00967 & [$-$0.01545, $-$0.00415] \\
Expert, $t_3$ 0.50 & 0.0508 & 99.5 & 0.01662 & $-$0.00551 & [$-$0.01011, $-$0.00113] \\
Expert, $t_3$ 1.00 & 0.0709 & 99.3 & 0.02540 & 0.00236 & [0.00041, 0.00427] \\
Crowd, Gauss. 0.25 & 0.0585 & 96.6 & 0.01082 & $-$0.00031 & [$-$0.00068, 0.00008] \\
Crowd, Gauss. 0.50 & 0.0592 & 94.2 & 0.01304 & 0.00110 & [0.00065, 0.00163] \\
Crowd, Gauss. 1.00 & 0.0585 & 93.0 & 0.02189 & 0.00335 & [0.00289, 0.00384] \\
Crowd, $t_3$ 0.25 & 0.0581 & 96.9 & 0.01049 & $-$0.00062 & [$-$0.00097, $-$0.00023] \\
Crowd, $t_3$ 0.50 & 0.0576 & 95.3 & 0.01186 & 0.00054 & [0.00010, 0.00101] \\
Crowd, $t_3$ 1.00 & 0.0569 & 93.3 & 0.01760 & 0.00263 & [0.00215, 0.00316] \\
\bottomrule
\end{tabular}
\space
\end{adjustbox}
\end{table}

\begin{figure}[H]
\centering
\includegraphics[width=\linewidth]{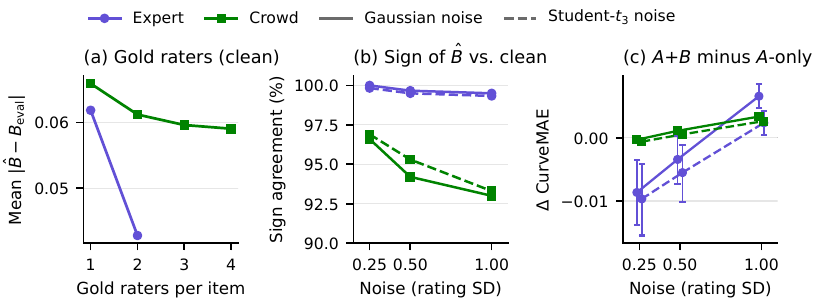}
\caption{Sensitivity of $\hat B$ to the calibration gold on SummEval.  (a) Mean $|\hat B-B_{\rm eval}|$ against the number of clean gold raters.  (b) Sign agreement of $\hat B$ with the clean estimate at the same rater count under injected noise (nominal scale, all gold raters).  (c) Paired CurveMAE difference $A{+}B$ minus $A$-only with 95\% path-cluster intervals: the sign of $\hat B$ stays stable, but its predictive benefit disappears at 1.0 SD for experts and 0.5 SD for crowd workers.}
\label{fig:rater-noise}
\end{figure}

\FloatBarrier
\subsection{Open reward-model and judge proxies}
\label{app:open-proxies}

To check that the external-proxy result is not specific to one judge, we score the same 1,600 SummEval summaries with an open reward model and an open LLM judge and keep the SummEval human ratings as calibration and evaluation targets.  The reward model is Skywork-Reward-Llama-3.1-8B-v0.2 \citep{liu2024skywork} (revision \texttt{d4117fbf}, bfloat16, maximum length 4,096), which gives one scalar score per summary.  The judge is Qwen2.5-32B-Instruct \citep{qwen2025qwen25} (revision \texttt{5ede1c97}), which rates each dimension with a rubric prompt that elicits a rationale and a real-valued score; five passes at temperature 0.3 are averaged.  The protocol is the prospective protocol of Section~\ref{sec:protocol}: per dimension, 480 calibration and 1,120 evaluation items, 50 repeats, and endpoint MAE at $d=0.10$ against the exact held-out raw-score tilt gain.

Table~\ref{tab:open-proxies} shows that adding $B$ reduces the endpoint MAE of $A$-only by 27.1\% and 24.5\%.  The calibration two-term fit is slightly lower than $A{+}B$ for both proxies and wins about half of the splits; for Qwen, its path-bootstrap interval relative to $A{+}B$, $[-0.00050,-0.00003]$, lies entirely below zero.  $A{+}B$ beats $A$-only in 66.0\% and 60.5\% of individual splits, and Qwen coherence is the only one of eight proxy--dimension cells in which $A$-only is lower (Table~\ref{tab:open-proxies-dim}).  These checks are scoped to SummEval and show that the second-order benefit is not an artifact of one judge; they do not establish broad proxy generality.  Training-data overlap between these models and SummEval is not excluded.

\begin{table}[H]
\centering
\scriptsize
\caption{Open proxies on SummEval: endpoint MAE at $d=0.10$, averaged over four dimensions and 50 item splits (480/1,120).  The paired interval for $A$-only minus $A{+}B$ is a path-clustered bootstrap over only four dimension paths (500 draws), computed from the saved split rows.}
\label{tab:open-proxies}
\begin{adjustbox}{max width=\textwidth}
\begin{tabular}{lrrrrrrc}
\toprule
 &  &  & Emp. &  &  & $A{+}B$ & $A$-only $-$ $A{+}B$ \\
Proxy & $A$-only & $A{+}B$ & fit & $A_\nu$ & Zero & reduction & [path-bootstrap 95\%] \\
\midrule
Skywork RM v0.2 & 0.0155 & 0.0113 & \textbf{0.0111} & 0.0438 & 0.0665 & 27.1\% & 0.00419 [0.00106, 0.00732] \\
Qwen2.5-32B judge & 0.0148 & 0.0112 & \textbf{0.0110} & 0.0964 & 0.0817 & 24.5\% & 0.00364 [0.00026, 0.00992] \\
\bottomrule
\end{tabular}
\space
\end{adjustbox}
\end{table}

\begin{table}[H]
\centering
\scriptsize
\caption{Open proxies on SummEval by dimension (endpoint MAE at $d=0.10$, 50 splits each).}
\label{tab:open-proxies-dim}
\begin{adjustbox}{max width=\textwidth}
\begin{tabular}{llrrrrr}
\toprule
Proxy & Dimension & $A$-only & $A{+}B$ & Emp. fit & $A_\nu$ & Zero \\
\midrule
Skywork RM v0.2 & coherence & 0.0137 & 0.0131 & \textbf{0.0126} & 0.0527 & 0.0789 \\
 & consistency & 0.0223 & 0.0133 & \textbf{0.0132} & 0.0435 & 0.0674 \\
 & fluency & 0.0161 & 0.0105 & \textbf{0.0104} & 0.0198 & 0.0403 \\
 & relevance & 0.0097 & \textbf{0.0081} & 0.0082 & 0.0590 & 0.0792 \\
\midrule
Qwen2.5-32B judge & coherence & \textbf{0.0120} & 0.0124 & 0.0123 & 0.1107 & 0.0973 \\
 & consistency & 0.0266 & 0.0138 & \textbf{0.0131} & 0.0737 & 0.0874 \\
 & fluency & 0.0116 & 0.0104 & \textbf{0.0102} & 0.1442 & 0.0490 \\
 & relevance & 0.0092 & \textbf{0.0082} & 0.0083 & 0.0568 & 0.0931 \\
\bottomrule
\end{tabular}
\space
\end{adjustbox}
\end{table}

\FloatBarrier
\subsection{Three reward models: gain forecasts and aggregate errors}
\label{app:rm-example}

This evaluation shows how $\hat A$ and $\hat B$ are used for fixed reward-model proxies; it is not a reward-model leaderboard.  The proxies are three frozen scalar reward models: Skywork-Reward-Llama-3.1-8B-v0.2 (revision \texttt{d4117fbf}), ArmoRM-Llama3-8B-v0.1 \citep{wang2024armorm} (revision \texttt{eb2676d2}, final gated scalar), and OpenAssistant reward-model-deberta-v3-large-v2 \citep{openassistant2023reward,kopf2023openassistant,he2023debertav3} (revision \texttt{c355404e}).  There is no fine-tuning and no new annotation.  The target for each SummEval dimension is the mean of three expert and five crowd ratings.  Every model and summary for an article receive the same article prefix.  The supplied articles are already capped at 2,000 characters, and prefixes are further shortened when needed to fit all three tokenizers (4,096, 4,096, and 512 tokens) while keeping each summary in full; the mean retained fraction is 96.08\% of the supplied prefixes.  Predictions use 480 calibration and 1,120 evaluation items, 50 repeats (seed 0), and radii $0.025$, $0.05$, $0.075$, and $0.10$, with splits identical across models.  The splits hold out items, not articles.

Figure~\ref{fig:rm-worked} summarizes coherence across all 50 matched splits, showing mean gain forecasts and mean within-split absolute errors separately.  Over all 50 repeats and four dimensions (Table~\ref{tab:rm-endpoint}), adding $B$ lowers the endpoint MAE in all 12 model--dimension cells and in all 48 cells across the four radii, while the calibration two-term fit is lower than $A{+}B$ in 7 of 12 endpoint cells.  The repeated splits reuse the same 1,600 items, so split-to-split variation should not be interpreted as independent-sample uncertainty.  Table~\ref{tab:rm-coefficients} reports full-pool descriptive coefficients with paired article-cluster bootstrap intervals.  These differ from the calibration-only coefficients used for prediction because the estimation sample differs.  The Skywork coherence interval for $B$ includes zero, and the other eleven $B$ intervals are negative.

The coefficients describe this proxy, target, candidate distribution, and context; they are not an intrinsic quality score of a reward model.  They are invariant to positive affine changes of the proxy but not to rescaling the target or shifting the population, and a negative $B$ is a local correction rather than a turning point.  The reliability-only surrogate is omitted, because its variance ratio depends on proxy scale (ArmoRM's scalar reward has standard deviation 0.022).  The article context was preprocessed on this benchmark's texts before splitting, and training-data overlap with SummEval is not excluded.  These shared-context Skywork scores differ from the open-proxy scores of Appendix~\ref{app:open-proxies}.

\begin{figure}[H]
\centering
\includegraphics[width=\linewidth]{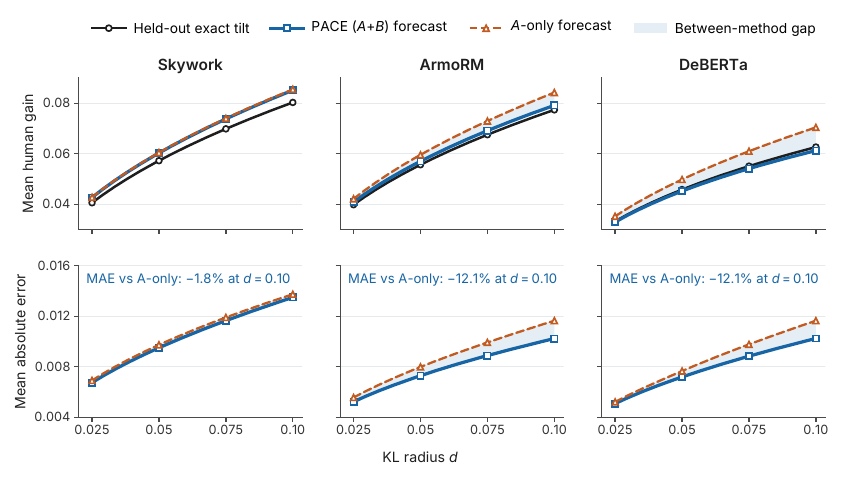}
\caption{SummEval coherence forecasts across 50 matched item splits (480 calibration / 1,120 held-out items).  Top: mean exact held-out tilt gain (black), calibration-only PACE ($A{+}B$, blue), and $A$-only forecasts (orange dashed).  Bottom: absolute error computed within each split, then averaged.  Percentages give PACE's relative MAE change against $A$-only at $d=0.10$.  Shading marks the between-method gap, not uncertainty: the mean $\hat B d$ correction above and the MAE difference below.  Axes are zoomed to show $d\in[0.025,0.10]$; neither vertical axis starts at zero.  The repeated splits reuse the same 1,600 items.}
\label{fig:rm-worked}
\end{figure}

\begin{table}[H]
\centering
\scriptsize
\caption{Three reward models: endpoint MAE at $d=0.10$ (not CurveMAE), averaged equally over four SummEval dimensions and 50 item splits per dimension (480/1,120), in human-score units.  Point estimates only.  The lower block lists each dimension.}
\label{tab:rm-endpoint}
\begingroup
\let\bottomrule\midrule
\begin{adjustbox}{max width=\textwidth}
\begin{tabular}{lrrrr}
\toprule
 &  &  & Calibration & $A{+}B$ reduction \\
Reward model & $A$-only & $A{+}B$ & empirical fit & vs $A$-only \\
\midrule
Skywork & 0.0155 & 0.0113 & \textbf{0.0111} & 27.5\% \\
ArmoRM & 0.0136 & \textbf{0.0105} & \textbf{0.0105} & 23.1\% \\
DeBERTa & 0.0151 & \textbf{0.0113} & \textbf{0.0113} & 25.1\% \\
\bottomrule
\end{tabular}
\space
\end{adjustbox}
\endgroup
\par\vspace{10pt}
\begingroup
\let\toprule\midrule
\begin{adjustbox}{max width=\textwidth}
\begin{tabular}{llrrrr}
\toprule
 &  & Mean held-out &  &  & Calibration \\
Dimension & Reward model & gain & $A$-only & $A{+}B$ & empirical fit \\
\midrule
coherence & Skywork & 0.0803 & 0.0137 & 0.0135 & \textbf{0.0129} \\
 & ArmoRM & 0.0773 & 0.0116 & 0.0102 & \textbf{0.0101} \\
 & DeBERTa & 0.0626 & 0.0116 & 0.0102 & \textbf{0.0101} \\
\midrule
consistency & Skywork & 0.0668 & 0.0225 & \textbf{0.0134} & \textbf{0.0134} \\
 & ArmoRM & 0.0576 & 0.0166 & \textbf{0.0122} & 0.0124 \\
 & DeBERTa & 0.0454 & 0.0199 & 0.0135 & \textbf{0.0133} \\
\midrule
fluency & Skywork & 0.0406 & 0.0162 & 0.0103 & \textbf{0.0102} \\
 & ArmoRM & 0.0482 & 0.0151 & \textbf{0.0097} & 0.0098 \\
 & DeBERTa & 0.0330 & 0.0172 & \textbf{0.0111} & 0.0112 \\
\midrule
relevance & Skywork & 0.0803 & 0.0097 & \textbf{0.0079} & \textbf{0.0079} \\
 & ArmoRM & 0.0778 & 0.0111 & \textbf{0.0097} & \textbf{0.0097} \\
 & DeBERTa & 0.0592 & 0.0116 & \textbf{0.0104} & \textbf{0.0104} \\
\bottomrule
\end{tabular}
\space
\end{adjustbox}
\endgroup
\end{table}

\begin{table}[H]
\centering
\scriptsize
\caption{Full-pool descriptive coefficients for the three reward models (all 1,600 items), with 95\% percentile intervals from 1,000 paired article-cluster bootstrap draws (seed 1; the same articles are drawn for every model and dimension).  $B$ equals the tail term plus the skewness term.  These are not the calibration-only coefficients used in Figure~\ref{fig:rm-worked} and Table~\ref{tab:rm-endpoint}, and marginal intervals are not a test of differences between models.}
\label{tab:rm-coefficients}
\begin{adjustbox}{max width=\textwidth}
\begin{tabular}{llrrrr}
\toprule
Reward model & Human dimension & $A$ [95\% CI] & $B$ [95\% CI] & Tail term & Skewness term \\
\midrule
Skywork & coherence & 0.2602 [0.2087, 0.3154] & 0.0086 [$-$0.0403, 0.0623] & $-$0.0816 & 0.0902 \\
ArmoRM & coherence & 0.2615 [0.2105, 0.3118] & $-$0.0505 [$-$0.0972, $-$0.0037] & $-$0.1153 & 0.0649 \\
DeBERTa & coherence & 0.2232 [0.1645, 0.2813] & $-$0.0908 [$-$0.1375, $-$0.0376] & $-$0.1230 & 0.0322 \\
\midrule
Skywork & consistency & 0.2722 [0.2230, 0.3223] & $-$0.1991 [$-$0.2560, $-$0.1384] & $-$0.2934 & 0.0944 \\
ArmoRM & consistency & 0.2226 [0.1662, 0.2811] & $-$0.1322 [$-$0.1906, $-$0.0745] & $-$0.1875 & 0.0552 \\
DeBERTa & consistency & 0.1906 [0.1349, 0.2493] & $-$0.1558 [$-$0.2129, $-$0.0992] & $-$0.1833 & 0.0275 \\
\midrule
Skywork & fluency & 0.1694 [0.1193, 0.2257] & $-$0.1229 [$-$0.1905, $-$0.0528] & $-$0.1816 & 0.0587 \\
ArmoRM & fluency & 0.1877 [0.1416, 0.2393] & $-$0.1142 [$-$0.1722, $-$0.0579] & $-$0.1607 & 0.0465 \\
DeBERTa & fluency & 0.1410 [0.0883, 0.1956] & $-$0.1145 [$-$0.1742, $-$0.0564] & $-$0.1348 & 0.0203 \\
\midrule
Skywork & relevance & 0.2750 [0.2296, 0.3257] & $-$0.0644 [$-$0.1139, $-$0.0161] & $-$0.1598 & 0.0953 \\
ArmoRM & relevance & 0.2675 [0.2241, 0.3118] & $-$0.0749 [$-$0.1174, $-$0.0338] & $-$0.1412 & 0.0663 \\
DeBERTa & relevance & 0.2115 [0.1625, 0.2628] & $-$0.0745 [$-$0.1145, $-$0.0378] & $-$0.1049 & 0.0305 \\
\bottomrule
\end{tabular}
\space
\end{adjustbox}
\end{table}

\FloatBarrier
\subsection{Radius, calibration-size, and selective-use robustness checks}
\label{app:accept-boost}

This subsection records the final robustness checks used to audit the local diagnostic.  The KL-radius sweep asks whether the headline $d=0.10$ result is a single-radius artifact.  The calibration-size curve asks how many repeated-rating calibration items are needed before the coefficient predictor stabilizes.  The residual breakdown checks the finite-radius approximation error of the local law itself.  The selective GPT-4o gate summarizes how margin thresholds trade coverage against realized gain in the external-proxy setting.

The tables and figures in this subsection summarize the radius sweep, calibration-size sweep, residual breakdown, and selective GPT-4o gate.  Table~\ref{tab:accept-boost-selected} gives a compact selection of those results.

\begin{figure}[H]
\centering
\includegraphics[width=0.92\linewidth]{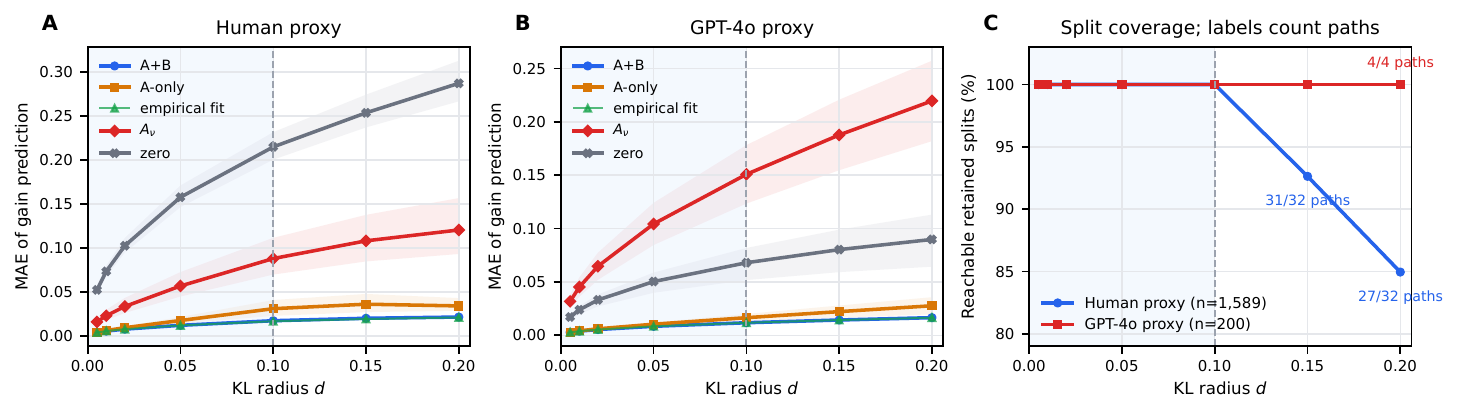}
\caption{KL-radius audit for the shared local window.  The shaded region and vertical dashed line mark $d\le0.10$, the window used for the main prospective prediction tables.  Panel A plots MAE versus KL radius on the human-proxy setting: the $A{+}B$ predictor remains below the $A$-only and zero baselines throughout the shared local grid.  Panel B gives the same comparison for the GPT-4o proxy, where $A{+}B$ remains below $A$-only, $A_\nu$, and zero from $d=0.005$ to $d=0.20$.  Ribbons or intervals, where shown, are 95\% path-cluster bootstrap intervals for pooled-split MAE.  Panel C separates split coverage (the line) from path counts (annotations). For SummEval, coverage is relative to 1,589 retained splits: 100\% through $d=0.10$, $1{,}472/1{,}589=92.6\%$ at $0.15$, and $1{,}350/1{,}589=85.0\%$ at $0.20$; the corresponding path counts are 32, 31, and 27. All four GPT-4o paths remain reachable.}
\label{fig:kl-radius-audit}
\end{figure}

For finite empirical tilt paths, larger KL radii are not always attainable on every split.  We therefore report each radius on the rows that can realize that radius, while retaining lower-radius rows from the same paths.  This matters mainly for the high-curvature human-proxy stress window: all 32 paths are retained through $d=0.10$, while the larger $d=0.20$ sweep has 27 valid paths and 1,350 valid split rows.

\begin{table}[H]
\centering
\scriptsize
\caption{Selected robustness-check results for the radius, calibration-size, and selective-use analyses.  ``Valid'' counts refer to splits or paths that can realize the requested KL radius under the finite empirical exponential-tilt path.}
\label{tab:accept-boost-selected}
\begin{adjustbox}{max width=\textwidth}
\begin{tabular}{lllll}
\toprule
Setting & Diagnostic & Valid count & Radius / threshold & Result \\
\midrule
Human proxy & radius sweep & 32 paths / 1,589 rows & $d=0.10$ & MAE $0.0174$ [0.0158, 0.0193]. $A$-only $0.0310$. $A_\nu$ $0.0880$ \\
Human proxy & radius sweep & 27 paths / 1,350 rows & $d=0.20$ & MAE $0.0216$ [0.0206, 0.0226]. $A$-only $0.0342$. $A_\nu$ $0.1204$ \\
Human proxy & calibration size & 32 paths / 1,571 rows & 10\%, $d=0.10$ & MAE $0.0256$ [0.0232, 0.0287] \\
Human proxy & calibration size & 32 paths / 1,589 rows & 30\%, $d=0.10$ & MAE $0.0174$ [0.0158, 0.0193] \\
Human proxy & residual breakdown & 32 paths & $D\le0.10$ & median max residual $0.0017$. median $R^2=0.99983$ \\
Human proxy & residual breakdown & 27 paths & $D\le0.20$ & median max residual $0.0041$. median $R^2=0.99946$ \\
GPT-4o proxy & radius sweep & 4 paths / 200 rows & $d=0.10$ & MAE $0.0118$ [0.0104, 0.0125]. zero $0.0680$ \\
GPT-4o proxy & selective gate & 200 rows & $\tau=0.01$ & 100.0\% coverage. 100.0\% covered sign accuracy. realized gain $0.0680$ \\
GPT-4o proxy & selective gate & 200 rows & $\tau=0.02$ & 99.5\% coverage. 100.0\% covered sign accuracy. realized gain $0.0677$ \\
\bottomrule
\end{tabular}
\end{adjustbox}
\end{table}

Tables~\ref{tab:radius-sweep-detail} and~\ref{tab:calibration-size-summeval-gpt4o} report the full radius sweep and the SummEval/GPT-4o calibration-size curves that underlie the selected rows above.  Brackets are 95\% path-cluster bootstrap intervals for pooled-split MAE.

\begin{table}[H]
\centering
\scriptsize
\caption{Full KL-radius sweep at $d\in\{0.005,0.01,0.02,0.05,0.10,0.15,0.20\}$ for the human-proxy SummEval setting and the GPT-4o external proxy.  Valid path counts at larger radii reflect finite-empirical tilt-path reachability.}
\label{tab:radius-sweep-detail}
\begin{adjustbox}{max width=\textwidth}
\begin{tabular}{lllllllll}
\toprule
Setting & $d$ & Valid paths & Valid rows & A+B MAE & A-only MAE & Zero MAE & $A_\nu$ MAE & A+B sign acc. \\
\midrule
Human proxy & 0.005 & 32 & 1589 & \textbf{0.0040} [0.0037, 0.0043] & 0.0043 [0.0039, 0.0046] & 0.0524 [0.0482, 0.0572] & 0.0160 [0.0125, 0.0207] & 100.0\% \\
Human proxy & 0.010 & 32 & 1589 & \textbf{0.0056} [0.0052, 0.0060] & 0.0062 [0.0056, 0.0069] & 0.0734 [0.0676, 0.0798] & 0.0230 [0.0180, 0.0298] & 100.0\% \\
Human proxy & 0.020 & 32 & 1589 & \textbf{0.0078} [0.0073, 0.0084] & 0.0094 [0.0082, 0.0108] & 0.1025 [0.0947, 0.1113] & 0.0335 [0.0262, 0.0433] & 100.0\% \\
Human proxy & 0.050 & 32 & 1589 & \textbf{0.0121} [0.0113, 0.0132] & 0.0176 [0.0143, 0.0217] & 0.1576 [0.1464, 0.1709] & 0.0568 [0.0448, 0.0726] & 100.0\% \\
Human proxy & 0.100 & 32 & 1589 & \textbf{0.0174} [0.0158, 0.0193] & 0.0310 [0.0229, 0.0409] & 0.2147 [0.2002, 0.2314] & 0.0880 [0.0698, 0.1115] & 100.0\% \\
Human proxy & 0.150 & 31 & 1472 & \textbf{0.0201} [0.0185, 0.0222] & 0.0361 [0.0272, 0.0478] & 0.2534 [0.2368, 0.2740] & 0.1081 [0.0847, 0.1375] & 100.0\% \\
Human proxy & 0.200 & 27 & 1350 & \textbf{0.0216} [0.0206, 0.0226] & 0.0342 [0.0286, 0.0431] & 0.2869 [0.2664, 0.3125] & 0.1204 [0.0931, 0.1566] & 100.0\% \\
GPT-4o proxy & 0.005 & 4 & 200 & \textbf{0.0028} [0.0024, 0.0030] & 0.0029 [0.0025, 0.0032] & 0.0171 [0.0143, 0.0196] & 0.0318 [0.0253, 0.0382] & 100.0\% \\
GPT-4o proxy & 0.010 & 4 & 200 & \textbf{0.0039} [0.0034, 0.0041] & 0.0042 [0.0036, 0.0046] & 0.0239 [0.0198, 0.0274] & 0.0453 [0.0362, 0.0544] & 100.0\% \\
GPT-4o proxy & 0.020 & 4 & 200 & \textbf{0.0054} [0.0047, 0.0058] & 0.0061 [0.0052, 0.0068] & 0.0331 [0.0270, 0.0382] & 0.0647 [0.0520, 0.0774] & 100.0\% \\
GPT-4o proxy & 0.050 & 4 & 200 & \textbf{0.0084} [0.0074, 0.0090] & 0.0104 [0.0088, 0.0120] & 0.0503 [0.0398, 0.0588] & 0.1044 [0.0846, 0.1241] & 100.0\% \\
GPT-4o proxy & 0.100 & 4 & 200 & \textbf{0.0118} [0.0104, 0.0125] & 0.0165 [0.0130, 0.0200] & 0.0680 [0.0517, 0.0818] & 0.1508 [0.1234, 0.1782] & 100.0\% \\
GPT-4o proxy & 0.150 & 4 & 200 & \textbf{0.0143} [0.0127, 0.0154] & 0.0222 [0.0165, 0.0279] & 0.0803 [0.0592, 0.0990] & 0.1877 [0.1547, 0.2207] & 100.0\% \\
GPT-4o proxy & 0.200 & 4 & 200 & \textbf{0.0165} [0.0148, 0.0178] & 0.0277 [0.0196, 0.0357] & 0.0898 [0.0643, 0.1131] & 0.2196 [0.1819, 0.2573] & 100.0\% \\
\bottomrule
\end{tabular}
\space
\end{adjustbox}
\end{table}

\begin{table}[H]
\centering
\scriptsize
\caption{Full calibration-size curve at $d=0.10$ for the human-proxy SummEval setting and the GPT-4o external proxy.  Calibration fractions $\{10,20,30,50,70\}\%$ are evaluated on the same proxy--gold paths.}
\label{tab:calibration-size-summeval-gpt4o}
\begin{adjustbox}{max width=\textwidth}
\begin{tabular}{llllll}
\toprule
Setting & Calibration & A+B MAE & A-only MAE & Zero MAE & $A_\nu$ MAE \\
\midrule
Human-heldout & 10\% & \textbf{0.0256} [0.0232, 0.0287] & 0.0368 [0.0295, 0.0454] & 0.2144 [0.1998, 0.2311] & 0.0876 [0.0694, 0.1112] \\
Human-heldout & 20\% & \textbf{0.0199} [0.0180, 0.0221] & 0.0325 [0.0247, 0.0417] & 0.2147 [0.2000, 0.2315] & 0.0880 [0.0696, 0.1116] \\
Human-heldout & 30\% & \textbf{0.0174} [0.0158, 0.0193] & 0.0310 [0.0229, 0.0409] & 0.2147 [0.2002, 0.2314] & 0.0880 [0.0698, 0.1115] \\
Human-heldout & 50\% & \textbf{0.0163} [0.0147, 0.0182] & 0.0305 [0.0219, 0.0407] & 0.2149 [0.2002, 0.2317] & 0.0891 [0.0711, 0.1127] \\
Human-heldout & 70\% & \textbf{0.0170} [0.0153, 0.0190] & 0.0304 [0.0223, 0.0401] & 0.2149 [0.1998, 0.2322] & 0.0889 [0.0708, 0.1120] \\
GPT-4o proxy & 10\% & \textbf{0.0186} [0.0159, 0.0204] & 0.0214 [0.0171, 0.0246] & 0.0682 [0.0518, 0.0823] & 0.1486 [0.1190, 0.1782] \\
GPT-4o proxy & 20\% & \textbf{0.0137} [0.0115, 0.0154] & 0.0182 [0.0145, 0.0219] & 0.0681 [0.0515, 0.0822] & 0.1503 [0.1226, 0.1780] \\
GPT-4o proxy & 30\% & \textbf{0.0118} [0.0104, 0.0125] & 0.0165 [0.0130, 0.0200] & 0.0680 [0.0517, 0.0818] & 0.1508 [0.1234, 0.1782] \\
GPT-4o proxy & 50\% & \textbf{0.0093} [0.0090, 0.0096] & 0.0143 [0.0099, 0.0186] & 0.0685 [0.0509, 0.0829] & 0.1495 [0.1217, 0.1772] \\
GPT-4o proxy & 70\% & \textbf{0.0112} [0.0099, 0.0126] & 0.0159 [0.0122, 0.0199] & 0.0684 [0.0492, 0.0841] & 0.1493 [0.1212, 0.1773] \\
\bottomrule
\end{tabular}
\space
\end{adjustbox}
\end{table}

The calibration-size robustness check also repeats the same learning-curve experiment on the WMT MQM cluster and OpenMEVA-MAGS ROC/WP slices.  The pattern is consistent with the SummEval result: most of the gain arrives by 30--50\% calibration, the coefficient predictor remains below the reliability-only and zero baselines, and very large calibration fractions can slightly increase variance because the evaluation split becomes smaller.

\begin{table}[H]
\centering
\scriptsize
\caption{Calibration-size robustness on additional eligible repeated-rating analysis slices at $d=0.10$.}
\label{tab:calibration-size-extended}
\begin{adjustbox}{max width=\textwidth}
\begin{tabular}{llllllll}
\toprule
Setting & Calibration & Splits & Paths & A+B MAE & A-only MAE & $A_\nu$ MAE & Zero MAE \\
\midrule
WMT MQM clusters & 10\% & 7050 & 141 & \textbf{0.1327} & 0.1740 & 0.6410 & 0.8527 \\
WMT MQM clusters & 20\% & 7050 & 141 & \textbf{0.0992} & 0.1421 & 0.6286 & 0.8529 \\
WMT MQM clusters & 30\% & 7050 & 141 & \textbf{0.0846} & 0.1298 & 0.6230 & 0.8524 \\
WMT MQM clusters & 50\% & 7050 & 141 & \textbf{0.0783} & 0.1238 & 0.6174 & 0.8523 \\
WMT MQM clusters & 70\% & 7050 & 141 & \textbf{0.0851} & 0.1281 & 0.6195 & 0.8505 \\
OpenMEVA-MAGS & 10\% & 500 & 10 & \textbf{0.0231} & 0.0233 & 0.0281 & 0.3327 \\
OpenMEVA-MAGS & 20\% & 500 & 10 & \textbf{0.0168} & 0.0178 & 0.0226 & 0.3328 \\
OpenMEVA-MAGS & 30\% & 500 & 10 & \textbf{0.0148} & 0.0162 & 0.0207 & 0.3327 \\
OpenMEVA-MAGS & 50\% & 500 & 10 & \textbf{0.0140} & 0.0154 & 0.0199 & 0.3326 \\
OpenMEVA-MAGS & 70\% & 500 & 10 & \textbf{0.0157} & 0.0165 & 0.0214 & 0.3317 \\
\bottomrule
\end{tabular}
\space
\end{adjustbox}
\end{table}

Reachability filtering at larger radii does not drive the headline result.  At $d=0.15$, the reachable subset differs from the full $d=0.10$ population by only $-0.0020$ mean gain.  At $d=0.20$, the reachable subset has slightly lower mean gain and less negative calibration $B$, so the stress-window result should be read as a reachable-radius audit rather than a full-population claim.

\begin{table}[H]
\centering
\scriptsize
\caption{Reachability audit for the high-radius human-proxy sweep.  Deltas compare the reachable-path subset with all 32 paths at the shared $d=0.10$ radius.}
\label{tab:reachability-audit}
\begin{adjustbox}{max width=\textwidth}
\begin{tabular}{lrrrrrrr}
\toprule
Stress radius & Valid paths & Valid rows & Mean gain full & Mean gain reachable & $\Delta$ gain & $\Delta$ cal. $A$ & $\Delta$ cal. $B$ \\
\midrule
0.15 & 31 & 1,472 & 0.2147 & 0.2127 & -0.0020 & -0.0136 & 0.0161 \\
0.20 & 27 & 1,350 & 0.2147 & 0.2089 & -0.0058 & -0.0552 & 0.0957 \\
\bottomrule
\end{tabular}
\end{adjustbox}
\end{table}

Because the object is a gain curve rather than a single endpoint, Table~\ref{tab:curve-level-metrics} expands the compact main-table view with grid-mean absolute error and maximum error over $d\in\{0.005,0.01,0.02,0.05,0.10\}$, including descriptive empirical-fit and oracle references.  The ranking is consistent with the endpoint result: the coefficient curve is strong for human-proxy paths, and on GPT-4o it remains below $A$-only, $A_\nu$, and zero on the shared local grid.

\begin{table}[H]
\centering
\scriptsize
\caption{Expanded curve-level prediction errors on the shared local grid $d\le0.10$.  Grid MAE averages absolute error over $d\in\{0.005,0.01,0.02,0.05,0.10\}$ within each curve, then across curves; this grid differs from Table~\ref{tab:prediction-main}b.  Max-error columns summarize per-curve sup-norm error.}
\label{tab:curve-level-metrics}
\begin{adjustbox}{max width=\textwidth}
\begin{tabular}{llllll}
\toprule
Setting & Predictor & Curves & Grid MAE & Median max error & 90\% max error \\
\midrule
SummEval human & A+B & 1589 & 0.0094 & 0.0143 & 0.0361 \\
SummEval human & A-only & 1589 & 0.0137 & 0.0210 & 0.0812 \\
SummEval human & empirical fit & 1589 & \textbf{0.0093} & \textbf{0.0137} & \textbf{0.0342} \\
SummEval human & $A_\nu$ & 1589 & 0.0434 & 0.0676 & 0.1744 \\
SummEval human & zero & 1589 & 0.1201 & 0.1988 & 0.3003 \\
SummEval human & oracle & 1589 & 0.0012 & 0.0018 & 0.0141 \\
GPT-4o proxy & A+B & 200 & \textbf{0.0064} & \textbf{0.0095} & 0.0231 \\
GPT-4o proxy & A-only & 200 & 0.0080 & 0.0153 & 0.0312 \\
GPT-4o proxy & empirical fit & 200 & \textbf{0.0064} & 0.0099 & \textbf{0.0226} \\
GPT-4o proxy & $A_\nu$ & 200 & 0.0794 & 0.1521 & 0.1920 \\
GPT-4o proxy & zero & 200 & 0.0385 & 0.0707 & 0.0876 \\
GPT-4o proxy & oracle & 200 & 0.0002 & 0.0004 & 0.0012 \\
\bottomrule
\end{tabular}
\space
\end{adjustbox}
\end{table}

Table~\ref{tab:curve-residuals} reports the finite-radius residual sweep.  The $D\le0.05$ human-proxy row has median maximum residual $0.0006$ for the coefficient predictor, and the shared $D\le0.10$ row remains small relative to the realized gains.  This table separates finite-radius Taylor error from endpoint-prediction error.

\begin{table}[H]
\centering
\scriptsize
\caption{Finite-radius residual sweep for the local gain curve.  $D$ is the largest KL radius included in the curve residual computation.  This is a same-pool approximation audit: \texttt{theory\_ab} uses moments of the pool on which the exact curve is computed, not coefficients from a separate calibration split.  \texttt{aonly} drops curvature, \texttt{anu} is the reliability-only first-order approximation, and \texttt{zero} predicts no gain.  Empirical fits are fitted to this same exact curve.}
\label{tab:curve-residuals}
\begin{adjustbox}{max width=\textwidth}
\begin{tabular}{llllllll}
\toprule
Setting & $D$ & Paths & Predictor & Median max abs resid & 90\% max abs resid & Median RMSE & Median $R^2$ \\
\midrule
Human proxy & 0.05 & 32 & theory\_ab & 0.0006 & 0.0040 & 0.0002 & 0.99996 \\
Human proxy & 0.05 & 32 & aonly & 0.0050 & 0.0368 & 0.0022 & 0.99725 \\
Human proxy & 0.05 & 32 & fit\_ab & \textbf{0.0000} & 0.0003 & \textbf{0.0000} & 1.00000 \\
Human proxy & 0.05 & 32 & anu & 0.0424 & 0.0908 & 0.0245 & 0.62856 \\
Human proxy & 0.05 & 32 & zero & 0.1433 & 0.2265 & 0.0852 & -3.26251 \\
Human proxy & 0.05 & 32 & fit\_abc & \textbf{0.0000} & \textbf{0.0000} & \textbf{0.0000} & 1.00000 \\
Human proxy & 0.10 & 32 & theory\_ab & 0.0017 & 0.0133 & 0.0007 & 0.99983 \\
Human proxy & 0.10 & 32 & aonly & 0.0105 & 0.0790 & 0.0048 & 0.99364 \\
Human proxy & 0.10 & 32 & fit\_ab & 0.0001 & 0.0010 & \textbf{0.0000} & 1.00000 \\
Human proxy & 0.10 & 32 & anu & 0.0631 & 0.1626 & 0.0361 & 0.57232 \\
Human proxy & 0.10 & 32 & zero & 0.1982 & 0.2965 & 0.1198 & -3.36283 \\
Human proxy & 0.10 & 32 & fit\_abc & \textbf{0.0000} & \textbf{0.0001} & \textbf{0.0000} & 1.00000 \\
Human proxy & 0.20 & 27 & theory\_ab & 0.0041 & 0.0086 & 0.0018 & 0.99946 \\
Human proxy & 0.20 & 27 & aonly & 0.0211 & 0.0405 & 0.0097 & 0.98607 \\
Human proxy & 0.20 & 27 & fit\_ab & 0.0003 & 0.0005 & 0.0001 & 1.00000 \\
Human proxy & 0.20 & 27 & anu & 0.0949 & 0.1629 & 0.0544 & 0.50930 \\
Human proxy & 0.20 & 27 & zero & 0.2715 & 0.3399 & 0.1670 & -3.51251 \\
Human proxy & 0.20 & 27 & fit\_abc & \textbf{0.0000} & \textbf{0.0000} & \textbf{0.0000} & 1.00000 \\
GPT-4o proxy & 0.05 & 4 & theory\_ab & 0.0001 & 0.0004 & \textbf{0.0000} & 0.99999 \\
GPT-4o proxy & 0.05 & 4 & aonly & 0.0062 & 0.0085 & 0.0029 & 0.95733 \\
GPT-4o proxy & 0.05 & 4 & fit\_ab & \textbf{0.0000} & \textbf{0.0000} & \textbf{0.0000} & 1.00000 \\
GPT-4o proxy & 0.05 & 4 & anu & 0.1029 & 0.1240 & 0.0596 & -11.25104 \\
GPT-4o proxy & 0.05 & 4 & zero & 0.0523 & 0.0604 & 0.0318 & -3.44385 \\
GPT-4o proxy & 0.05 & 4 & fit\_abc & \textbf{0.0000} & \textbf{0.0000} & \textbf{0.0000} & 1.00000 \\
GPT-4o proxy & 0.10 & 4 & theory\_ab & 0.0003 & 0.0010 & 0.0001 & 0.99997 \\
GPT-4o proxy & 0.10 & 4 & aonly & 0.0122 & 0.0169 & 0.0057 & 0.90243 \\
GPT-4o proxy & 0.10 & 4 & fit\_ab & \textbf{0.0000} & 0.0001 & \textbf{0.0000} & 1.00000 \\
GPT-4o proxy & 0.10 & 4 & anu & 0.1484 & 0.1788 & 0.0860 & -12.78214 \\
GPT-4o proxy & 0.10 & 4 & zero & 0.0702 & 0.0830 & 0.0435 & -3.63254 \\
GPT-4o proxy & 0.10 & 4 & fit\_abc & \textbf{0.0000} & \textbf{0.0000} & \textbf{0.0000} & 1.00000 \\
GPT-4o proxy & 0.20 & 4 & theory\_ab & 0.0008 & 0.0026 & 0.0003 & 0.99984 \\
GPT-4o proxy & 0.20 & 4 & aonly & 0.0239 & 0.0333 & 0.0111 & 0.76309 \\
GPT-4o proxy & 0.20 & 4 & fit\_ab & 0.0001 & 0.0001 & \textbf{0.0000} & 1.00000 \\
GPT-4o proxy & 0.20 & 4 & anu & 0.2157 & 0.2593 & 0.1253 & -15.64416 \\
GPT-4o proxy & 0.20 & 4 & zero & 0.0919 & 0.1129 & 0.0587 & -3.95361 \\
GPT-4o proxy & 0.20 & 4 & fit\_abc & \textbf{0.0000} & \textbf{0.0000} & \textbf{0.0000} & 1.00000 \\
\bottomrule
\end{tabular}
\space
\end{adjustbox}
\end{table}

Table~\ref{tab:tail-outlier-robustness} and Figure~\ref{fig:tail-outlier-robustness} audit boundedness and tail sensitivity.  The figure should be read narrowly: Panel~A reports only correlation stability of the curvature coefficient $B$ relative to the original preprocessing, not $A$-stability or sign agreement, and Panel~B reports the median absolute change in exact endpoint gain at $d=0.10$ relative to the original preprocessing rather than the prediction error of the local law.  Table~\ref{tab:tail-outlier-robustness} therefore separates $A$-stability, $B$-correlation, $B$-sign agreement, gain-sign agreement, gain-shift size, and MAE.  Mild winsorization and controlled gold-side outliers are comparatively benign in this audit, rank-normalization can substantially change endpoint gain scale, especially in WMT, and proxy-side outliers can strongly perturb curvature, especially in OpenMEVA.  We therefore interpret plug-in $B$ as a useful local diagnostic rather than a tail-robust guarantee.

\begin{table}[tbp]
\centering
\scriptsize
\caption{Tail and outlier robustness audit.  The columns separate $A$-stability, curvature-coefficient correlation, $B$-sign agreement, gain-sign agreement, median absolute endpoint-gain shift at $d=0.10$ relative to the original preprocessing, and within-condition MAE.  Mild winsorization and controlled gold outliers are comparatively benign, rank-normalization can change endpoint gain scale, and proxy outliers can strongly perturb curvature.}
\label{tab:tail-outlier-robustness}
\begin{adjustbox}{max width=\textwidth}
\begin{tabular}{llrrrrrrr}
\toprule
Dataset & Condition & Paths & corr($A$) & corr($B$) & $B$ sign (\%) & Gain sign (\%) & median $|\Delta_{\rm shift}|$ & MAE \\
\midrule
SummEval & Winsor 1/99 & 32 & 0.999 & 0.997 & 93.8 & 100.0 & 0.003 & 0.005 \\
SummEval & Winsor 5/95 & 32 & 0.976 & 0.967 & 46.9 & 100.0 & 0.022 & 0.004 \\
SummEval & Rank-normal & 32 & 0.175 & 0.983 & 84.4 & 100.0 & 0.052 & 0.004 \\
SummEval & Gold outlier & 32 & 0.994 & 0.997 & 98.6 & 100.0 & 0.005 & 0.004 \\
SummEval & Proxy outlier & 32 & 0.997 & 0.940 & 100.0 & 100.0 & 0.092 & 0.017 \\
WMT-MQM-clusters & Winsor 1/99 & 141 & 0.999 & 0.957 & 98.6 & 100.0 & 0.012 & 0.007 \\
WMT-MQM-clusters & Winsor 5/95 & 141 & 0.989 & 0.859 & 97.2 & 100.0 & 0.075 & 0.008 \\
WMT-MQM-clusters & Rank-normal & 141 & 0.524 & 0.667 & 95.0 & 100.0 & 0.467 & 0.002 \\
WMT-MQM-clusters & Gold outlier & 141 & 0.996 & 0.979 & 97.5 & 100.0 & 0.022 & 0.011 \\
WMT-MQM-clusters & Proxy outlier & 141 & 0.997 & 0.773 & 95.0 & 99.9 & 0.352 & 0.072 \\
OpenMEVA-MAGS & Winsor 1/99 & 10 & 1.000 & 0.999 & 90.0 & 100.0 & 0.005 & 0.003 \\
OpenMEVA-MAGS & Winsor 5/95 & 10 & 0.995 & 0.953 & 100.0 & 100.0 & 0.024 & 0.003 \\
OpenMEVA-MAGS & Rank-normal & 10 & 0.768 & 0.884 & 90.0 & 100.0 & 0.055 & 0.002 \\
OpenMEVA-MAGS & Gold outlier & 10 & 0.959 & 0.955 & 93.0 & 100.0 & 0.007 & 0.003 \\
OpenMEVA-MAGS & Proxy outlier & 10 & 0.986 & -0.107 & 20.0 & 100.0 & 0.152 & 0.009 \\
\bottomrule
\end{tabular}
\end{adjustbox}
\end{table}
\space

\begin{figure}[tbp]
\centering
\includegraphics[width=0.88\linewidth]{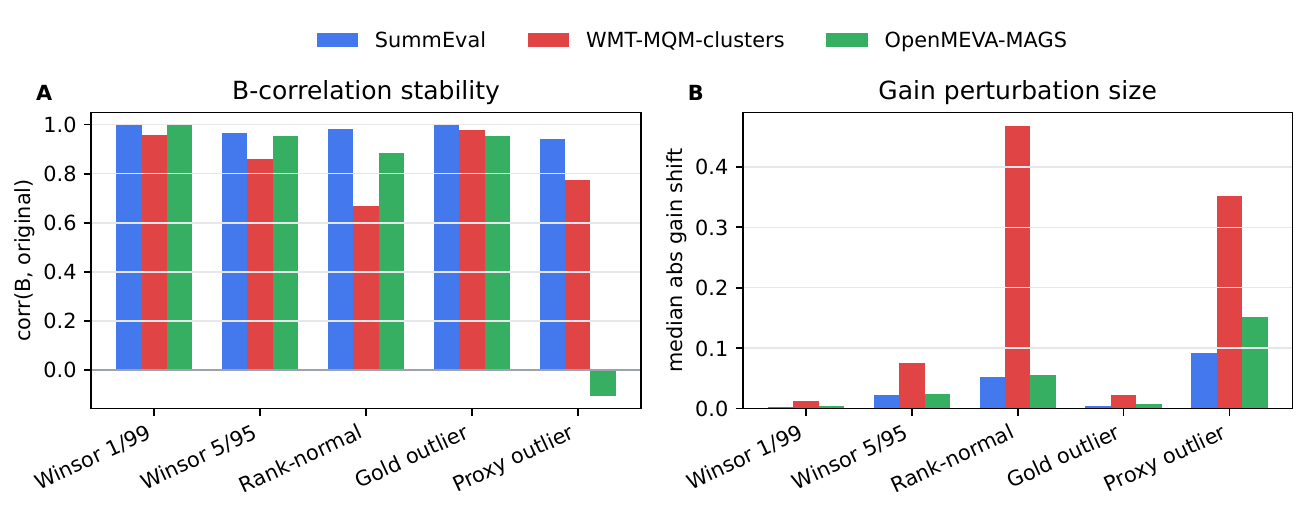}
\caption{Tail and outlier robustness audit.  Panel A reports stability of the curvature coefficient $B$, measured as correlation with the original-preprocessing coefficient.  Panel B reports the median absolute change in exact endpoint gain at $d=0.10$ relative to the original preprocessing; this is a gain-scale shift, not the prediction error of the local law.  Mild winsorization and controlled gold-side outliers are comparatively benign in this audit.  Rank-normalization can change endpoint gain scale, especially for WMT, and proxy-side outliers can substantially perturb curvature, especially in OpenMEVA.  Sign-agreement and $A$-stability metrics are reported in Table~\ref{tab:tail-outlier-robustness}.}
\label{fig:tail-outlier-robustness}
\end{figure}

\begin{figure}[tbp]
\centering
\includegraphics[width=0.95\linewidth]{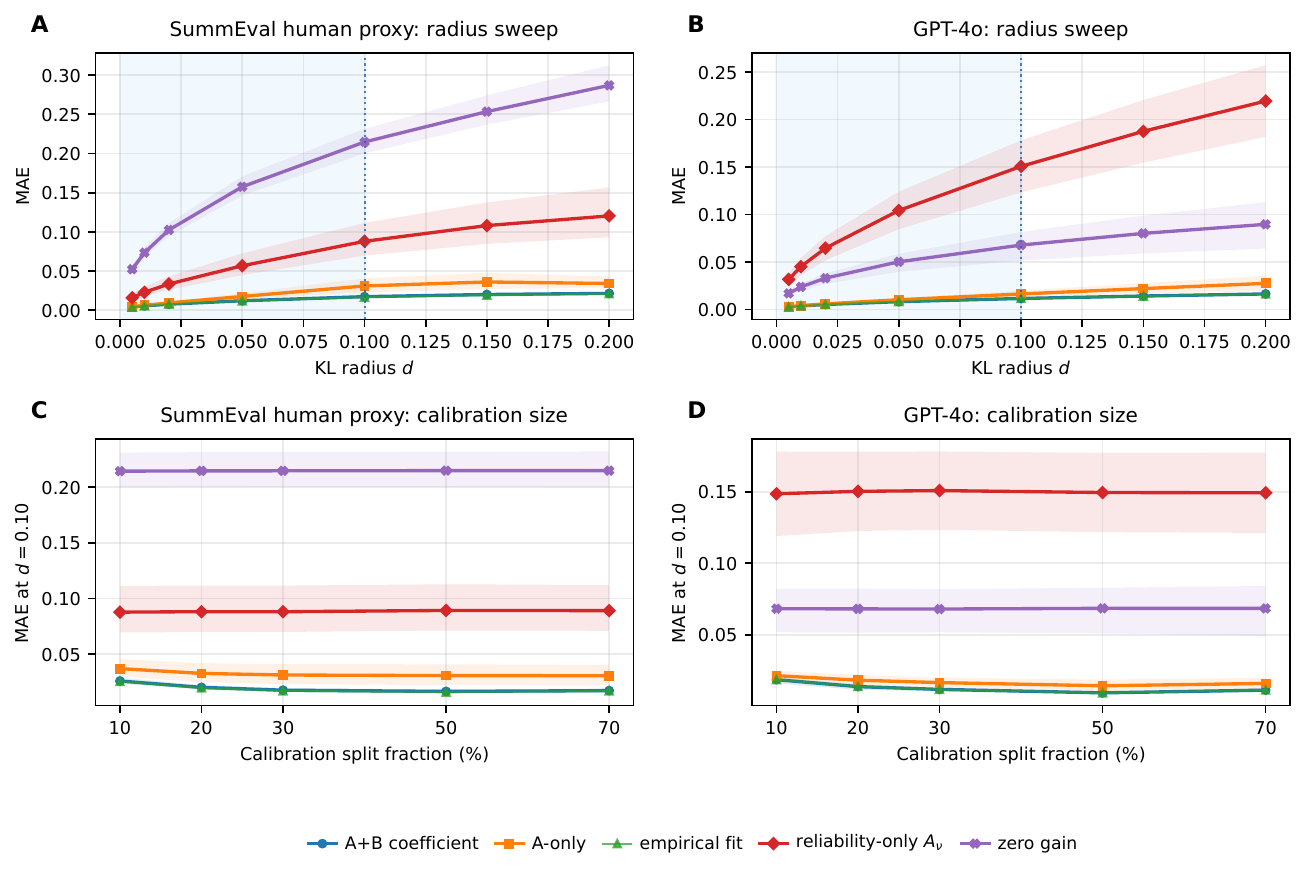}
\caption{Radius and calibration-size sensitivity: panels A/C are SummEval human-proxy paths; B/D are GPT-4o proxy paths.  Panels A and B report MAE as the KL radius $d$ increases; the shaded region and dashed line mark the shared $d\le0.10$ local window and main endpoint.  Panels C and D report MAE at $d=0.10$ as the calibration split fraction varies on the same proxy--gold paths; larger calibration fractions leave smaller evaluation splits.  Ribbons or intervals, where shown, are 95\% path-cluster bootstrap intervals for pooled-split MAE.  The shared legend compares coefficient $A{+}B$, $A$-only, the empirical fit, reliability-only $A_\nu$, and zero gain.}
\label{fig:accept-boost-radius-calibration}
\end{figure}

\begin{figure}[tbp]
\centering
\includegraphics[width=0.95\linewidth]{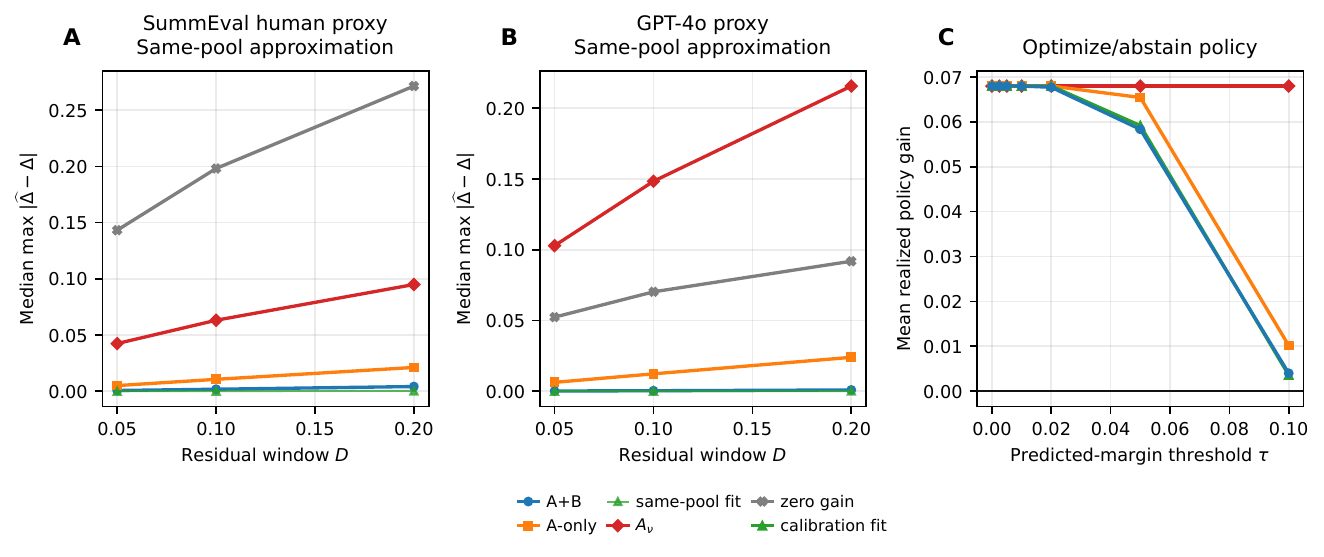}
\caption{Finite-range residual and selective-use diagnostics.  Panels A/B are same-pool approximation audits for SummEval/GPT-4o: moments and empirical fits use the same pool and exact curve whose median maximum residual is reported over $d\le D$. They do not measure held-out prediction error.  Panel C is the GPT-4o external-proxy selective-use audit: $\tau$ thresholds the predicted endpoint-gain margin before applying the optimize-if-positive rule, and lower-margin cases are abstained rather than treated as confident positive selections.}
\label{fig:accept-boost-residual-selective}
\end{figure}

\FloatBarrier
\subsection{OOD transfer details}
\label{app:ood}

The OOD experiments use calibration-only target standardization (\texttt{z\_cal}), a 30\% target calibration split, and 50 random repeats.  We report SummEval $\rightarrow$ WMT MQM clusters, WMT MQM clusters $\rightarrow$ SummEval, and SummEval $\rightarrow$ OpenMEVA-MAGS ROC/WP.  For $\theta=(A,B)$, the source-prior predictor is
\[
\hat\theta_{\rm shrink}=\lambda\hat\theta_{\rm target}+(1-\lambda)\bar\theta_{\rm source},
\]
where $\bar\theta_{\rm source}$ is the mean source coefficient vector.  The main table uses $\lambda=0.75$ as a fixed target-heavy shrinkage value for the summary row, not a value tuned on evaluation targets.  Table~\ref{tab:ood-lambda-sweep-app} reports the full $\lambda\in\{0,.25,.5,.75,1\}$ sweep.  The earlier SummEval $\rightarrow$ WMT-cluster \texttt{z\_both} variant, which uses the full target split for standardization, gives similar conclusions, but \texttt{z\_cal} is the more conservative prospective protocol.

\begin{table}[H]
\centering
\small
\caption{Cross-slice transfer prediction from SummEval to eligible WMT MQM clusters under \texttt{z\_cal}.  The $A_\nu$ row is the target-side reliability-only baseline.  MAE intervals at $d=0.10$ are percentile intervals from 2,000 path-level bootstrap draws (seed 0), averaging the 50 splits within each WMT target path.}
\label{tab:ood-transfer-filled}
\begin{adjustbox}{max width=0.92\textwidth}
\begin{tabular}{lrrrr}
\toprule
Predictor & MAE @0.05 & sign @0.05 & MAE @0.10 [95\% CI] & sign @0.10 \\
\midrule
source mean & 0.0315 & 100.0\% & 0.0436 [0.0393, 0.0479] & 100.0\% \\
target calibration & 0.0191 & 100.0\% & 0.0259 [0.0248, 0.0269] & 100.0\% \\
source prior & \textbf{0.0188} & 100.0\% & \textbf{0.0256} [0.0246, 0.0267] & 100.0\% \\
$A_\nu$ & 0.1371 & 100.0\% & 0.2021 [0.1955, 0.2083] & 100.0\% \\
zero & 0.1792 & 0.0\% & 0.2452 [0.2389, 0.2517] & 0.0\% \\
\bottomrule
\space
\end{tabular}
\end{adjustbox}
\end{table}

Table~\ref{tab:ood-main} summarizes all three routes at $d=0.10$.  SummEval~$\rightarrow$~WMT clusters tests whether a summarization prior helps an eligible translation-quality slice, WMT~$\rightarrow$~SummEval tests the reverse direction, and SummEval~$\rightarrow$~OpenMEVA-MAGS tests transfer into a different generation domain on a smaller slice.  Source-only transfer beats zero gain on all three routes and $\Anu$ on two of three, so it is informative but not invariant.  Target calibration improves on source-only transfer on every route.  The fixed target-heavy source prior ($\lambda=0.75$, not tuned on evaluation targets) is slightly below target-only calibration on both WMT routes but above it on SummEval~$\rightarrow$~OpenMEVA, so source coefficients are useful priors that target calibration should update before the forecast is used.

\begin{table}[H]
\centering
\small
\caption{Cross-slice transfer at $d=0.10$ under calibration-only target standardization (\texttt{z\_cal}).  WMT routes use the eligible MQM cluster slice, and OpenMEVA uses MAGS ROC/WP.  Source-prior uses a fixed $\lambda=0.75$ target-heavy shrinkage value.  \Appref{app:ood}{B} gives interval and sensitivity details where available.  Lower MAE is better.}
\label{tab:ood-main}
\begin{adjustbox}{max width=\textwidth}
\begin{tabular}{lrrrrr}
\toprule
Route & Source-only & Source-prior & Target cal. & $\Anu$ & Zero \\
\midrule
SummEval $\rightarrow$ WMT clusters & 0.0436 & \textbf{0.0256} & 0.0259 & 0.2021 & 0.2452 \\
WMT clusters $\rightarrow$ SummEval & 0.0306 & \textbf{0.0206} & 0.0230 & 0.1755 & 0.2718 \\
SummEval $\rightarrow$ OpenMEVA-MAGS & 0.1295 & 0.0325 & \textbf{0.0170} & 0.0429 & 0.4053 \\
\bottomrule
\end{tabular}
\end{adjustbox}
\end{table}

The most important OOD metric is magnitude MAE, not sign accuracy.  The calibration-only protocol confirms the main conclusion across directions: target calibration reduces error relative to source-only transfer on all three routes, but fixed source-prior shrinkage does not consistently improve over target-only calibration.  The corresponding SummEval $\rightarrow$ WMT-cluster \texttt{z\_both} MAE values at $d=0.10$ are $0.0407$ for source mean, $0.0258$ for target calibration, $0.0235$ for source prior, $0.2053$ for target $A_\nu$, and $0.2432$ for zero gain.

\begin{table}[H]
\centering
\scriptsize
\caption{OOD transfer sensitivity to source-prior shrinkage at $d=0.10$.  $\lambda=0$ is source-only and $\lambda=1$ is target calibration only.}
\label{tab:ood-lambda-sweep-app}
\begin{adjustbox}{max width=\textwidth}
\begin{tabular}{lllllllll}
\toprule
Route & $\lambda$ & Splits & Source-only & Target cal. & Source-prior & $A_\nu$ & Zero & Prior sign \\
\midrule
SummEval $\rightarrow$ WMT clusters & 0 & 7050 & 0.0436 & \textbf{0.0259} & 0.0436 & 0.2021 & 0.2452 & 100.0\% \\
SummEval $\rightarrow$ WMT clusters & 0.25 & 7050 & 0.0436 & \textbf{0.0259} & 0.0358 & 0.2021 & 0.2452 & 100.0\% \\
SummEval $\rightarrow$ WMT clusters & 0.5 & 7050 & 0.0436 & \textbf{0.0259} & 0.0293 & 0.2021 & 0.2452 & 100.0\% \\
SummEval $\rightarrow$ WMT clusters & 0.75 & 7050 & 0.0436 & 0.0259 & \textbf{0.0256} & 0.2021 & 0.2452 & 100.0\% \\
SummEval $\rightarrow$ WMT clusters & 1 & 7050 & 0.0436 & \textbf{0.0259} & \textbf{0.0259} & 0.2021 & 0.2452 & 100.0\% \\
WMT clusters $\rightarrow$ SummEval & 0 & 1600 & 0.0306 & \textbf{0.0230} & 0.0306 & 0.1755 & 0.2718 & 100.0\% \\
WMT clusters $\rightarrow$ SummEval & 0.25 & 1600 & 0.0306 & \textbf{0.0230} & 0.0248 & 0.1755 & 0.2718 & 100.0\% \\
WMT clusters $\rightarrow$ SummEval & 0.5 & 1600 & 0.0306 & 0.0230 & \textbf{0.0214} & 0.1755 & 0.2718 & 100.0\% \\
WMT clusters $\rightarrow$ SummEval & 0.75 & 1600 & 0.0306 & 0.0230 & \textbf{0.0206} & 0.1755 & 0.2718 & 100.0\% \\
WMT clusters $\rightarrow$ SummEval & 1 & 1600 & 0.0306 & \textbf{0.0230} & \textbf{0.0230} & 0.1755 & 0.2718 & 100.0\% \\
SummEval $\rightarrow$ OpenMEVA-MAGS & 0 & 500 & 0.1295 & \textbf{0.0170} & 0.1295 & 0.0429 & 0.4053 & 100.0\% \\
SummEval $\rightarrow$ OpenMEVA-MAGS & 0.25 & 500 & 0.1295 & \textbf{0.0170} & 0.0966 & 0.0429 & 0.4053 & 100.0\% \\
SummEval $\rightarrow$ OpenMEVA-MAGS & 0.5 & 500 & 0.1295 & \textbf{0.0170} & 0.0637 & 0.0429 & 0.4053 & 100.0\% \\
SummEval $\rightarrow$ OpenMEVA-MAGS & 0.75 & 500 & 0.1295 & \textbf{0.0170} & 0.0325 & 0.0429 & 0.4053 & 100.0\% \\
SummEval $\rightarrow$ OpenMEVA-MAGS & 1 & 500 & 0.1295 & \textbf{0.0170} & \textbf{0.0170} & 0.0429 & 0.4053 & 100.0\% \\
\bottomrule
\end{tabular}
\space
\end{adjustbox}
\end{table}

\FloatBarrier
\subsection{GPT-4o external-proxy diagnostics}
\label{app:gpt-failure-filled}

The available GPT-4o proxy file covers all 1,600 SummEval items.  The GPT-4o external-proxy experiment differs from the human-proxy setting primarily in proxy ranking and top-tail alignment.  Table~\ref{tab:margin-sweep-filled} reports the current margin-conditioned fixed-gain sign behavior, and Table~\ref{tab:gpt-failure-filled} summarizes the broader rank and tail diagnostics.  The median proxy standard deviation is $0.678$, and the median model/human proxy-standard-deviation ratio across dimensions is $0.909$.  Median Spearman is $0.285$, median Pearson is $0.329$, and median top-10\% overlap with the human-gold top set is $0.150$.  Relative to the human-proxy view, the median drops are $0.288$ in Spearman and $0.256$ in top-10\% overlap.

\begin{table}[H]
\centering
\small
\caption{GPT-4o external-proxy ranking and tail diagnostics.  Overall rows are medians over SummEval dimensions unless otherwise stated.}
\label{tab:gpt-failure-filled}
\begin{adjustbox}{max width=0.82\textwidth}
\begin{tabular}{lr}
\toprule
Diagnostic & Overall \\
\midrule
median proxy std. & 0.678 \\
median Spearman(proxy, human gold) & 0.285 \\
median Pearson(proxy, human gold) & 0.329 \\
median top-10\% overlap with human top set & 0.150 \\
median top-10\% gold delta & 0.1666 \\
median top-10\% oracle gap & 0.682 \\
std ratio model/human, median over dimensions & 0.909 \\
$\Delta$ Spearman model--human, median & -0.288 \\
$\Delta$ top-10 overlap model--human, median & -0.256 \\
\bottomrule
\end{tabular}
\end{adjustbox}
\end{table}

\FloatBarrier
\subsection{External-judge reproducibility record}
\label{app:gpt-repro}

External judges can change across model snapshots and prompts, so the GPT-4o result is treated as an external-proxy evaluation of one fixed model-and-prompt configuration rather than a claim about all LLM judges.  The reproducibility package includes the aggregate proxy scores and the scripts used for the reported analysis.  Table~\ref{tab:gpt4o-repro-record} summarizes what can be recovered from the available records and which fields are not recoverable from the aggregate file alone.  In particular, the aggregate artifact is sufficient to reproduce the reported post-processing analyses, but exact API reruns are not fully reproducible without a raw per-call log containing the exact model snapshot, decoding parameters, and invalid-output/retry counts.

\begin{table}[H]
\centering
\scriptsize
\caption{Reproducibility record for the GPT-4o external-proxy configuration.}
\label{tab:gpt4o-repro-record}
\begin{adjustbox}{max width=\textwidth}
\begin{tabular}{lll}
\toprule
Field & Value & Status \\
\midrule
Aggregated proxy artifact & data/model\_proxy\_4o\_aggregated.csv & 6400 rows; dimensions=coherence,consistency,fluency,relevance; n\_judges=[3, 6, 9, 12] \\
Exact model snapshot & not stored in aggregated artifact & filename indicates GPT-4o; raw per-call log is needed for an exact model id \\
System prompt & experiments/09\_gpt\_proxy\_analysis/examples/summary\_system\_prompt.txt & sha256[:12]=ec6ab5c98615 \\
User prompt template & experiments/09\_gpt\_proxy\_analysis/examples/summary\_user\_template.txt & sha256[:12]=e121b070a2ea \\
Score parser & Pydantic RatingOutput(score, confidence, rationale) & experiments/09\_gpt\_proxy\_analysis/01\_generate\_proxy\_scores.py \\
Score range & 1.0--5.0 in generator defaults & score is clamped to range after parsing \\
Decoding parameters & not explicitly set in generator & temperature/top\_p/max\_tokens use API defaults unless original command overrode code \\
Invalid-output handling & structured parse with retries & default max\_retries=5; invalid count not stored in aggregate \\
Normalization split & not stored in proxy artifact & downstream experiments standardize inside each analysis protocol \\
\bottomrule
\end{tabular}
\space
\end{adjustbox}
\end{table}

\FloatBarrier
\subsection{Finite-range diagnostics and best-of-\texorpdfstring{$N$}{N} bridge}
\label{app:stability-bon-filled}

Finite-range diagnostics check whether the $d\le0.1$ window used for local fitting is numerically regular.  The SummEval human-proxy setting has median minimum tilted variance $0.631$ and median minimum effective sample size 1338.  GPT-4o proxy has median minimum variance $0.375$ and ESS 1327.  Thus the GPT-4o external-proxy behavior is not caused by tilt-path collapse on the shared local range.  Its tail quality is less favorable than the human-proxy setting: the median tail-at-max minus baseline is $-0.054$, with 50\% of paths below baseline.  Table~\ref{tab:finite-range-stability-filled} and Figure~\ref{fig:stability-multidataset} report the finite-range audit, while Tables~\ref{tab:bon-nladder}--\ref{tab:prompt-conditional-bon-scope} and Figure~\ref{fig:bon-bootstrap-app-filled} report the Best-of-$N$ bridge.

\begin{table}[H]
\centering
\small
\caption{Finite-range diagnostics on $d\le0.1$. Cumulant columns are medians across paths of each path's maximum absolute cumulant.}
\label{tab:finite-range-stability-filled}
\begin{adjustbox}{max width=\textwidth}
\begin{tabular}{lrrrrrrr}
\toprule
Setting & Paths & median min Var$_\pi(r)$ & median min ESS & median max $|\kappa_4(r)|$ & median max $|\kappa_4(J,r)|$ & tail-baseline & tail below baseline \\
\midrule
Human proxy & 32 & 0.631 & 1338 & 0.258 & 0.150 & 0.571 & 0.0\% \\
GPT-4o proxy & 4 & 0.375 & 1327 & 0.035 & 0.040 & -0.054 & 50.0\% \\
\bottomrule
\end{tabular}
\end{adjustbox}
\end{table}

Best-of-$N$ is a discrete selection rule, not the smooth exponential-tilt path analyzed by the theorem.  Table~\ref{tab:bon-nladder} computes the exact pooled BoN distribution, with replacement and uniform tie-breaking, on the full item set of each of 191 paths and for $N\in\{2,3,4,6,8,12,16,24,32,48,64\}$.  At $N=2$, the median induced radius is $0.166$--$0.193$, outside the checked window in every setting, and the near-untied ROUGE-1 proxy matches the atomless closed forms $d_N=\log N-(N-1)/N$ and $\mathrm{RelESS}_N=(2N-1)/N^2$.  Only 4 of the 2,101 nontrivial points (0.19\%) lie inside $d\le0.10$.  All four are SummEval held-out expert paths at $N=2$ (three consistency raters and one fluency rater; Table~\ref{tab:bon-inwindow}), on which 85.4--89.1\% of the 1,600 items carry the top proxy score.  BoN and the pooled tilt weight items only through their proxy score, so both concentrate on that tied block, and the reachable KL is capped at $\log(m/n_{\rm top})=0.116$--$0.157$.  At these four points, the coefficients and the descriptive fit are computed on the same pool on which BoN is evaluated, so the comparison is a same-pool audit rather than a held-out prediction.  All listed nonzero predictors overestimate the four BoN gains; zero underestimates them; the mean absolute error is $0.0197$ for $A{+}B$, $0.0884$ for $A$-only, $0.1238$ for $\Anu$, $0.2490$ for zero gain, and $0.0043$ for the descriptive two-term fit.  These are scope exceptions from one dataset, not evidence that BoN generally lands in the window.

Table~\ref{tab:bon-softened} sweeps the window continuously with two softened rules on 30/70 splits (coefficients from the calibration split, errors on the evaluation split): the mixture $p_\alpha=(1-\alpha)p_0+\alpha p_{{\rm BoN},8}$ and the rank tilt $q_{\lambda,i}\propto p_{0,i}U_i^\lambda$, where $U_i$ is the one-based average rank of $r_i$ divided by the pool size (ties receive their average rank).  Inside $0.005<d\le0.10$, $A{+}B$ is below zero gain and $\Anu$ in every cell and within $0.001$ of the calibration two-term fit.  It improves on $A$-only for SummEval and WMT, favors it for the GPT-4o mixture, nearly ties it for the GPT-4o rank tilt, and is worse than $A$-only on OpenMEVA.  This supports an informative local approximation for smooth rules inside the window, not universal dominance of the curvature term.  Only two to four grid values per split fall inside the window, mostly in its lower half.

The prompt-conditional audit in Table~\ref{tab:prompt-conditional-bon-scope} and Figure~\ref{fig:prompt-conditional-bon-scope} applies BoN within each prompt instead of over the pooled item set.  Even at $N=2$, the mean prompt KL is outside the local window (0.144 on SummEval and 0.131 on OpenMEVA), and only 12.5\% and 11.7\% of prompts are inside.  At $N=4$, only the six saturated SummEval expert paths, on which 78--89\% of items share the top score, remain reachable by a pooled raw-score tilt within $d\le0.25$; for $N\ge8$, no path is.  For $N\ge4$, the prompts with prompt KL at most 0.10 (5.8\% on SummEval and 3.0\% on OpenMEVA) are prompts whose candidates are all or almost all tied, not instances of mild selection.  This mismatch is why BoN is reported as a scope audit rather than as another validation of the smooth local theorem.

\begin{table}[H]
\centering
\scriptsize
\caption{Exact pooled Best-of-$N$ (with replacement, uniform tie-breaking) over each path's full item set; $d_{\rm BoN}=\KL(p\Vert p_0)$ in nats and $\mathrm{RelESS}=\mathrm{ESS}/m$.  Nontrivial means $N\ge2$ (11 values per path); counts are (path, $N$) points with their share.  Held-out-human rows use a held-out rater as the proxy; the ROUGE-1 and GPT-4o rows use that model.  Item pools have $m=1{,}600$ (SummEval), $430$--$2{,}820$ (WMT), and $1{,}000$ (OpenMEVA) items.  MAE uses only the four in-window points (same-pool coefficients, $n=4$); -- means no in-window point.  The $d\le0.25$ counts include saturated paths whose reachable KL $\log(m/n_{\rm top})$ is below 0.25.  The lower panel gives medians by $N$.}
\label{tab:bon-nladder}
\begin{adjustbox}{max width=\textwidth}
\begin{tabular}{@{}c@{}}
\begin{tabular}{lrrrrrrr}
\toprule
 &  & \multicolumn{2}{c}{Median at $N=2$} & \multicolumn{2}{c}{Nontrivial points ($N\ge2$)} & \multicolumn{2}{c}{In-window MAE} \\
\cmidrule(lr){3-4}\cmidrule(lr){5-6}\cmidrule(lr){7-8}Setting & Paths & $d_{\mathrm{BoN}}$ & RelESS & $d\le0.10$ & $d\le0.25$ & A+B & A-only \\
\midrule
SummEval human & 32 & 0.172 & 0.772 & 4 (1.1\%) & 92 (26.1\%) & \textbf{0.0197} & 0.0884 \\
SummEval ROUGE-1 & 4 & 0.193 & 0.750 & 0 (0.0\%) & 4 (9.1\%) & -- & -- \\
SummEval GPT-4o & 4 & 0.189 & 0.754 & 0 (0.0\%) & 4 (9.1\%) & -- & -- \\
WMT MQM & 141 & 0.186 & 0.760 & 0 (0.0\%) & 154 (9.9\%) & -- & -- \\
OpenMEVA-MAGS & 10 & 0.166 & 0.771 & 0 (0.0\%) & 10 (9.1\%) & -- & -- \\
\midrule
All settings & 191 & 0.185 & 0.761 & 4 (0.2\%) & 264 (12.6\%) & \textbf{0.0197} & 0.0884 \\
\bottomrule
\end{tabular}
\space\\[6pt]
\begin{tabular}{lrrrrrr}
\toprule
Setting & $N=2$ & $N=3$ & $N=4$ & $N=8$ & $N=16$ & $N=64$ \\
\midrule
\multicolumn{7}{l}{\textit{Median $d_{\mathrm{BoN}}$ (nats)}} \\
SummEval human & 0.172 & 0.377 & 0.549 & 0.988 & 1.233 & 1.266 \\
SummEval ROUGE-1 & 0.193 & 0.432 & 0.636 & 1.204 & 1.835 & 3.174 \\
SummEval GPT-4o & 0.189 & 0.421 & 0.619 & 1.171 & 1.768 & 3.000 \\
WMT MQM & 0.186 & 0.406 & 0.582 & 0.959 & 1.117 & 1.131 \\
OpenMEVA-MAGS & 0.166 & 0.379 & 0.560 & 1.059 & 1.592 & 2.563 \\
\midrule
\multicolumn{7}{l}{\textit{Median relative ESS}} \\
SummEval human & 0.772 & 0.592 & 0.480 & 0.325 & 0.285 & 0.282 \\
SummEval ROUGE-1 & 0.750 & 0.556 & 0.438 & 0.234 & 0.121 & 0.031 \\
SummEval GPT-4o & 0.754 & 0.562 & 0.444 & 0.242 & 0.128 & 0.034 \\
WMT MQM & 0.760 & 0.583 & 0.484 & 0.352 & 0.324 & 0.323 \\
OpenMEVA-MAGS & 0.771 & 0.584 & 0.468 & 0.265 & 0.146 & 0.051 \\
\bottomrule
\end{tabular}
\space
\end{tabular}
\end{adjustbox}
\end{table}

\begin{table}[H]
\centering
\scriptsize
\caption{The four nontrivial BoN points inside $d\le0.10$, all SummEval held-out expert paths at $N=2$ ($m=1{,}600$; gold is the mean of the other two experts).  Coefficients and the empirical fit use the same full pool on which BoN is evaluated (descriptive, not held-out).  Gains, predictions, and errors are in SummEval 1--5 units.}
\label{tab:bon-inwindow}
\begin{adjustbox}{max width=\textwidth}
\begin{tabular}{lrrrrr}
\toprule
 & \multicolumn{3}{c}{Consistency, expert} & Fluency, expert &  \\
\cmidrule(lr){2-4}\cmidrule(lr){5-5}Quantity & rater 0 & rater 1 & rater 2 & rater 2 & Mean \\
\midrule
$d_{\mathrm{BoN}}$ (nats) & 0.096 & 0.093 & 0.078 & 0.088 & 0.089 \\
ESS & 1{,}422 & 1{,}427 & 1{,}457 & 1{,}438 & 1{,}436 \\
RelESS & 0.889 & 0.892 & 0.911 & 0.899 & 0.898 \\
Items at top proxy score (\%) & 85.4 & 86.0 & 89.1 & 87.1 & 86.9 \\
KL cap $\log(m/n_{\mathrm{top}})$ & 0.157 & 0.151 & 0.116 & 0.138 & 0.140 \\
\midrule
Actual BoN gain & 0.2806 & 0.2510 & 0.2761 & 0.1881 & 0.2490 \\
A+B prediction & 0.3020 & 0.2775 & 0.2941 & 0.2010 & 0.2686 \\
\midrule
$|$error$|$, A+B & 0.0213 & 0.0265 & 0.0180 & 0.0129 & 0.0197 \\
$|$error$|$, A-only & 0.0999 & 0.0987 & 0.0837 & 0.0711 & 0.0884 \\
$|$error$|$, $A_\nu$ & 0.1617 & 0.0490 & 0.1422 & 0.1424 & 0.1238 \\
$|$error$|$, empirical fit & \textbf{0.0050} & \textbf{0.0087} & \textbf{0.0015} & \textbf{0.0020} & \textbf{0.0043} \\
$|$error$|$, zero & 0.2806 & 0.2510 & 0.2761 & 0.1881 & 0.2490 \\
\bottomrule
\end{tabular}
\space
\end{adjustbox}
\end{table}

\begin{table}[H]
\centering
\scriptsize
\caption{Softened BoN-direction rules inside $0.005<d\le0.10$: the mixture with BoN at $N=8$ and the rank tilt.  Pooled item-set rules (not prompt-conditional); coefficients from 30\% calibration splits, errors on the 70\% evaluation items, 30 splits per path.  Path-macro MAE weights each path equally; ``Paths: A+B wins'' counts paths on which $A{+}B$ has lower error than $A$-only.  Only two to four grid values per split fall in the window.  Native units, not comparable across datasets.}
\label{tab:bon-softened}
\begin{adjustbox}{max width=\textwidth}
\begin{tabular}{llrrrrrr}
\toprule
 &  & \multicolumn{5}{c}{Path-macro MAE, $0.005<d\le0.10$} & Paths: \\
\cmidrule(lr){3-7}Dataset & Rule & A+B & A-only & $A_\nu$ & Emp.\ fit & Zero & A+B wins \\
\midrule
SummEval human & Mixture & 0.0189 & 0.0244 & 0.0556 & \textbf{0.0186} & 0.0939 & 32/32 \\
 & Rank tilt & 0.0103 & 0.0154 & 0.0458 & \textbf{0.0101} & 0.1163 & 31/32 \\
\midrule
SummEval GPT-4o & Mixture & \textbf{0.0168} & 0.0209 & 0.0976 & 0.0169 & 0.0241 & 4/4 \\
 & Rank tilt & 0.0064 & 0.0065 & 0.0784 & \textbf{0.0063} & 0.0414 & 3/4 \\
\midrule
WMT MQM & Mixture & 0.1207 & 0.1432 & 0.3917 & \textbf{0.1198} & 0.3140 & 135/141 \\
 & Rank tilt & 0.0492 & 0.0607 & 0.3197 & \textbf{0.0488} & 0.4794 & 132/141 \\
\midrule
OpenMEVA-MAGS & Mixture & 0.0174 & \textbf{0.0156} & 0.0235 & 0.0169 & 0.1552 & 2/10 \\
 & Rank tilt & 0.0156 & \textbf{0.0139} & 0.0211 & 0.0153 & 0.1421 & 3/10 \\
\bottomrule
\end{tabular}
\space
\end{adjustbox}
\end{table}

\begin{table}[H]
\centering
\scriptsize
\caption{Prompt-conditional Best-of-$N$ scope audit.  KL is computed within each prompt. The reported mean averages path-level prompt means; the median is the median of path-level prompt medians.  ``Pooled tilt reachable'' is the share of paths whose mean prompt KL is below both 0.25 and the pooled tilt's supremum $\log(m/n_{\rm top})$, so that one pooled raw-score tilt attains it.  Even $N=2$ is outside the local window on average and for most prompts.  At $N=4$, only the six saturated SummEval expert paths remain reachable; for $N\ge8$, no path is.  For $N\ge4$, the prompts with KL at most 0.10 are prompts whose candidates are (almost) all tied.}
\label{tab:prompt-conditional-bon-scope}
\begin{adjustbox}{max width=\textwidth}
\begin{tabular}{lrrrrrr}
\toprule
 &  & mean & median & prompts & pooled tilt & mean \\
 &  & prompt & prompt & KL$\le.10$ & reachable, & BoN \\
Dataset & $N$ & KL & KL & (\%) & $d\le0.25$ (\%) & gain \\
\midrule
SummEval & 2 & 0.144 & 0.161 & 12.5 & 100.0 & 0.226 \\
SummEval & 4 & 0.436 & 0.518 & 5.8 & 18.8 & 0.361 \\
SummEval & 8 & 0.741 & 0.922 & 5.8 & 0.0 & 0.437 \\
SummEval & 16 & 0.989 & 1.244 & 5.8 & 0.0 & 0.476 \\
SummEval & 32 & 1.143 & 1.385 & 5.8 & 0.0 & 0.493 \\
SummEval & 64 & 1.207 & 1.386 & 5.8 & 0.0 & 0.497 \\
\midrule
OpenMEVA-MAGS & 2 & 0.131 & 0.142 & 11.7 & 100.0 & 0.285 \\
OpenMEVA-MAGS & 4 & 0.451 & 0.499 & 3.0 & 0.0 & 0.531 \\
OpenMEVA-MAGS & 8 & 0.803 & 0.835 & 3.0 & 0.0 & 0.698 \\
OpenMEVA-MAGS & 16 & 1.043 & 0.914 & 3.0 & 0.0 & 0.772 \\
OpenMEVA-MAGS & 32 & 1.113 & 0.916 & 3.0 & 0.0 & 0.785 \\
OpenMEVA-MAGS & 64 & 1.117 & 0.916 & 3.0 & 0.0 & 0.786 \\
\bottomrule
\end{tabular}
\space
\end{adjustbox}
\end{table}

\begin{figure}[H]
\centering
\includegraphics[width=0.86\linewidth]{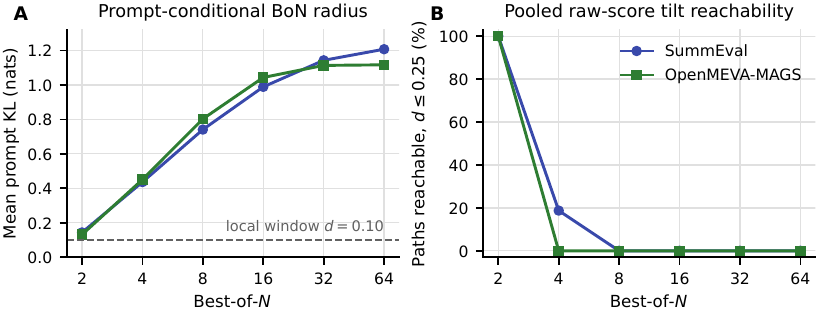}
\caption{Prompt-conditional Best-of-$N$ scope audit.  Panel A reports the mean prompt-level KL of exact with-replacement prompt-conditional BoN, with the $d=0.10$ window shown by the dashed line.  Panel B reports the fraction of paths whose mean prompt KL is attainable by a pooled raw-score tilt within $d\le0.25$, accounting for paths whose tied top scores cap the pooled tilt.  Mean BoN gains are in Table~\ref{tab:prompt-conditional-bon-scope}.}
\label{fig:prompt-conditional-bon-scope}
\end{figure}

\begin{figure}[H]
\centering
\includegraphics[width=\linewidth]{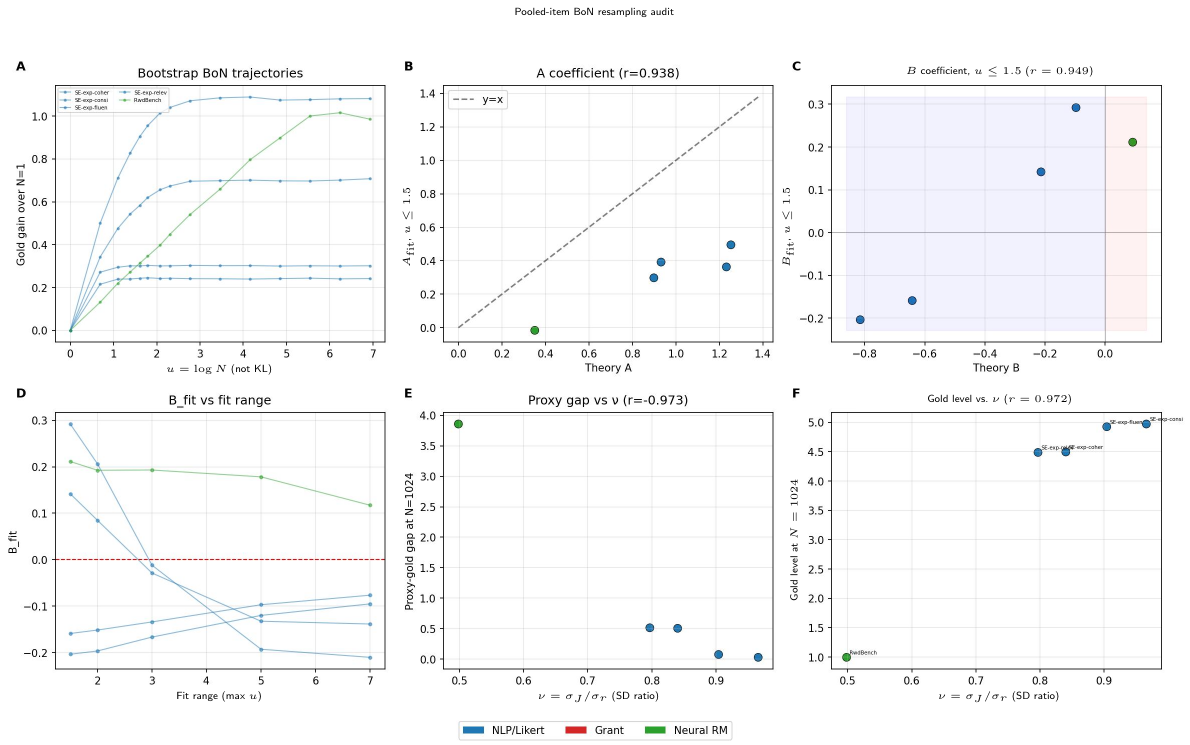}
\caption{Legacy pooled-item BoN resampling audit, not prompt-conditional selection or a theorem test. Candidates are drawn with replacement from the pooled item set, and each candidate's proxy is an independently drawn rating or RewardBench model column; gold is the itemwise mean over all columns, not LOO. Panels A--D use the bridge coordinate $u=\log N$, not the induced KL. Panels B--C compare fitted and theory coefficients, and Panel D shows fitted $B$ as the fit range grows. Panels E/F show the proxy--gold gap/gold level, not gold gain. The displayed $\nu$ is a square-root variance ratio, not ICC. }
\label{fig:bon-bootstrap-app-filled}
\end{figure}

\begin{figure}[H]
\centering
\includegraphics[width=0.88\linewidth]{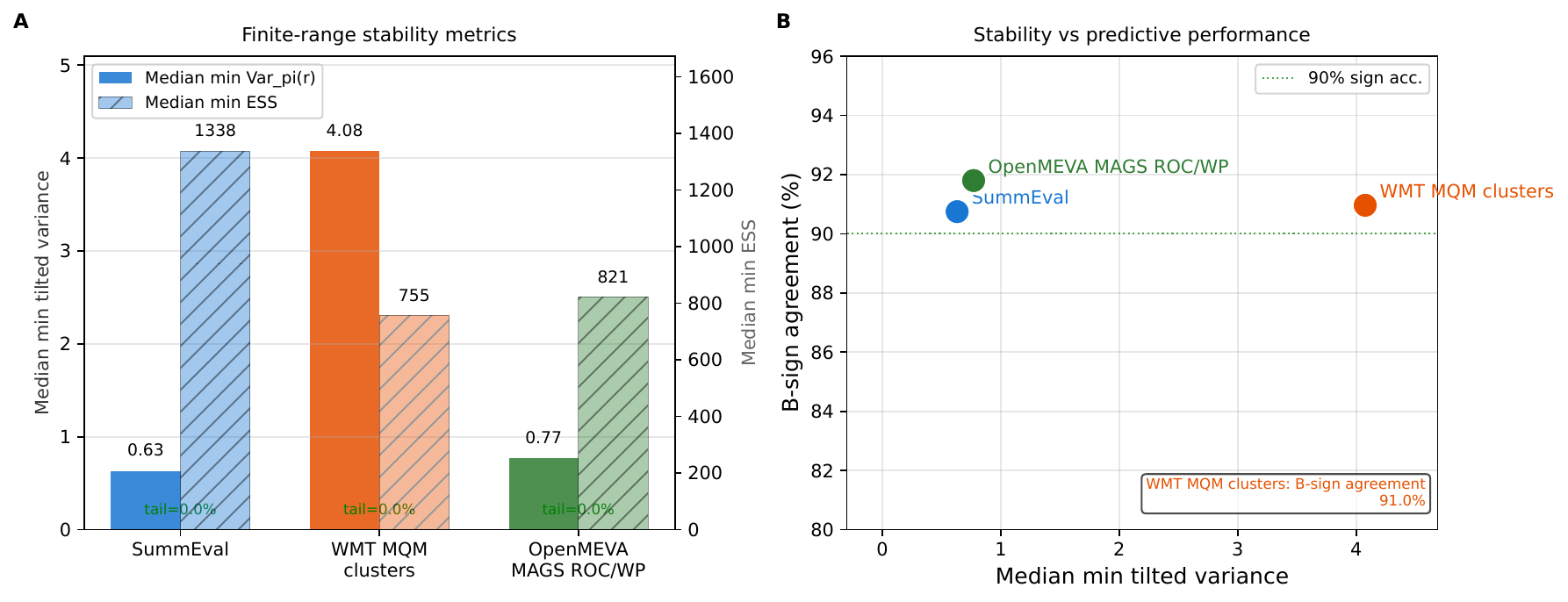}
\caption{Finite-range diagnostics across eligible analysis slices.  Panel A reports median minimum tilted variance on the left scale and median minimum effective sample size on the accompanying labels/right scale for each slice on the shared local window $d\le0.1$; tail labels summarize the share of paths whose tilted tail gold falls below the reference baseline.  Panel B relates median minimum tilted variance to curvature-sign agreement with the evaluation curve-fit coefficient.  This is a numerical stability diagnostic, not a guarantee of accurate prediction.}
\label{fig:stability-multidataset}
\end{figure}

\FloatBarrier
\subsection{LoRA-DPO bridge details}
\label{app:dpo}

\paragraph{Setup.}  The reference policy is Qwen2.5-3B-Instruct.  From 1,000 UltraFeedback prompts \citep{cui2024ultrafeedback} (pinned revision), the reference policy generates eight candidates per prompt; the highest- and lowest-scoring candidates under Skywork-Reward-Llama-3.1-8B-v0.2, the optimized proxy $r$, form 968 preference pairs with a nonzero score gap.  OpenAssistant reward-model-deberta-v3-large-v2 \citep{openassistant2023reward} supplies the frozen silver target $J$ and is never used for training.  DPO uses $\beta=0.1$, learning rate $10^{-5}$, effective batch size 16, and LoRA rank, alpha, and dropout of 16, 32, and 0.05; the three seeds share data and pools and differ only in training randomness, with separate training runs of 64, 90, 120, 150, 180, and 200 steps, each giving one checkpoint.  The training, calibration (100 prompts), and evaluation (200 prompts) prompt sets are disjoint.  Calibration and evaluation pools are fixed, hash-verified sets of 32 candidates per prompt from the reference policy (temperature 0.9, top-$p$ 0.95, at most 320 new tokens).  One calibration prompt has zero variance in both proxy and silver scores and is excluded.  For each of the 99 remaining prompts, $(\hat A_x,\hat B_x)$ is computed on its 32 candidates and averaged uniformly, giving $\hat A=0.2268$ and $\hat B=-0.0465$.  The surrogate $\Anu$ is $\sqrt2\,\widehat{\Var}(J)/\widehat{\mathrm{SD}}(r)$ per prompt, averaged ($0.2599$); no rater reliability exists for a deterministic silver target.

\paragraph{Estimand.}  For evaluation prompt $x$, $w_{x,i}\propto\exp\{\log p_{\rm DPO}(y_i\mid x)-\log p_{\rm ref}(y_i\mid x)\}$ is normalized over its 32 candidates, and $d_x=\KL(w_x\Vert\mathrm{Unif}_{32})$.  This KL is defined directly between two discrete distributions and needs no tilt assumption; the tilt enters only through the idealized comparator, which reweights the same pool by proxy score to match $d_x$.  The realized gain is the $w_x$-weighted silver mean minus the uniform-pool mean.  Normalized ratios are not the DPO policy restricted to the pool, and because the candidates came from a temperature and top-$p$ proposal while the ratios use untempered log-probabilities, they are not proposal-corrected importance weights.  The realized gain is therefore a self-normalized diagnostic, not a full-policy or fresh-generation estimate.  Predictions average $\hat A\sqrt{d_x}+\hat Bd_x$ over evaluation prompts, and the error of a checkpoint is the absolute difference between the prompt-averaged prediction and the prompt-averaged target.

\paragraph{Checkpoint gate and radius tail.}  We keep checkpoints whose mean prompt KL is at most 0.10: steps 64, 90, 120, and 150 for seed 0 and steps 64, 90, and 120 for seeds 1 and 2.  A median-KL gate keeps the same ten checkpoints.  The gate uses no silver label, gain, or error, but it is transductive in the reported radius.  Not every prompt is local: across the 2,000 retained prompt--checkpoint cells, the median, 90th, 95th, and 99th percentiles and the maximum of $d_x$ are 0.0257, 0.0919, 0.1224, 0.1867, and 0.4668, and 8.7\% exceed 0.10.  Errors are averaged over the retained checkpoints within each seed, and Table~\ref{tab:dpo} reports the mean and sample SD across the three seed averages.  $A$-only minus $A{+}B$ realized MAE is 0.00187, 0.00177, and 0.00189 for seeds 0, 1, and 2.  At all ten checkpoints, the ordering is realized gain $<$ $A{+}B$ prediction $<$ $A$-only prediction $<$ idealized gain.  Both forecasts therefore underpredict the frontier, and the negative $B$ moves $A{+}B$ toward the realized gain.  DPO changes a conditional distribution per prompt, whereas the bridge averages prompt-specific coefficients into one pair, so it is an empirical conditional-optimization bridge rather than an in-window theorem validation.

\paragraph{Comparators.}  Table~\ref{tab:dpo-comparators} reports 46 predictor variants against both targets, grouped by information setting.  Fourteen have lower realized-gain MAE than $A{+}B$, including a Spearman-scaled $A$-only variant ($0.01378$) and pre-pool flexible calibrators on a within-prompt rank coordinate ($0.01096$ for binned).  For the same fitted $\hat h$, the aggregate projection--direct gap is at most $0.00037$.  Realized-gain MAE mixes predictor error with the optimizer's shortfall, whose mean is $0.02116$, so a lower realized MAE can arise from cancellation; frontier MAE and realized MAE answer different questions, and no variant is universally best.  The four cross-validated pre-pool rows (binned and spline on raw or prompt-rank proxies) depend on the tie order of grouped cross-validation among 100 equal-size prompt groups, and a rerun on different hardware selects different hyperparameters; we report the original outputs.  The other cross-validated binned and spline rows use the same scheme and could shift likewise.

\begin{table}[H]
\centering
\scriptsize
\caption{All DPO comparators on the ten mean-KL-gated checkpoints of Table~\ref{tab:dpo} (mean $\pm$ sample SD over three seed means; lower is better).  The realized gain is a self-normalized likelihood-ratio reweighting of fixed pools, scored by a frozen silver reward model; it is not a fresh-generation, human-rated, or exact full-policy gain.  (a) Pre-pool: calibration prompts and measured $d_x$ only.  (b) Target-pool-aware direct integration: $\hat h$ fitted on calibration prompts and integrated over each evaluation prompt's proxy scores.  (c) Projection of the same $\hat h$; the last column is its projection-minus-direct prediction gap.  (d) Cross-fitted diagnostics that use evaluation silver labels; they are not deployable.  MLP rows are exploratory.}
\label{tab:dpo-comparators}
\begin{adjustbox}{max width=\textwidth}
\begin{tabular}{lccc}
\toprule
 & MAE vs idealized & MAE vs realized & Projection$-$direct \\
Predictor & tilt frontier & likelihood-ratio gain & gap \\
\midrule
\multicolumn{4}{l}{\textit{(a) Pre-pool: calibration prompts and measured $d_x$ only}} \\
$A{+}B$ & 0.00395 $\pm$ 0.00047 & 0.01721 $\pm$ 0.00116 & -- \\
$A$-only & 0.00210 $\pm$ 0.00053 & 0.01905 $\pm$ 0.00120 & -- \\
Spearman $A$-only variant & 0.00738 $\pm$ 0.00045 & 0.01378 $\pm$ 0.00113 & -- \\
$A_\nu$ & 0.00371 $\pm$ 0.00061 & 0.02487 $\pm$ 0.00129 & -- \\
Zero & 0.04197 $\pm$ 0.00017 & 0.02081 $\pm$ 0.00062 & -- \\
Calibration empirical curve & 0.00430 $\pm$ 0.00047 & 0.01686 $\pm$ 0.00116 & -- \\
Raw isotonic (frozen curve) & 0.01878 $\pm$ 0.00080 & 0.03994 $\pm$ 0.00150 & -- \\
Raw binned (frozen curve) & 0.00193 $\pm$ 0.00058 & 0.02309 $\pm$ 0.00126 & -- \\
Raw spline (frozen curve) & \textbf{0.00131} $\pm$ 0.00055 & 0.02247 $\pm$ 0.00124 & -- \\
Raw MLP (frozen curve) & 0.01497 $\pm$ 0.00075 & 0.03612 $\pm$ 0.00144 & -- \\
Prompt-rank isotonic (frozen curve) & 0.00433 $\pm$ 0.00047 & 0.01683 $\pm$ 0.00116 & -- \\
Prompt-rank binned (frozen curve) & 0.01020 $\pm$ 0.00039 & \textbf{0.01096} $\pm$ 0.00107 & -- \\
Prompt-rank spline (frozen curve) & 0.00804 $\pm$ 0.00042 & 0.01312 $\pm$ 0.00111 & -- \\
Prompt-rank MLP (frozen curve) & 0.00577 $\pm$ 0.00045 & 0.01539 $\pm$ 0.00114 & -- \\
\midrule
\multicolumn{4}{l}{\textit{(b) Target-pool-aware: direct integration of $\hat h(r)$}} \\
Raw isotonic & 0.01786 $\pm$ 0.00067 & 0.03902 $\pm$ 0.00138 & -- \\
Raw binned & 0.00237 $\pm$ 0.00055 & 0.02353 $\pm$ 0.00127 & -- \\
Raw spline & 0.00239 $\pm$ 0.00045 & 0.02355 $\pm$ 0.00116 & -- \\
Prompt-centered isotonic & \textbf{0.00059} $\pm$ 0.00039 & 0.02170 $\pm$ 0.00112 & -- \\
Prompt-centered binned & 0.00573 $\pm$ 0.00040 & 0.01543 $\pm$ 0.00108 & -- \\
Prompt-centered spline & 0.00274 $\pm$ 0.00043 & 0.01842 $\pm$ 0.00111 & -- \\
Prompt-$z$ isotonic & 0.00088 $\pm$ 0.00050 & 0.02031 $\pm$ 0.00116 & -- \\
Prompt-$z$ binned & 0.01044 $\pm$ 0.00041 & \textbf{0.01072} $\pm$ 0.00110 & -- \\
Prompt-$z$ spline & 0.00377 $\pm$ 0.00048 & 0.01739 $\pm$ 0.00116 & -- \\
Prompt-rank isotonic & 0.00385 $\pm$ 0.00047 & 0.01731 $\pm$ 0.00116 & -- \\
Prompt-rank binned & 0.00903 $\pm$ 0.00041 & 0.01213 $\pm$ 0.00110 & -- \\
Prompt-rank spline & 0.00624 $\pm$ 0.00044 & 0.01492 $\pm$ 0.00113 & -- \\
\midrule
\multicolumn{4}{l}{\textit{(c) Target-pool-aware: two-term projection of the same $\hat h(r)$}} \\
Raw isotonic & 0.01813 $\pm$ 0.00067 & 0.03929 $\pm$ 0.00138 & 0.00027 $\pm$ 0.00002 \\
Raw binned & 0.00262 $\pm$ 0.00056 & 0.02378 $\pm$ 0.00128 & 0.00025 $\pm$ 0.00001 \\
Raw spline & 0.00257 $\pm$ 0.00044 & 0.02372 $\pm$ 0.00115 & 0.00017 $\pm$ 0.00001 \\
Prompt-centered isotonic & \textbf{0.00063} $\pm$ 0.00045 & 0.02179 $\pm$ 0.00112 & 0.00008 $\pm$ 0.00001 \\
Prompt-centered binned & 0.00559 $\pm$ 0.00041 & 0.01557 $\pm$ 0.00108 & 0.00014 $\pm$ 0.00001 \\
Prompt-centered spline & 0.00264 $\pm$ 0.00044 & 0.01852 $\pm$ 0.00111 & 0.00010 $\pm$ 0.00001 \\
Prompt-$z$ isotonic & 0.00085 $\pm$ 0.00050 & 0.02033 $\pm$ 0.00116 & 0.00002 $\pm$ 0.00000 \\
Prompt-$z$ binned & 0.01007 $\pm$ 0.00043 & \textbf{0.01109} $\pm$ 0.00111 & 0.00037 $\pm$ 0.00003 \\
Prompt-$z$ spline & 0.00370 $\pm$ 0.00048 & 0.01746 $\pm$ 0.00116 & 0.00007 $\pm$ 0.00001 \\
Prompt-rank isotonic & 0.00381 $\pm$ 0.00047 & 0.01735 $\pm$ 0.00116 & 0.00004 $\pm$ 0.00000 \\
Prompt-rank binned & 0.00878 $\pm$ 0.00042 & 0.01238 $\pm$ 0.00111 & 0.00024 $\pm$ 0.00002 \\
Prompt-rank spline & 0.00614 $\pm$ 0.00044 & 0.01502 $\pm$ 0.00113 & 0.00010 $\pm$ 0.00001 \\
\midrule
\multicolumn{4}{l}{\textit{(d) Evaluation-prompt cross-fit diagnostic (uses evaluation silver labels)}} \\
Raw isotonic (cross-fit) & 0.00809 $\pm$ 0.00034 & 0.02924 $\pm$ 0.00102 & -- \\
Raw spline (cross-fit) & 0.00185 $\pm$ 0.00030 & 0.02290 $\pm$ 0.00109 & -- \\
Prompt-centered isotonic (cross-fit) & 0.00514 $\pm$ 0.00050 & 0.02630 $\pm$ 0.00114 & -- \\
Prompt-centered spline (cross-fit) & 0.00347 $\pm$ 0.00048 & 0.02463 $\pm$ 0.00114 & -- \\
Prompt-$z$ isotonic (cross-fit) & 0.00201 $\pm$ 0.00054 & 0.02317 $\pm$ 0.00120 & -- \\
Prompt-$z$ spline (cross-fit) & \textbf{0.00126} $\pm$ 0.00051 & 0.01990 $\pm$ 0.00119 & -- \\
Prompt-rank isotonic (cross-fit) & 0.00197 $\pm$ 0.00050 & 0.01919 $\pm$ 0.00118 & -- \\
Prompt-rank spline (cross-fit) & 0.00383 $\pm$ 0.00048 & \textbf{0.01733} $\pm$ 0.00117 & -- \\
\bottomrule
\end{tabular}
\space
\end{adjustbox}
\end{table}

\FloatBarrier
\subsection{Measurement intervention}
\label{app:measurement-intervention-filled}

The intervention experiment aggregates more raters into the proxy while holding the item set and target notion fixed.  The result is intentionally mixed.  Reliability and $A$ improve almost uniformly, but surrogate headroom depends on both signal and curvature.  The reported headroom statistic is the vertex height of the concave two-term surrogate:
\[
H_*(A,B)=-\frac{A^2}{4B},\qquad A>0,\ B<0.
\]
It is recorded as missing outside this positive-alignment, negative-curvature regime.  The vertex can lie outside the checked KL window, so this is an extrapolated surrogate statistic, not a validated maximum attainable gain or a maximum restricted to $d\le D$.  The headroom-error summary in Table~\ref{tab:scaling-a-vs-nu-filled} uses the same convention.  In SummEval, moving from $k=1$ to $k=2$ raises $A$ in all four groups but makes $B$ more negative and decreases surrogate headroom in all four.  The eligible WMT MQM cluster slice has positive median headroom change, but only 12.8\% of groups have a positive 2.5th percentile across random rater partitions, not a bootstrap confidence bound.

\begin{table}[H]
\centering
\scriptsize
\caption{Measurement-intervention summary.  $H_*$ is the extrapolated surrogate vertex height $-A^2/(4B)$ for $A>0,B<0$, not a checked-window maximum.  $\Delta H$ combines changes in signal and curvature; $q_{.025}$ is its empirical 2.5th percentile over random rater partitions.}
\label{tab:measurement-intervention-filled}
\begin{adjustbox}{max width=\textwidth}
\begin{tabular}{llrrrrrrrrrr}
\toprule
Dataset & Transition & Groups & Pair reps & $\Delta\nu>0$ & $\Delta A>0$ & $\Delta B<0$ & $\Delta H>0$ & $q_{.025}(\Delta H)>0$ & med. $\Delta A$ & med. $\Delta B$ & med. $\Delta H$ \\
\midrule
SummEval & $k=1\to2$ & 4 & 2000 & 100.0\% & 100.0\% & 100.0\% & 0.0\% & 0.0\% & 0.100 & -0.069 & -0.569 \\
SummEval & $k=2\to3$ & 4 & 2000 & 100.0\% & 100.0\% & 100.0\% & 75.0\% & 0.0\% & 0.045 & -0.018 & 0.002 \\
WMT MQM clusters & $k=1\to2$ & 47 & 23500 & 95.7\% & 100.0\% & 72.3\% & 72.3\% & 12.8\% & 0.364 & -0.089 & 0.331 \\
\bottomrule
\end{tabular}
\end{adjustbox}
\end{table}

\begin{figure}[tbp]
\centering
\includegraphics[width=0.84\linewidth]{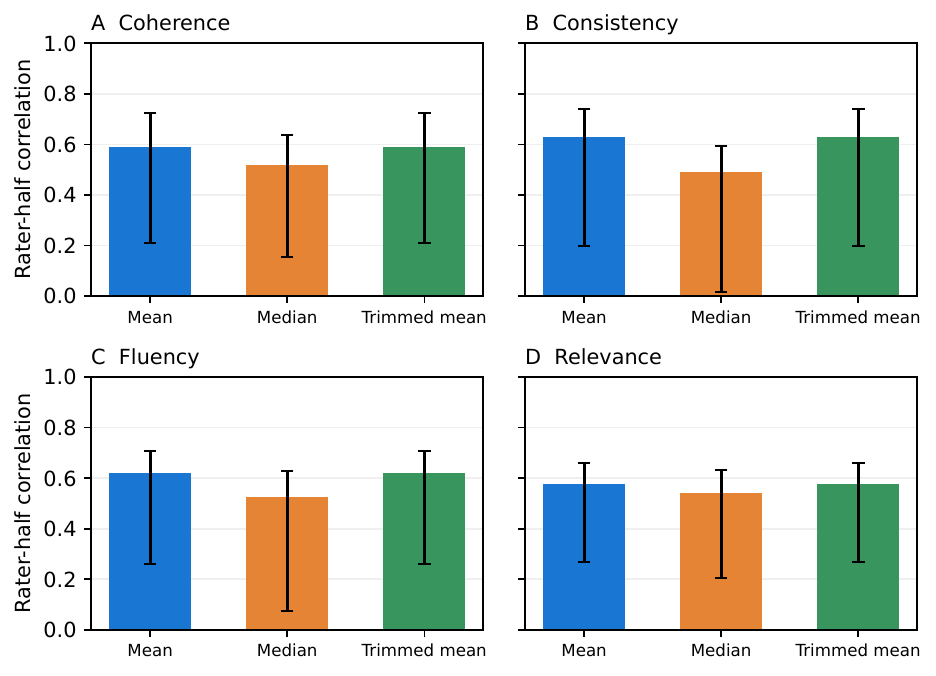}
\caption{Protocol split-half stability.  Panel A reports coherence split-half correlations under mean, median, and trimmed-mean proxy construction.  Panel B gives the same comparison for consistency.  Panel C gives the same comparison for fluency.  Panel D gives the same comparison for relevance.  Error bars summarize random rater-half splits on the same items. The panels describe sensitivity to rater composition, while Table~\ref{tab:measurement-intervention-filled} shows that improving measurement changes both signal and curvature.}
\label{fig:protocol-stability-filled}
\end{figure}

\FloatBarrier
\subsection{Proxy perturbation baselines}
\label{app:proxy-perturbation-filled}

Proxy perturbations isolate whether the exact coefficient law is sensitive to actual proxy--gold alignment.  The main text summarizes this sanity check.  Here Table~\ref{tab:proxy-perturbation-filled} gives the full noise sweep, Figure~\ref{fig:proxy-perturbation-filled} shows the actual-gain bars for the single-slice stress test, and Figure~\ref{fig:proxy-perturbation-multidataset} repeats the audit across eligible slices.  Oracle and original proxies produce positive gains and positive exact $A$.  The random-normal and shuffle controls answer slightly different questions.  A random-normal proxy replaces the original score with independent noise, so both the induced ranking and the score distribution change.  A shuffled proxy preserves the original proxy histogram and variance but destroys itemwise proxy--gold alignment.  In both controls, the realized gain and exact $A$ are close to zero on average, showing that the coefficient law is tracking directional alignment rather than proxy scale alone.  The shuffle result is especially diagnostic because $A_\nu$ stays spuriously high even when the original proxy scale is preserved, confirming that $A_\nu$ is driven by scale and reliability rather than alignment.  Sign-flipping the proxy makes the actual gain negative and exact $A$ negative, whereas $A_\nu$ still predicts a large positive signal.

\begin{table}[H]
\centering
\scriptsize
\caption{Expanded proxy perturbation baselines.  The random-normal control removes both alignment and the original score distribution; the shuffle control preserves the proxy marginal distribution while destroying itemwise alignment.  In both cases $A_\nu$ remains spuriously high.}
\label{tab:proxy-perturbation-filled}
\begin{adjustbox}{max width=\textwidth}
\begin{tabular}{lrrrrrr}
\toprule
Mode & actual $\Delta(0.10)$ & $A_{\rm exact}$ & $A_\nu$ & Spearman & MAE theory & MAE $A_\nu$ \\
\midrule
noisy $\sigma=0.5$ & 0.1893 & 0.678 & 0.855 & 0.500 & \textbf{0.0170} & 0.0828 \\
noisy $\sigma=1.0$ & 0.1503 & 0.537 & 0.676 & 0.403 & \textbf{0.0165} & 0.0646 \\
noisy $\sigma=2.0$ & 0.0974 & 0.338 & 0.428 & 0.258 & \textbf{0.0156} & 0.0392 \\
oracle & 0.3246 & 1.111 & 0.954 & 1.000 & \textbf{0.0133} & 0.0664 \\
original & 0.2159 & 0.754 & 0.955 & 0.606 & \textbf{0.0158} & 0.0878 \\
random normal & 0.0009 & 0.003 & 0.957 & 0.003 & \textbf{0.0158} & 0.3017 \\
shuffle & -0.0000 & -0.001 & 0.959 & -0.001 & \textbf{0.0147} & 0.3033 \\
sign flip & -0.2577 & -0.758 & 0.957 & -0.605 & \textbf{0.0222} & 0.5602 \\
\bottomrule
\end{tabular}
\end{adjustbox}
\end{table}

\begin{figure}[tbp]
\centering
\includegraphics[width=0.80\linewidth]{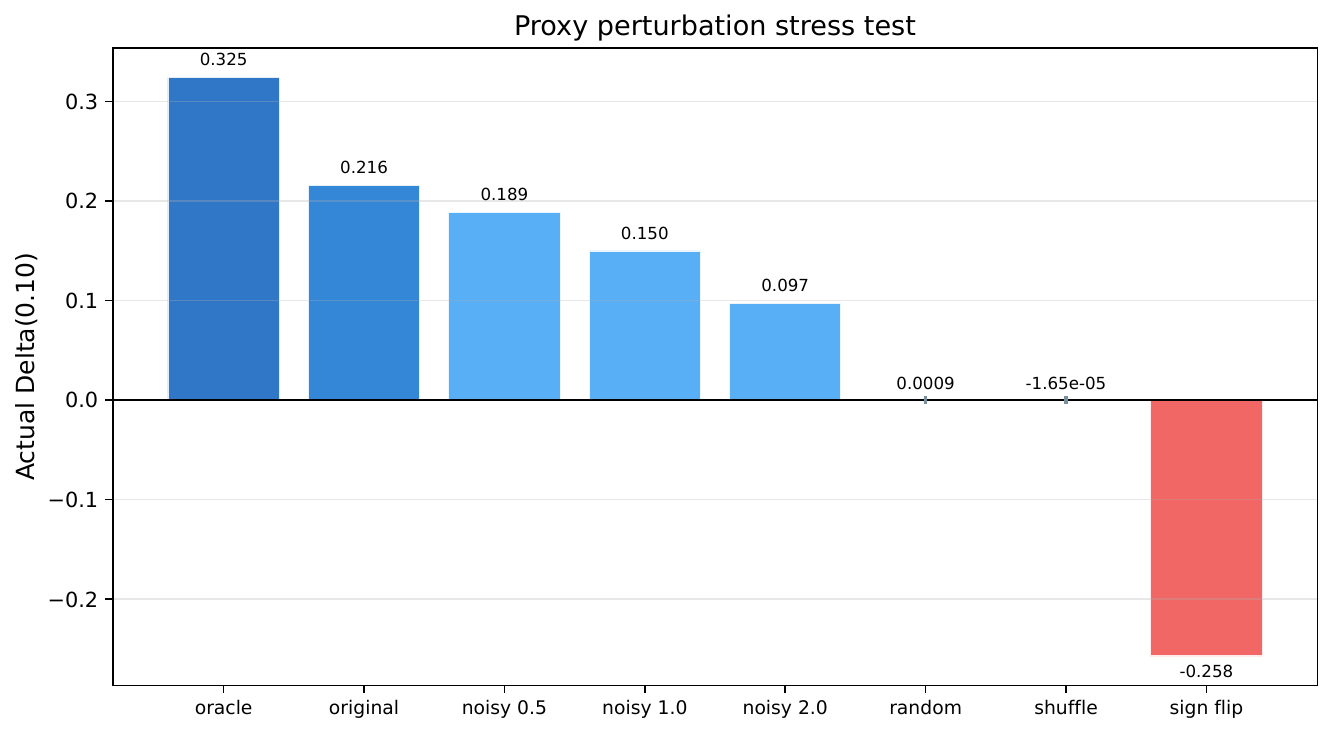}
\caption{Proxy perturbation stress test.  The bars show realized endpoint gains $\Delta(0.10)$ under noisy, oracle, shuffled, random-normal, and sign-flipped proxy variants.  Coefficient values and baseline prediction errors are reported in Table~\ref{tab:proxy-perturbation-filled}; the multi-dataset extension is Figure~\ref{fig:proxy-perturbation-multidataset}.  The shuffle condition shows that the result is not a proxy-scale artifact: preserving the marginal score distribution while destroying itemwise alignment collapses realized gain.}
\label{fig:proxy-perturbation-filled}
\end{figure}

\begin{figure}[tbp]
\centering
\includegraphics[width=\linewidth]{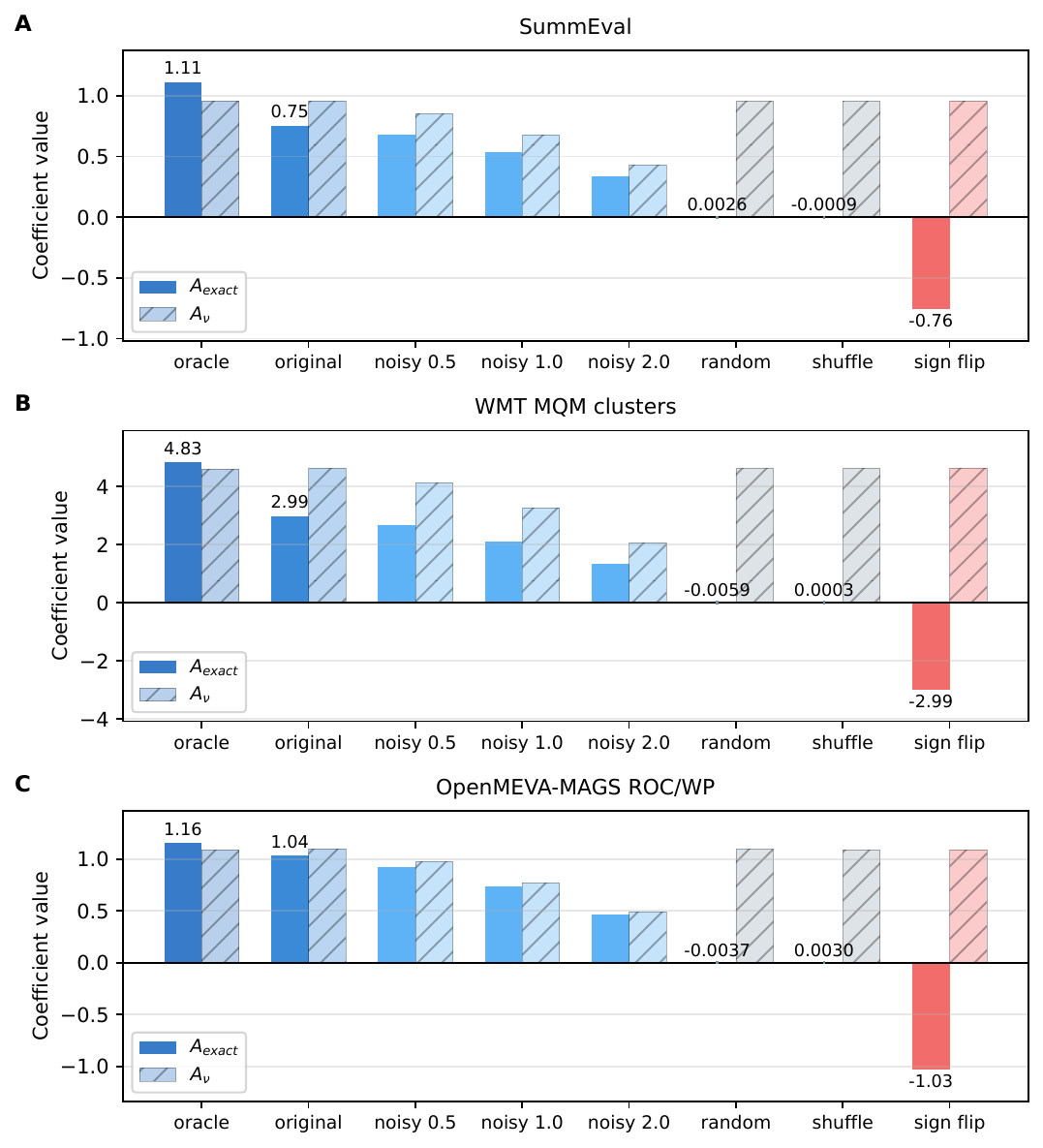}
\caption{Proxy perturbation stress test across eligible analysis slices.  Panel A shows SummEval, Panel B WMT MQM clusters, and Panel C OpenMEVA-MAGS ROC/WP.  In each panel, the exact first-order coefficient $A$ tracks alignment direction across perturbations, while the reliability-only $A_\nu$ baseline fails to distinguish aligned from unaligned proxies.}
\label{fig:proxy-perturbation-multidataset}
\end{figure}

\FloatBarrier
\subsection{Failure-attribution guide}
\label{app:failure-attribution}

A complete report of the diagnostic names the analysis slice, path count, calibration fraction, target construction, proxy source and configuration, planned selection rule, reachable KL radii, tilted variance, RelESS, endpoint and curve-level errors with their uncertainty, and the baselines used for comparison, including $A$-only whenever the interval for $B$ includes zero.  Large prediction error can indicate coefficient noise, calibration--evaluation shift, finite-range breakdown, or proxy--gold rank and tail mismatch.  Table~\ref{tab:failure-attribution-main} maps each pattern to its likely source and a recommended follow-up check (Section~\ref{sec:discussion}).

\begin{table}[H]
\centering
\scriptsize
\caption{Failure-attribution guide.  The same MAE or sign result can have different causes, so each pattern triggers a different follow-up check.}
\label{tab:failure-attribution-main}
\begin{adjustbox}{max width=\textwidth}
\begin{tabular}{lll}
\toprule
Observed pattern & Likely source & Recommended follow-up \\
\midrule
large MAE, unstable $\hat A,\hat B$ & coefficient noise & add calibration ratings or aggregate raters \\
source-only OOD error, target calibration works & calibration--evaluation shift & use source coefficients as a prior \\
accurate MAE, weak sign accuracy & near-zero margin & report magnitude and margin-conditioned sign \\
weak Spearman or top-tail overlap & external-proxy mismatch & audit ranking before optimizing the judge \\
ESS or variance failure & finite-range breakdown & shrink the KL window or collect tail evidence \\
interval for $B$ includes zero & weak or unstable curvature & report $A{+}B$ and $A$-only; add ratings \\
planned rule outside checked window & discrete or strong selection & lower pressure, validate the rule, or abstain \\
\bottomrule
\end{tabular}
\end{adjustbox}
\end{table}

\subsection{Broader impacts}
\label{app:broader-impacts}

A local forecast of human gain can help identify low-gain or misaligned proxies before more aggressive optimization, direct human-evaluation effort to where it changes a decision, and support more conservative deployment choices.  The same diagnostic could also be used to tune stronger optimization against flawed reward models or LLM judges, which could amplify annotator bias, judge-specific artifacts, or unfair group effects, particularly if a forecast is applied outside its checked window or to a population on which it was not calibrated.  These risks motivate the paper's emphasis on repeated human calibration, slice-specific reporting, bounded KL windows, abstention when the evidence is insufficient, and human oversight rather than unconstrained proxy maximization.
\space

\clearpage
\providecommand{\answerYes}[1][]{\textcolor{blue}{[Yes] #1}}
\providecommand{\answerNo}[1][]{\textcolor{orange}{[No] #1}}
\providecommand{\answerNA}[1][]{\textcolor{gray}{[NA] #1}}

\section*{NeurIPS Paper Checklist}

\begin{enumerate}

\item {\bf Claims}
    \item[] Question: Do the main claims made in the abstract and introduction accurately reflect the paper's contributions and scope?
    \item[] Answer: \answerYes{}
    \item[] Justification: The abstract and introduction state the local pre-optimization diagnostic claim and its local, one-sided scope, the repeated-rating calibration setting, the comparison with flexible calibration under matched information, the scope of the GPT-4o external proxy, the LoRA-DPO bridge, and the OOD-transfer result.
    \item[] Guidelines:
    \begin{itemize}
        \item The answer \answerNA{} means that the abstract and introduction do not include the claims made in the paper.
        \item The abstract and/or introduction should clearly state the claims made, including the contributions made in the paper and important assumptions and limitations. A \answerNo{} or \answerNA{} answer to this question will not be perceived well by the reviewers. 
        \item The claims made should match theoretical and experimental results, and reflect how much the results can be expected to generalize to other settings. 
        \item It is fine to include aspirational goals as motivation as long as it is clear that these goals are not attained by the paper. 
    \end{itemize}

\item {\bf Limitations}
    \item[] Question: Does the paper discuss the limitations of the work performed by the authors?
    \item[] Answer: \answerYes{}
    \item[] Justification: Section~\ref{sec:discussion} discusses the locality and one-sidedness of the theory, the checked KL window and how to choose it, the cost of repeated ratings and when to recalibrate, the scope of the judge, reward-model, Best-of-$N$, and DPO results, and differences in reproducibility.  Sections~\ref{sec:exp-prepool}--\ref{sec:exp-bridge} report where flexible calibrators match or beat the coefficient forecast, when the estimate of $B$ is unreliable, and where discrete selection and DPO depart from the theorem.
    \item[] Guidelines:
    \begin{itemize}
        \item The answer \answerNA{} means that the paper has no limitation while the answer \answerNo{} means that the paper has limitations, but those are not discussed in the paper. 
        \item The authors are encouraged to create a separate ``Limitations'' section in their paper.
        \item The paper should point out any strong assumptions and how robust the results are to violations of these assumptions (e.g., independence assumptions, noiseless settings, model well-specification, asymptotic approximations only holding locally). The authors should reflect on how these assumptions might be violated in practice and what the implications would be.
        \item The authors should reflect on the scope of the claims made, e.g., if the approach was only tested on a few datasets or with a few runs. In general, empirical results often depend on implicit assumptions, which should be articulated.
        \item The authors should reflect on the factors that influence the performance of the approach. For example, a facial recognition algorithm may perform poorly when image resolution is low or images are taken in low lighting. Or a speech-to-text system might not be used reliably to provide closed captions for online lectures because it fails to handle technical jargon.
        \item The authors should discuss the computational efficiency of the proposed algorithms and how they scale with dataset size.
        \item If applicable, the authors should discuss possible limitations of their approach to address problems of privacy and fairness.
        \item While the authors might fear that complete honesty about limitations might be used by reviewers as grounds for rejection, a worse outcome might be that reviewers discover limitations that aren't acknowledged in the paper. The authors should use their best judgment and recognize that individual actions in favor of transparency play an important role in developing norms that preserve the integrity of the community. Reviewers will be specifically instructed to not penalize honesty concerning limitations.
    \end{itemize}

\item {\bf Theory assumptions and proofs}
    \item[] Question: For each theoretical result, does the paper provide the full set of assumptions and a complete (and correct) proof?
    \item[] Answer: \answerYes{}
    \item[] Justification: The main text states the local coefficient law and its scope. Appendix~\ref{app:proofs} gives the assumptions, proofs, finite-range extensions, and cross-references for the theoretical results.
    \item[] Guidelines:
    \begin{itemize}
        \item The answer \answerNA{} means that the paper does not include theoretical results. 
        \item All the theorems, formulas, and proofs in the paper should be numbered and cross-referenced.
        \item All assumptions should be clearly stated or referenced in the statement of any theorems.
        \item The proofs can either appear in the main paper or the supplemental material, but if they appear in the supplemental material, the authors are encouraged to provide a short proof sketch to provide intuition. 
        \item Inversely, any informal proof provided in the core of the paper should be complemented by formal proofs provided in appendix or supplemental material.
        \item Theorems and Lemmas that the proof relies upon should be properly referenced. 
    \end{itemize}

    \item {\bf Experimental result reproducibility}
    \item[] Question: Does the paper fully disclose all the information needed to reproduce the main experimental results of the paper to the extent that it affects the main claims and/or conclusions of the paper (regardless of whether the code and data are provided or not)?
    \item[] Answer: \answerYes{}
    \item[] Justification: Sections~\ref{sec:protocol}--\ref{sec:experiments} and Appendix~\ref{app:experiments} describe the tilt construction, proxy--gold paths, calibration/evaluation splits, information regimes, baselines and their tuning, path counts, skipped cases, the DPO setup, and the reported diagnostics.  Appendix~\ref{app:repro-assets} distinguishes three levels of reproducibility: released GPT-4o aggregates reproduce all downstream analyses but not the API calls; scored files and recorded model revisions reproduce the open reward-model and judge analyses; and the DPO inputs, code, and seeds are released, but the LoRA weights were not retained.  Archived training-job logs support the reported stage-level compute accounting.
    \item[] Guidelines:
    \begin{itemize}
        \item The answer \answerNA{} means that the paper does not include experiments.
        \item If the paper includes experiments, a \answerNo{} answer to this question will not be perceived well by the reviewers: Making the paper reproducible is important, regardless of whether the code and data are provided or not.
        \item If the contribution is a dataset and\slash or model, the authors should describe the steps taken to make their results reproducible or verifiable. 
        \item Depending on the contribution, reproducibility can be accomplished in various ways. For example, if the contribution is a novel architecture, describing the architecture fully might suffice, or if the contribution is a specific model and empirical evaluation, it may be necessary to either make it possible for others to replicate the model with the same dataset, or provide access to the model. In general. releasing code and data is often one good way to accomplish this, but reproducibility can also be provided via detailed instructions for how to replicate the results, access to a hosted model (e.g., in the case of a large language model), releasing of a model checkpoint, or other means that are appropriate to the research performed.
        \item While NeurIPS does not require releasing code, the conference does require all submissions to provide some reasonable avenue for reproducibility, which may depend on the nature of the contribution. For example
        \begin{enumerate}
            \item If the contribution is primarily a new algorithm, the paper should make it clear how to reproduce that algorithm.
            \item If the contribution is primarily a new model architecture, the paper should describe the architecture clearly and fully.
            \item If the contribution is a new model (e.g., a large language model), then there should either be a way to access this model for reproducing the results or a way to reproduce the model (e.g., with an open-source dataset or instructions for how to construct the dataset).
            \item We recognize that reproducibility may be tricky in some cases, in which case authors are welcome to describe the particular way they provide for reproducibility. In the case of closed-source models, it may be that access to the model is limited in some way (e.g., to registered users), but it should be possible for other researchers to have some path to reproducing or verifying the results.
        \end{enumerate}
    \end{itemize}

\item {\bf Open access to data and code}
    \item[] Question: Does the paper provide open access to the data and code, with sufficient instructions to faithfully reproduce the main experimental results, as described in supplemental material?
    \item[] Answer: \answerYes{}
    \item[] Justification: Code, prepared inputs, and derived results are released at \url{https://github.com/pacifiic/local-proxy-diagnostics}, with requirements, run scripts, artifact builders, and the cached GPT-4o and open-proxy score files.  This reproduces every reported analysis from prepared inputs.  The exact GPT-4o API score-generation run is not reproducible from the aggregate artifact alone, and the DPO LoRA weights were not retained; Appendix~\ref{app:repro-assets} describes source-data access and these limitations.
    \item[] Guidelines:
    \begin{itemize}
        \item The answer \answerNA{} means that paper does not include experiments requiring code.
        \item Please see the NeurIPS code and data submission guidelines (\url{https://neurips.cc/public/guides/CodeSubmissionPolicy}) for more details.
        \item While we encourage the release of code and data, we understand that this might not be possible, so \answerNo{} is an acceptable answer. Papers cannot be rejected simply for not including code, unless this is central to the contribution (e.g., for a new open-source benchmark).
        \item The instructions should contain the exact command and environment needed to run to reproduce the results. See the NeurIPS code and data submission guidelines (\url{https://neurips.cc/public/guides/CodeSubmissionPolicy}) for more details.
        \item The authors should provide instructions on data access and preparation, including how to access the raw data, preprocessed data, intermediate data, and generated data, etc.
        \item The authors should provide scripts to reproduce all experimental results for the new proposed method and baselines. If only a subset of experiments are reproducible, they should state which ones are omitted from the script and why.
        \item At submission time, to preserve anonymity, the authors should release anonymized versions (if applicable).
        \item Providing as much information as possible in supplemental material (appended to the paper) is recommended, but including URLs to data and code is permitted.
    \end{itemize}

\item {\bf Experimental setting/details}
    \item[] Question: Does the paper specify all the training and test details (e.g., data splits, hyperparameters, how they were chosen, type of optimizer) necessary to understand the results?
    \item[] Answer: \answerYes{}
    \item[] Justification: The paper reports datasets, path counts, repeat counts, skipped cases, calibration fractions, fixed target-heavy transfer weight, KL ranges, baselines and their calibration-internal tuning grids, DPO hyperparameters and checkpoint selection, and evaluation metrics in the main text and Appendix~\ref{app:experiments}.
    \item[] Guidelines:
    \begin{itemize}
        \item The answer \answerNA{} means that the paper does not include experiments.
        \item The experimental setting should be presented in the core of the paper to a level of detail that is necessary to appreciate the results and make sense of them.
        \item The full details can be provided either with the code, in appendix, or as supplemental material.
    \end{itemize}

\item {\bf Experiment statistical significance}
    \item[] Question: Does the paper report error bars suitably and correctly defined or other appropriate information about the statistical significance of the experiments?
    \item[] Answer: \answerYes{}
    \item[] Justification: Table~\ref{tab:prediction-main}a reports path-level bootstrap 95\% intervals, and Figure~\ref{fig:ab-contribution} shows path-cluster bootstrap intervals.  Appendices~\ref{app:flexible} and~\ref{app:b-stability} give paired path-clustered bootstrap intervals for every comparison in Tables~\ref{tab:prediction-main}b and~\ref{tab:target-aware}, and the DPO results report the mean and sample SD across three training seeds.  The captions state that GPT-4o has only four paths, so its intervals are descriptive.
    \item[] Guidelines:
    \begin{itemize}
        \item The answer \answerNA{} means that the paper does not include experiments.
        \item The authors should answer \answerYes{} if the results are accompanied by error bars, confidence intervals, or statistical significance tests, at least for the experiments that support the main claims of the paper.
        \item The factors of variability that the error bars are capturing should be clearly stated (for example, train/test split, initialization, random drawing of some parameter, or overall run with given experimental conditions).
        \item The method for calculating the error bars should be explained (closed form formula, call to a library function, bootstrap, etc.)
        \item The assumptions made should be given (e.g., Normally distributed errors).
        \item It should be clear whether the error bar is the standard deviation or the standard error of the mean.
        \item It is OK to report 1-sigma error bars, but one should state it. The authors should preferably report a 2-sigma error bar than state that they have a 96\% CI, if the hypothesis of Normality of errors is not verified.
        \item For asymmetric distributions, the authors should be careful not to show in tables or figures symmetric error bars that would yield results that are out of range (e.g., negative error rates).
        \item If error bars are reported in tables or plots, the authors should explain in the text how they were calculated and reference the corresponding figures or tables in the text.
    \end{itemize}

\item {\bf Experiments compute resources}
    \item[] Question: For each experiment, does the paper provide sufficient information on the computer resources (type of compute workers, memory, time of execution) needed to reproduce the experiments?
    \item[] Answer: \answerYes{}
    \item[] Justification: Appendix~\ref{app:repro-assets} reports the CPU and memory configuration, eight NVIDIA B200 GPUs, one GPU per worker, CPU reproduction costs, and logged GPU runtimes for candidate generation, DPO training, and open-judge scoring.  The accounting is stage-level rather than a complete per-experiment profile of end-to-end runtime and peak memory.
    \item[] Guidelines:
    \begin{itemize}
        \item The answer \answerNA{} means that the paper does not include experiments.
        \item The paper should indicate the type of compute workers CPU or GPU, internal cluster, or cloud provider, including relevant memory and storage.
        \item The paper should provide the amount of compute required for each of the individual experimental runs as well as estimate the total compute. 
        \item The paper should disclose whether the full research project required more compute than the experiments reported in the paper (e.g., preliminary or failed experiments that didn't make it into the paper). 
    \end{itemize}
    
\item {\bf Code of ethics}
    \item[] Question: Does the research conducted in the paper conform, in every respect, with the NeurIPS Code of Ethics \url{https://neurips.cc/public/EthicsGuidelines}?
    \item[] Answer: \answerYes{}
    \item[] Justification: The paper analyzes existing evaluation datasets and derived diagnostics. It does not introduce a deployed system. It frames external-judge results as a scoped external-proxy evaluation rather than a deployment recommendation.
    \item[] Guidelines:
    \begin{itemize}
        \item The answer \answerNA{} means that the authors have not reviewed the NeurIPS Code of Ethics.
        \item If the authors answer \answerNo, they should explain the special circumstances that require a deviation from the Code of Ethics.
        \item The authors should make sure to preserve anonymity (e.g., if there is a special consideration due to laws or regulations in their jurisdiction).
    \end{itemize}

\item {\bf Broader impacts}
    \item[] Question: Does the paper discuss both potential positive societal impacts and negative societal impacts of the work performed?
    \item[] Answer: \answerYes{}
    \item[] Justification: Appendix~\ref{app:broader-impacts} discusses both positive and negative broader impacts of the work. On the positive side, a local proxy-optimization diagnostic can help identify low-gain or misaligned proxies before more aggressive optimization, target human-evaluation effort more efficiently, and support more conservative deployment decisions. On the negative side, the same diagnostic could be used to tune stronger optimization against flawed reward models or LLM judges, which could amplify annotator bias, judge-specific artifacts, or unfair group effects. These risks motivate the paper's emphasis on repeated human calibration, slice-specific reporting, bounded KL windows, and human oversight rather than unconstrained proxy maximization.
    \item[] Guidelines:
    \begin{itemize}
        \item The answer \answerNA{} means that there is no societal impact of the work performed.
        \item If the authors answer \answerNA{} or \answerNo, they should explain why their work has no societal impact or why the paper does not address societal impact.
        \item Examples of negative societal impacts include potential malicious or unintended uses (e.g., disinformation, generating fake profiles, surveillance), fairness considerations (e.g., deployment of technologies that could make decisions that unfairly impact specific groups), privacy considerations, and security considerations.
        \item The conference expects that many papers will be foundational research and not tied to particular applications, let alone deployments. However, if there is a direct path to any negative applications, the authors should point it out. For example, it is legitimate to point out that an improvement in the quality of generative models could be used to generate Deepfakes for disinformation. On the other hand, it is not needed to point out that a generic algorithm for optimizing neural networks could enable people to train models that generate Deepfakes faster.
        \item The authors should consider possible harms that could arise when the technology is being used as intended and functioning correctly, harms that could arise when the technology is being used as intended but gives incorrect results, and harms following from (intentional or unintentional) misuse of the technology.
        \item If there are negative societal impacts, the authors could also discuss possible mitigation strategies (e.g., gated release of models, providing defenses in addition to attacks, mechanisms for monitoring misuse, mechanisms to monitor how a system learns from feedback over time, improving the efficiency and accessibility of ML).
    \end{itemize}
    
\item {\bf Safeguards}
    \item[] Question: Does the paper describe safeguards that have been put in place for responsible release of data or models that have a high risk for misuse (e.g., pre-trained language models, image generators, or scraped datasets)?
    \item[] Answer: \answerNA{}
    \item[] Justification: The paper does not release a high-risk model, scraped dataset, or deployed system requiring controlled access safeguards.
    \item[] Guidelines:
    \begin{itemize}
        \item The answer \answerNA{} means that the paper poses no such risks.
        \item Released models that have a high risk for misuse or dual-use should be released with necessary safeguards to allow for controlled use of the model, for example by requiring that users adhere to usage guidelines or restrictions to access the model or implementing safety filters. 
        \item Datasets that have been scraped from the Internet could pose safety risks. The authors should describe how they avoided releasing unsafe images.
        \item We recognize that providing effective safeguards is challenging, and many papers do not require this, but we encourage authors to take this into account and make a best faith effort.
    \end{itemize}

\item {\bf Licenses for existing assets}
    \item[] Question: Are the creators or original owners of assets (e.g., code, data, models), used in the paper, properly credited and are the license and terms of use explicitly mentioned and properly respected?
    \item[] Answer: \answerYes{}
    \item[] Justification: Appendix~\ref{app:repro-assets}, Table~\ref{tab:asset-provenance}, lists the source, license or access terms, and redistribution status for each external asset used in the paper. The table includes SummEval (MIT), WMT MQM (Apache-2.0), RewardBench (repository Apache-2.0; dataset card ODC-BY), MT-Bench/FastChat (repository Apache-2.0; \texttt{mt\_bench\_human\_judgments} dataset card CC-BY-4.0), UltraFeedback (MIT), and the model weights used for scoring and training: Skywork-Reward-Llama-3.1-8B-v0.2 (Skywork Community License), ArmoRM-Llama3-8B-v0.1 (Llama~3 license), OpenAssistant reward-model-deberta-v3-large-v2 (MIT), Qwen2.5-32B-Instruct (Apache-2.0), and Qwen2.5-3B-Instruct (Qwen Research License).  Model weights are not redistributed. For assets without a verified public license page for the source used here, such as OpenMEVA-MAGS ROC/WP and the QGEval human files, the paper marks the license/terms as not specified and does not redistribute the raw data.
    \item[] Guidelines:
    \begin{itemize}
        \item The answer \answerNA{} means that the paper does not use existing assets.
        \item The authors should cite the original paper that produced the code package or dataset.
        \item The authors should state which version of the asset is used and, if possible, include a URL.
        \item The name of the license (e.g., CC-BY 4.0) should be included for each asset.
        \item For scraped data from a particular source (e.g., website), the copyright and terms of service of that source should be provided.
        \item If assets are released, the license, copyright information, and terms of use in the package should be provided. For popular datasets, \url{paperswithcode.com/datasets} has curated licenses for some datasets. Their licensing guide can help determine the license of a dataset.
        \item For existing datasets that are re-packaged, both the original license and the license of the derived asset (if it has changed) should be provided.
        \item If this information is not available online, the authors are encouraged to reach out to the asset's creators.
    \end{itemize}

\item {\bf New assets}
    \item[] Question: Are new assets introduced in the paper well documented and is the documentation provided alongside the assets?
    \item[] Answer: \answerYes{}
    \item[] Justification: The public release documents the analysis scripts, generated diagnostics, artifact builders, requirements, run scripts, checksums, and cached proxy scores, under an MIT license for the code.  The cached scores support reproduction of the reported analyses, while exact reruns of GPT-4o score generation would require raw per-call logs beyond the released aggregate file.  Appendix~\ref{app:repro-assets} summarizes these assets.
    \item[] Guidelines:
    \begin{itemize}
        \item The answer \answerNA{} means that the paper does not release new assets.
        \item Researchers should communicate the details of the dataset\slash code\slash model as part of their submissions via structured templates. This includes details about training, license, limitations, etc. 
        \item The paper should discuss whether and how consent was obtained from people whose asset is used.
        \item At submission time, remember to anonymize your assets (if applicable). You can either create an anonymized URL or include an anonymized zip file.
    \end{itemize}

\item {\bf Crowdsourcing and research with human subjects}
    \item[] Question: For crowdsourcing experiments and research with human subjects, does the paper include the full text of instructions given to participants and screenshots, if applicable, as well as details about compensation (if any)? 
    \item[] Answer: \answerNA{}
    \item[] Justification: The paper analyzes pre-existing human annotation datasets and does not conduct new crowdsourcing or new human-subject data collection.
    \item[] Guidelines:
    \begin{itemize}
        \item The answer \answerNA{} means that the paper does not involve crowdsourcing nor research with human subjects.
        \item Including this information in the supplemental material is fine, but if the main contribution of the paper involves human subjects, then as much detail as possible should be included in the main paper. 
        \item According to the NeurIPS Code of Ethics, workers involved in data collection, curation, or other labor should be paid at least the minimum wage in the country of the data collector. 
    \end{itemize}

\item {\bf Institutional review board (IRB) approvals or equivalent for research with human subjects}
    \item[] Question: Does the paper describe potential risks incurred by study participants, whether such risks were disclosed to the subjects, and whether Institutional Review Board (IRB) approvals (or an equivalent approval/review based on the requirements of your country or institution) were obtained?
    \item[] Answer: \answerNA{}
    \item[] Justification: The paper uses secondary analysis of existing annotations and does not introduce new human-subject interaction.
    \item[] Guidelines:
    \begin{itemize}
        \item The answer \answerNA{} means that the paper does not involve crowdsourcing nor research with human subjects.
        \item Depending on the country in which research is conducted, IRB approval (or equivalent) may be required for any human subjects research. If you obtained IRB approval, you should clearly state this in the paper. 
        \item We recognize that the procedures for this may vary significantly between institutions and locations, and we expect authors to adhere to the NeurIPS Code of Ethics and the guidelines for their institution. 
        \item For initial submissions, do not include any information that would break anonymity (if applicable), such as the institution conducting the review.
    \end{itemize}

\item {\bf Declaration of LLM usage}
    \item[] Question: Does the paper describe the usage of LLMs if it is an important, original, or non-standard component of the core methods in this research? Note that if the LLM is used only for writing, editing, or formatting purposes and does \emph{not} impact the core methodology, scientific rigor, or originality of the research, declaration is not required.
    \item[] Answer: \answerYes{}
    \item[] Justification: LLMs are used as proxies under evaluation, not as part of the method: a GPT-4o judge, a Qwen2.5-32B-Instruct judge, LLM-based reward models, and a Qwen2.5-3B-Instruct policy for the DPO bridge.  Section~\ref{sec:experiments} and Appendices~\ref{app:gpt-repro}, \ref{app:open-proxies}, \ref{app:rm-example}, and~\ref{app:dpo} describe the models, revisions, prompts or scoring settings, and the fields that are not recoverable from stored artifacts.
    \item[] Guidelines:
    \begin{itemize}
        \item The answer \answerNA{} means that the core method development in this research does not involve LLMs as any important, original, or non-standard components.
        \item Please refer to our LLM policy in the NeurIPS handbook for what should or should not be described.
    \end{itemize}

\end{enumerate}\space

\end{document}